\newcommand{\set}[1]{\left\{#1\right\}}
\long\def\ignore#1{}
\newcommand{\skipfigure}[1]{\textcolor{green}{The figure is skipped to speed up compilation.}}
\renewcommand{\skipfigure}[1]{#1}
    \newcommand{\appendixA}{Appendix~A in~\cite{KarapetyanPGG16}}
    \newcommand{\appendixB}{Appendix~B in~\cite{KarapetyanPGG16}}
    \newcommand{\appendixC}{Appendix~C in~\cite{KarapetyanPGG16}}
    \newcommand{\appendixD}{Appendix~D in~\cite{KarapetyanPGG16}}
    \newcommand{\appendicesBandC}{Appendices~B and~C in~\cite{KarapetyanPGG16}}
    \newcommand{\appendixA}{Appendix~\ref{ap:encoding-constraints}}
    \newcommand{\appendixB}{Appendix~\ref{ap:estimated-PT}}
    \newcommand{\appendixC}{Appendix~\ref{ap:PT-forced}}
    \newcommand{\appendixD}{Appendix~\ref{ap:100k}}
    \newcommand{\appendicesBandC}{Appendices~\ref{ap:estimated-PT} and~\ref{ap:PT-forced}}
\newtheorem{definition}{Definition}[section]
\newtheorem{theorem}{Theorem}[section]
\newtheorem{proposition}[theorem]{Proposition}
\newcommand{\pattern}{\ensuremath{\mathcal{P}}}
\pgfplotsset{compat=1.10} % AJP: because overleaf said so
\begin{document}

\title{Pattern-Based Approach\\
to the Workflow Satisfiability Problem\\
with User-Independent Constraints\thanks{A preliminary version of some portions of this paper was published in the proceedings of FAW 2015~\cite{KaGaGu}.}}

\author{\name Daniel Karapetyan \email daniel.karapetyan@gmail.com \\
		\addr Institute for Analytics and Data Science, University of Essex, Colchester CO4 3SQ, UK\\
		\addr School of Computer Science, University of Nottingham, Nottingham NG8 1BB, UK\\
		\AND
		\name Andrew J. Parkes \email andrew.parkes@nottingham.ac.uk \\
		\addr School of Computer Science, University of Nottingham, Nottingham NG8 1BB, UK\\
		\AND
		\name Gregory Gutin \email g.gutin@rhul.ac.uk \\
		\addr Department of Computer Science, Royal Holloway, University of London, Egham TW20 0EX, UK\\
		\AND
		\name Andrei Gagarin \email gagarina@cardiff.ac.uk \\
		\addr School of Mathematics, Cardiff University, Cardiff CF10 3AT, UK\\
		\addr Royal Holloway, University of London, Egham TW20 0EX, UK}

\maketitle

\begin{abstract}
     The fixed parameter tractable (FPT) approach is a powerful tool in tackling computationally hard problems.
     In this paper, we link FPT results to classic artificial intelligence (AI) search techniques to show how they complement each other.
     Specifically, we consider the workflow satisfiability problem (WSP) which asks whether there exists an assignment of authorised users to the steps in a workflow specification, subject to certain constraints on the assignment.
     It was shown by Cohen et al.\ (JAIR 2014) that WSP restricted to the class of user-independent (UI) constraints, 
     covering many practical cases, admits FPT algorithms, i.e.\ can be solved in time exponential only in the number of steps $k$ and polynomial in the number of users $n$.
     Since usually $k \ll n$ in WSP, such FPT algorithms are of great practical interest.
     
     We present a new interpretation of the FPT nature of the WSP with UI constraints giving a decomposition of the problem into two levels. 
     Exploiting this two-level split, we develop a new FPT algorithm that is by many orders of magnitude faster than the previous state-of-the-art WSP algorithm and also has only polynomial-space complexity.
     We also introduce new pseudo-Boolean (PB) and Constraint Satisfaction (CSP) formulations of the WSP with UI constraints which efficiently exploit this new decomposition of the problem and raise the novel issue of how to use general-purpose solvers to tackle FPT problems in a fashion that meets FPT efficiency expectations.
     In our computational study, the phase transition (PT) properties of the WSP are investigated for the first time,   
      under a model for generation of random instances.
     We show how PT studies can be extended, in a novel fashion, to support empirical evaluation of scaling of FPT algorithms.

\bigskip

\noindent
\textbf{Keywords:} fixed parameter tractability; workflow satisfiability problem; phase transition; pseudo-boolean formulation; hypergraph list colouring.
\end{abstract}

%%%%%%%%%%%%%%%%%%%%%%%%%%%%%%%%%%%%%%%%%%%%%%%%%%%%%%%%%%%%%%%%%%%%%%%%%%%%%%%%%%%%%%%%%%%%%
\section{Introduction} 
\label{sec:intro}

%============= WSP ==============

%  Combinatorial explosion in algorithms' running times has been an ongoing computational challenge within Artificial Intelligence (AI)\@. 
 The combinatorial explosion, in the running times of algorithms for many important problems, has been an ongoing computational challenge within Artificial Intelligence (AI)\@. 
 AI has developed, and continues to develop, many powerful techniques to address this challenge.
 One of them, originating from theoretical computer science but recently also used in AI (see, e.g.,~\shortcite{DeHaan:2015:IJCAI:FPT-planning,OrdyniakSzeider2013:JAIR:FPT-Bayesian}), is the theory of fixed parameter tractable (FPT) algorithms, which is concerned with the parametrisation of hard problems that reveals that these problems are tractable under certain conditions.
 In this paper, we link together AI and FPT results and apply them to an important access control problem arising in many organisations.

 Specifically, many organisations often have to solve `workflow' problems in which multiple sets of tasks, or \emph{steps}, need to be assigned to workers, or \emph{users}, subject to constraints that are designed to ensure effective, safe and secure processing of the tasks. 
 For example, security might require that some sets of tasks are performed by a small group of workers or maybe just one worker. 
 Alternatively, some sets might need to be performed by at least two users, for example, so as to ensure independent processing or cross-checking of work, etc.~\shortcite{BaBuKa14,CrGuYe13,Roy2015,WaLi10}.
 Furthermore, different users have different capabilities and security permissions, and will generally not be authorised to process all of the steps.  
 In the \emph{Workflow Satisfiability Problem} (WSP), the aim is to assign authorised users to the steps in a workflow specification, subject to constraints arising from business rules and practices. %
%\AG{remove "(Note the term ``workflow'' originally arose from the flow of the steps between users, however, ", then continue "In this security context,..."} \DK{Is this just to shorten the sentence?  I don't see any problems with this sentence.} 
 (Note the term ``workflow'' originally arose from the flow of the steps between users, however, in this context, the time ordering is not relevant -- the challenge is to make a feasible assignment for all the steps.)
 The WSP has important applications and has been extensively studied in the security research community~\shortcite{BaBuKa14,BeFeAt99,Bertolissi2018,Cr05,ValuedWSP-SACMAT,CramptonGKW17,CramptonGW17,SantosRCP17,WaLi10}.

%============= Complexity, UI and FPT ==============

 The WSP is NP-complete, and it has been difficult to solve, even for some moderately-sized instances~\shortcite{JOCO2014,Roy2015}.  
 Work in WSP has attempted to render solving of the WSP practical by finding a subclass of problems that admit \emph{fixed parameter tractable} (FPT) algorithms; informally speaking, this means that there is a small parameter $k$ such that the problem is exponential in $k$ but polynomial in the size of the problem.
 In the case of the WSP, the parameter $k$ is naturally the number of steps -- in real-life instances this number is usually much smaller than the number $n$ of users~\shortcite{WaLi10}. 
  
 It has been shown~\shortcite{CoCrGaGuJo14} that the WSP is FPT if it includes only authorisations and \emph{user-independent} (UI) constraints, i.e.\ constraints whose satisfaction does not depend on specific user identities.
 Then existing methods~\shortcite{JOCO2014} achieve a runtime that is polynomial in $n$, and exponential only in $k$. 
 (Constraints described in the WSP literature are relatively simple, so it was assumed by~\citeA{CoCrGaGuJo14,JOCO2014} that it is possible to test whether a solution satisfies any single constraint in polynomial time. 
 We will use the same assumption in this paper.
 We will also assume that an authorisation test -- whether a user is authorised to a step -- takes constant time, as we store authorisation lists in the form of bitmaps.)

%============= Main directions of our study ==============

 The major contributions of this paper are:
\begin{enumerate}
	\item 
    A new understanding of the FPT nature of the WSP with UI constraints that decomposes the problem into two levels: `upper' level corresponding to UI constraints, and `lower' level corresponding to user assignment and authorisations.

    \item
    An effective backtracking search method, `Pattern Backtracking' (PBT), exploiting the two level decomposition and supported by heuristics and pruning.
    (The PBT implementation presented in this paper is a significant improvement over the algorithm introduced in the preliminary report~\shortcite{KaGaGu} and even more so over the FPT algorithm of~\shortcite{JOCO2014} which unlike PBT has an exponential space complexity.)
	
    \item
    A declarative method that we use explicitly in a pseudo-Boolean~\shortcite{BorosHammer2002:PBO,BePa10} and implicitly in a CSP formulations, which also exploits the two level decomposition of the problem. 
	
    \item 
    Experimental studies of the algorithm performances, focussing on average case complexity, and supported by a new methodology that carefully exploits work in AI on phase transition (PT) phenomena~\shortcite{Bollobas:book,Huberman87:phase,cheeseman91where,mitchell92hard} but in a fashion extended and adapted for the needs of an FPT study.

%    supported by phase transition (PT) phenomena \cite{Bollobas:book,Huberman87:phase,cheeseman91where,mitchell92hard} in a fashion extended and adapted for the needs of an FPT study.  
\end{enumerate}

%============= Patterns ==============

To fully exploit the two level decomposition of the problem, we use special structures called \emph{patterns} as introduced by~\citeA{CoCrGaGuJo14}.
Patterns capture the decisions concerned with UI constraints but generally do not fix user assignments.
In particular, they specify which steps are to be performed by the same user and which steps are to be performed by different users.

%=============== PBT algorithm ================

 The notion of patterns is a convenient tool for handling the decomposition of WSP into two levels: an upper level corresponding to UI constraints, where the decisions can be encoded with a pattern, and a lower level corresponding to user assignments (note that authorisations are intrinsically user dependent and hence cannot be handled with patterns).
 This is used in our two-level algorithm which we call Pattern Backtracking (PBT)\@.
 Its upper level implements a tree search in the space of patterns (thus not fixing user assignments), and the lower level searches for a user assignment restricted by a pattern.
 The space of upper level solutions has size exponential in $k$ (and not depending on $n$), and the lower level can be reduced to a bipartite matching problem, i.e.\ admits a polynomial-time algorithm. 
 Thus, PBT has running time exponential in $k$ only.
  
 We will show that PBT is not only FPT but also has polynomial space usage. 
 Note that, the complexity class FPT does not in itself directly restrict the space usage; the previous algorithms had a space usage that was exponential in $k$, and this restricted their application to (roughly) $k \le 20$.
 Moreover, due to the two-level structure of PBT, together with careful design of pruning methods and branching heuristics, the resulting implementation is many orders of magnitude faster than previous methods, with much improved scaling behaviour. 
 The reachable values of $k$, i.e.\ the number of steps in the workflow, jump from about $k \le 20$ to about $k \le 50$; the reachable number of users is also now extended to being in millions.
 This is a significant improvement for an NP-complete problem, and also can be expected to be sufficient for practical-sized WSP instances.
 
%================= Colouring / PBPB view ===================

 We also provide a new pseudo-Boolean (PB) formulation of the problem: `Pattern Based PB' (PBPB)\@.
 Existing PB or integer programming encodings of the WSP with UI constraints~\cite{JOCO2014,WaLi10} are based on the binary decision variables $x_{su}$, indicating whether step $s$ is assigned to user $u$ (we refer to these encodings as `UDPB' for ``user-dependent PB'' encodings)\@.
 To exploit the user-independent nature of our problem, we also use a set of $M$-variables which are not directly linked to specific users:  
 \begin{equation}
	M_{ij} = 1 \mbox{ iff steps $i$ and $j$ are assigned to the same user, $0$ otherwise.}
 \end{equation}
 %Setting $M_{ij} = 1$ or $M_{ij} = 0$ roughly corresponds to how the patterns are used in PBT\@. 
 The $M$-variables in the PBPB formulation are used to allow an `off-the-shelf' PB solver to better exploit the two-level decomposition of the problem as branching on $M$-variables captures the UI property of the constraints and corresponds to the upper level of the search (in the space of patterns).
 Notice that such variables have also been used extensively in powerful semi-definite programming approaches to graph colouring \shortcite{Lovasz1979:shannon}, and have been referred to as the `colouring matrix', e.g. see~\shortcite{DukanovicRendl2008:SDP-heuristic-GCP}.
 By using a generic PB solver, SAT4J~\shortcite{BePa10}, we show that the performance of the new PBPB formulation is several orders of magnitude better than the previous `UDPB' formulation. 
 Moreover, PBPB can even compete with PBT in terms of scaling behaviour, in some circumstances, though not in terms of constant factors.

We also give a Constraint Satisfaction Problem (CSP) formulation and solve it with the state-of-the-art OR-Tools\footnote{ \url{https://developers.google.com/optimization/} (accessed Jan.~2019)} (in the latest MiniZinc constraint solvers competition, OR-Tools was the clear winner~\shortcite{MiniZinc}).
Even though the formulation and solver do not explicitly use our knowledge of the FPT nature of the problem, the performance of the CSP-based solver is similar to that of PBPB\@.  
 We discuss a possible reason for this similarity.

%========== PT =============

 Regarding the computational complexity, FPT problems and algorithms have been mainly studied from the perspective of worst case analysis, well-known to be over-pessimistic in many cases.
 In terms of the study of the potential practical usages of proposed algorithms, it is vital to study the performance averaged over specific instances. 
 Ideally, it would be useful to have a benchmark suite of real-world instances.  
 However, in the case of WSP, there is not yet any such suite publicly available, and even if it were, then it would probably not be suitable for scaling studies due to the diverse nature of instances in such suites.
 Hence, as commonly accepted in computational studies of WSP~\shortcite{Bertolissi2018,JOCO2014,WaLi10}, we use a generator of artificial instances. 
 
 When studying average case performance, it is vital to have a systematic way to decide on the parameters used in the generation of test instances, and in a fashion that gives the best chance of obtaining a meaningful and reliable insight into the behaviour of algorithms.
 With this motivation, in the latter parts of the paper we study the WSP from the perspective of phase transition (PT) or threshold phenomena.
 It has long been known that complex systems can exhibit threshold phenomena, see e.g.~\shortcite{Bollobas:book,Huberman87:phase}.
 %, FanShen2011:AIJ-PT-in-random-CSP
 They are characterised by a sharp change, or PT, in the properties of problem instances when a parameter in the instance generation is changed.
 An important discovery,  e.g.~\shortcite{cheeseman91where,mitchell92hard,SelmanKirkpatrick1996:AIJ-critical-behavior,mezard2005clustering}, was that such thresholds are also associated with decision problems that are the most challenging for search algorithms: the PT region is the source of the hardest instances of the associated decision problem. 
 In the context of NP-complete problems, this generally means the empirical average time complexity is exponential in $n$ in the PT region, i.e.\ it has a form that matches the worst case expectations, though usually with a reduced coefficient in the exponent.
 Outside of the PT region, the average complexity can drop significantly, resulting in what is usually informally called an ``easy-hard-easy'' transition.

 Testing instances without a study of the associated PT properties has the danger of accidentally picking an easy region and, as a result, obtaining overly optimistic results.
 Since the WSP is a decision problem, a fair and effective testing of the scaling of the algorithms is best done by focussing on the PT region. 
 (We also argue  later that real-world instances are likely to be close to the PT region.)

 The empirical average case analysis of FPT algorithms is more challenging than a typical PT study, since FPT problems have not just one but several size parameters.
 We give a novel empirical-average-case study that has been systematically organised for the case of FPT studies.
 We will show how the instance generator we use leads to hard problems (in the FPT sense), and also exhibits other behaviours expected from a PT.

%%%%%%%%%%%%%%%%%%%%%%%%%%%%%%%%%%%%%%%%%%%%%%%%%%%%%%%%%%%%%%
\subsection{Structure of the paper}

 Section~\ref{sec:background} gives the needed background on the theory of FPT, and the WSP\@.
 Section~\ref{sec:WSP-FPT} introduces the notion of a pattern, central for the algorithms we discuss. 
 Section~\ref{sec:algorithm} describes the new PBT algorithm in detail and demonstrates that it is FPT\@.
Section~\ref{sec:PB-form} provides the new PB formulation PBPB of the WSP, (associated with this, \appendixA{}  gives details of encoding of various UI constraints.), along with the CSP encoding.
 Section~\ref{sec:analysis-of-approaches} provides a descriptive comparison of the workings of the PBT algorithm, the existing algorithm~\shortcite{JOCO2014} which we call here Pattern User-Iterative (PUI), and the PBPB encoding. 

 Section~\ref{sec:testbed} introduces the instance generator that we use for computational experiments. 
 Some more technical aspects related to the PT are given in \appendicesBandC{}.
 Section~\ref{sec:experiments} provides the results of empirical comparisons of the algorithms and their scaling behaviours with reference to the PT, with some more results reported in \appendixD.
 Finally, in Section~\ref{sec:conclusion} we discuss overall conclusions and potential for future work.

%%%%%%%%%%%%%%%%%%%%%%%%%%%%%%%%%%%%%%%%  "PART ONE" %%%%%%%%%%%%%
\section{Background} 
\label{sec:background}

 To make the paper reasonably self-contained, this section provides background and discusses related work on parameterized algorithmics and the WSP. 

%%%%%%%%%%%%%%%%%%%%%%%%%%%%%%%%%%%%%%%%%%%%%%%%%%%%%%%%%%%%%%%%%%%%%%%%
\subsection{Parameterized Algorithms and Complexity} 
\label{subsec:FPT}

A parameterized problem $\Pi$ can be considered as a set of pairs
$(I,k)$ where $I$ is the \emph{problem instance} and $k$ (usually a nonnegative
integer) is the \emph{parameter}.  $\Pi$ is called
\emph{fixed-parameter tractable (FPT)} if membership of $(I,k)$ in
$\Pi$ can be decided by an algorithm of runtime $O(f(k)|I|^c)$, where $|I|$ is the size
of $I$, $f(k)$ is a computable function of the
parameter $k$ only, and $c$ is a constant
independent from $k$ and $I$. Such an algorithm is called an {\em FPT} algorithm.
 The parameterised algorithmics literature also uses the notation ``O$^*$'' to denote a version of big-Oh that suppresses polynomial factors in both $k$ and $|I|$ (in the same fashion that the $\tilde{O}$ notation suppresses logarithmic factors), and so we can write $f(k) |I|^{O(1)}$ as $O^*( f(k) )$. 
 There is a general expectation that a problem admitting an FPT algorithm is ``easy'' as high-order polynomials are not so likely to occur. 
 
 When the decision time is replaced by the much more powerful $|I|^{O(f(k))},$
we obtain the class XP, where each problem is polynomial-time solvable
for any fixed value of $k.$ 
There is a hierarchy of parameterized complexity
classes between FPT and XP (for each integer $t\ge 1$, there is a class W[$t$]):
$$FPT % = W[0] % AJP removed as does not match t > 1
\subseteq W[1] \subseteq W[2] \subseteq \dots \subseteq XP.$$
 For the definition of classes W[$t$], see, e.g.,~\cite{DoFe13}. 
 Due to a number of results obtained on the topic, it is widely believed that FPT$\neq$W[1], i.e.\ no W[1]-hard problem admits an FPT algorithm~\cite{DoFe13}. 

Note that, in general, the WSP parameterized by the number of steps $k$ is W[1]-hard \cite{WaLi10}, but  the WSP with only user-independent constraints is FPT \shortcite{CoCrGaGuJo14}. 
For more information on parameterized algorithms and complexity, see, e.g., \cite{DoFe13}.

%%%%%%%%%%%%%%%%%%%%%%%%%%%%%%%%%%%%%%%
\subsection{The WSP}  
\label{subsec:WSP}

 In the WSP, we are given a set $U$ of $n$ \emph{users}, a set $S$ of $k$ \emph{steps}, a set $\mathcal{A} = \set{A(u) \subseteq S : u \in U}$ of \emph{authorisation lists}, and a set $C$ of \emph{(workflow) constraints}.
 In general, a \emph{constraint} $c \in C$ can be described as a pair $c = (T_c, \Theta_c)$, where $T_c \subseteq S$ is the \emph{scope} of the constraint and $\Theta_c$ is a set of functions from $T_c$ to $U$ which specifies those assignments of steps in $T_c$ to users in $U$ that satisfy the constraint (authorisations are disregarded).

 If $W = (S, U, \mathcal{A}, C)$ is the \emph{workflow} and $T \subseteq S$ is a set of steps, then we say that a function $\pi : T \rightarrow U$ is a \emph{plan}. 
 A plan is called
%\begin{itemize}
%	\item 
    \emph{authorised} if $\pi^{-1}(u) \subseteq A(u)$ for all $u \in U$ (each user is authorised to the steps they are assigned), and
%	\item 
	\emph{eligible} if for all $c \in C$ such that $T_c \subseteq T$, $\pi|_{T_c} \in \Theta_c$ (every constraint with scope contained in $T$ is satisfied).
%\end{itemize}
 A plan that is both authorised and eligible is called a \emph{valid plan}.
 If $T = S$, then the plan is \emph{complete}.
 A workflow $W$ is \emph{satisfiable} if and only if there exists a complete valid plan.

\begin{figure}[htb]
\newcommand{\auth}{+}
\newcommand{\unau}{--}

\centering

\begin{tikzpicture}[
	%>=stealth', 
	%shorten >=1pt, 
	scale=2.5,
	thick,
	vertex/.style={circle, draw},
	edges/.style={ultra thick}]
		
	\node[vertex] (1) at (-1, 0) {$s_1$};
	\node[vertex] (2) at (-1, 1) {$s_2$};
	\node[vertex] (3) at (0, 1) {$s_3$};
	\node[vertex] (4) at (0, 0) {$s_4$};
	\node[vertex] (5) at (1, 1) {$s_5$};
		
	\path[edges, blue] (1) edge node[fill=white]{$=$} (2);
	\path[edges, red] (2) edge node[fill=white]{$\neq$} (3);
	\path[edges, red] (1) edge node[fill=white]{$\neq$} (3);
	\path[edges, red] (3) edge node[fill=white]{$\neq$} (4);
	\path[edges, red] (3) edge node[fill=white]{$\neq$} (5);
	
	\node[anchor=west] at (2, 0.5)
	{
		\begin{tabular}{@{} c @{\quad\quad} ccccc @{}}
        \toprule
		user	& $s_1$ & $s_2$ & $s_3$ & $s_4$ & $s_5$ \\
		\midrule
		$u_1$	& \unau & \unau & \auth & \unau & \unau\\
		$u_2$	& \unau & \auth & \unau & \unau & \unau\\
		$u_3$	& \auth & \auth & \unau & \auth & \auth\\
		$u_4$	& \auth & \unau & \unau & \unau & \unau\\
		$u_5$	& \unau & \unau & \auth & \auth & \auth\\
		$u_6$	& \auth & \unau & \auth & \unau & \unau\\
        \bottomrule
		\end{tabular}
	};
\end{tikzpicture}
\caption{
	An example of a WSP instance with $k = 5$ and $n = 6$.
    On the left, each step is represented with a node, and each constraint with an edge (in general, a constraint is represented with a hyper-edge but in this example all the constraints have scopes of size two).
    Red edges (marked with ``$\neq$'') represent not-equals constraints, and blue edges (marked with ``$=$'') represent equals constraints.
    The table on the right gives authorisations $A(u)$, where `+' means authorised and `$-$' means unauthorised.
    }
\label{fig:wsp-example}
\end{figure}

 Consider an example of a WSP instance with UI constraints in Figure~\ref{fig:wsp-example}.
 This particular instance includes just two types of constraints: ``equals'', defined for a scope of two steps, that requires those two steps to be assigned the same user; and ``not-equals'', also defined for a scope of two steps, that requires those two steps to be assigned different users.
 Then the following are three examples of complete plans for this instance:
\begin{align}
	& \pi(s_1) = u_1,\ 
      \pi(s_2) = u_1,\ 
      \pi(s_3) = u_2,\ 
      \pi(s_4) = u_1,\
      \pi(s_5) = u_1: \text{ eligible, non-authorised;}\\
	& \pi(s_1) = u_3,\ 
      \pi(s_2) = u_2,\ 
      \pi(s_3) = u_1,\ 
      \pi(s_4) = u_3,\
      \pi(s_5) = u_5: \text{ ineligible, authorised;}\\
\label{eq:valid-plat-example}
	& \pi(s_1) = u_3,\ 
      \pi(s_2) = u_3,\ 
      \pi(s_3) = u_6,\ 
      \pi(s_4) = u_3,\
      \pi(s_5) = u_5: \text{ authorised, eligible, i.e.\ valid.}
\end{align}
 The existence of a valid complete plan (\ref{eq:valid-plat-example}) implies that this instance is satisfiable.
 
 Clearly, not every workflow is satisfiable, and hence it is important to be able to determine whether a workflow is satisfiable or not and, if it is satisfiable, to find a valid complete plan.
 Unfortunately, the WSP is NP-hard~\cite{WaLi10}.
 However, since the number $k$ of steps is usually relatively small in practice (usually $k \ll n = |U|$ and we assume, in what follows, that $k < n$), Wang and Li~\citeyear{WaLi10} introduced its parameterisation by $k$ (we use terminology of the recent monograph~\cite{DoFe13} on parameterised algorithms and complexity).
 Algorithms for this parameterised problem were also studied in~\shortcite{CoCrGaGuJo14,JOCO2014,CrGuYe13}.
 While in general the WSP is W[1]-hard~\cite{WaLi10},
 %(this means the WSP, in general, is very unlikely to be FPT as it is widely believed that FPT$\neq$W[1]~\cite{DowneyFellows99}), 
 the WSP restricted to some practically important families of constraints, but still allowing arbitrary authorisations, is FPT~\shortcite{CoCrGaGuJo14,CrGuYe13,Roy2015,WaLi10}.
 
 Many business rules are not concerned with the identities of the users that perform a set of steps.
 Accordingly, we say a constraint $c = (T,\Theta)$ is user-independent (UI) if, whenever $\theta \in \Theta$ and $\phi: U \rightarrow U$ is a permutation, then $\phi \circ \theta \in \Theta$. 
 In other words, given a complete plan $\pi$ that satisfies $c$ and any permutation $\phi: U \rightarrow U$, the plan $\pi' : S \rightarrow U$, where $\pi'(s) = \phi(\pi(s))$, also satisfies $c$.
 The class of UI constraints is general enough in many practical cases; for example, all the constraints defined in the ANSI RBAC standard \shortcite{ansi-rbac04} are UI\@. 
 Most of the constraints studied in \shortcite{JOCO2014,CrGuYe13,WaLi10} and other papers are also UI\@. 
 Classical examples of UI constraints are the requirements that two steps are performed by either two different users (\emph{separation-of-duty}), or the same user (\emph{binding-of-duty}).
 More complex constraints such as at least-$r$ and at-most-$r$ can state that at least/at most $r$ users are required to complete some sensitive set of steps.

\subsection{Connection with Graph Colouring and Constraint Satisfaction} 
\label{sec:WSP-and-colouring}

%================= CSP view ===================

The WSP (with arbitrary constraints) can be seen as a constraint satisfaction problem (CSP), where the unary constraints are called authorisations. 
(In this paper, while WSP constraints are UI, authorisations are arbitrary.)
However, WSP is not a typical CSP: in many applications of CSP, the number of variables is much larger than the number of values (size of the domains), whereas in the WSP viewed as a CSP, usually the number of steps (number of variables) is much smaller than the number of users (size of the domains).
 
 To the best of our knowledge, WSP is the first real-world application of such CSPs, and there are only a few studies that consider relevant cases.
 One of the most closely related ones is~\shortcite{Fellows2011} which discusses the ``all different'' constraints, requiring that all the variables in the scope are assigned different values.  
 From the WSP point of view, it is a special kind of UI constraint.
 Among other results, it is shown in~\shortcite{Fellows2011} that CSP with ``all different'' constraints parametrised by the number of variables admits FPT algorithms.
 However, our study considers a much more general class of constraints. Moreover, it is concerned with practical considerations such as practically efficient algorithms and average case empirical analysis.

 We note here that the WSP with UI constraints can be considered as an extension of the hypergraph list colouring problem, where steps correspond to the hypergraph vertices, users to the colours, and user-step authorisations define colours lists.
 Each constraint $(T_c, \Theta_c)$ then defines a hyperedge connecting vertices $T_c$, but the logic of colouring a hyperedge can be arbitrarily sophisticated as long as it is colour-symmetric. 
 This colour symmetry is exactly the requirement implied by UI constraints, while the general WSP does not restrict the colouring logic at all.

%%%%%%%%%%%%%%%%%%%%%%%%%%%%%%%%%%%%%%%%%%%%%%%%%%%%%
\section{Patterns}  
\label{sec:WSP-FPT} 

 In this section, we discuss the concept of patterns as these capture equivalence classes under the permutation of users and form a vital part of the PBT algorithm presented in the next section.

%%%%%%%%%%%%%%%%%%%%%%%%%%%
\subsection{Equivalence Classes and Patterns}  
\label{sec:patterns-and-ui} 

 We define an \emph{equivalence relation} on the set of all plans.
 We say that two plans $\pi : T \rightarrow U$ and $\pi' : T' \rightarrow U$ are \emph{equivalent}, denoted by $\pi \approx \pi'$, if and only if $T = T'$, and $\pi(s) = \pi(t)$ if and only if $\pi'(s) = \pi'(t)$ for every $s, t \in T$.  
 (This is a special case of an equivalence relation defined in~\shortcite{CoCrGaGuJo14}.)
 To handle equivalence classes of plans, we introduce a notion of pattern.
 Patterns were first introduced by Cohen et al.~\citeyear{CoCrGaGuJo14} but we follow the definition of patterns from~\cite{ValuedWSP-SACMAT}.
 Let a \emph{pattern} $\pattern$ (on $T$) be a partition of $T$ into non-empty sets called \emph{blocks}.
A pattern prescribes groups (blocks) of steps to be assigned to the same user, and also requires that steps from different blocks are assigned to different users.
 In other words, a pattern $\pattern$ requires that $\pi(s) = \pi(t)$ if and only if $s, t \in B$ for some $B \in \pattern$.
 Directly from the definition, if
$B_1 \neq B_2 \in \pattern$, then $\pi(B_1) \neq \pi(B_2)$, where $\pi(B)$ is the user assigned to every step within block $B$.
 
 Patterns also provide a convenient way to test equivalence of plans.
 Let $\pattern(\pi)$ be the pattern describing the equivalence class of the plan $\pi$; it can be computed as $\pattern(\pi) = \{\pi^{-1}(u):\ u \in U,\ \pi^{-1}(u) \neq \emptyset\}$.
 Then $\pi \approx \pi'$ if and only if $\pattern(\pi) = \pattern(\pi')$ \shortcite{CoCrGaGuJo14}, see Figure~\ref{fig:equivalent-plans-example} for an example.

\begin{figure}[htb]
 	\centering
 	\begin{tabular}{ccccc}
    	  $\pi_1$ 
        && 
          $\pi_2$ 
        \\
        \\
          \begin{tabular}{cc}
              \toprule
              Step & User \\
              \midrule
              $s_1$ & $u_3$ \\
              $s_2$ & $u_3$ \\
              $s_3$ & $u_1$ \\
              $s_4$ & $u_5$ \\
              $s_5$ & $u_5$ \\
              \bottomrule
          \end{tabular}
		&
			$\approx$
		&
          \begin{tabular}{cc}
              \toprule
              Step & User \\
              \midrule
              $s_1$ & $u_3$ \\
              $s_2$ & $u_3$ \\
              $s_3$ & $u_6$ \\
              $s_4$ & $u_5$ \\
              $s_5$ & $u_5$ \\
              \bottomrule
          \end{tabular}
        \\
        \\
          $\pattern(\pi_1) = \set{ \set{s_1, s_2},\ \set{s_3}, \set{s_4, s_5} }$
        &&
          $\pattern(\pi_2) = \set{ \set{s_1, s_2},\ \set{s_3}, \set{s_4, s_5} }$
        \\
	\end{tabular}
    \caption{
    	An example of two equivalent plans $\pi_1$ and $\pi_2$, eligible for the instance defined in Figure~\ref{fig:wsp-example}.
    	The patterns $\pattern(\pi_1)$ and $\pattern(\pi_2)$ are equal.
    	Note that $\pi_1$ is authorised while $\pi_2$ is not.
    }
    \label{fig:equivalent-plans-example}
\end{figure}

 In the language of graph list-colouring, a plan is a partial colouring (since a plan is not necessarily complete), a block is a set of vertices which are required to have the same colour, and a pattern is a set of blocks with a requirement that all the blocks are assigned different colours ignoring the concrete assignment of colours.  
 The idea of patterns is somewhat related to the Zykov's method~\shortcite{Dutton1981} (for the graph colouring problem) as they both are designed to effectively exploit the colour symmetry of the problems.
 Although in WSP this ``colour symmetry'' is broken by the authorisation lists, we use a similar approach to satisfy constraints $C$ which are symmetric in WSP with UI constraints.
 
 We say that a pattern $\pattern$ is \emph{eligible} if there exists an eligible plan $\pi$ such that $\pattern(\pi) = \pattern$.
 Similarly, we say that a pattern $\pattern$ is \emph{authorised} if there exists an authorised plan $\pi$ such that $\pattern(\pi) = \pattern$.
 An eligible authorised pattern is called \emph{valid}.
 If a pattern is defined on $T = S$, then it is called \emph{complete}.

%%%%%%%%%%%%%%%%%%%%%%
\subsection{Finding an authorised plan within an equivalence class} 
\label{sec:pattern-validity}

 In this section we will describe an algorithm for finding an authorised plan $\pi$ such that $\pattern(\pi) = \pattern$ for a given pattern $\pattern$ or detecting that such a plan does not exist.

 \begin{definition}
 For a given pattern $\pattern$, an \emph{assignment graph} $G(\pattern)$ is a bipartite graph $(V_1 \cup V_2, E)$, where $V_1 = \pattern$ (i.e.\ each vertex in $V_1$ represents a block in the partition $\pattern$), $V_2 = U$ and $(B, u) \in E$ if and only if $B \in \pattern$, $u \in U$ and $B \subseteq A(u)$.
 \end{definition}
 
 We can now formulate a necessary and sufficient condition for authorisation of a pattern.

%%%%%%%%%%%%%%%%%%%%%%
\begin{proposition}
\label{prop:matching}
A pattern $\pattern$ is authorised if and only if $G(\pattern)$ has a matching covering every vertex in $V_1$.
\end{proposition}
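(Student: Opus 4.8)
The plan is to prove both directions of the biconditional by exhibiting an explicit correspondence between authorised plans realising $\pattern$ and matchings of $G(\pattern)$ that cover $V_1$. Throughout I take a pattern $\pattern$ to be \emph{authorised} precisely when there exists an authorised plan $\pi$ with $\pattern(\pi) = \pattern$, which is the property whose algorithmic test this section is after.

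For the forward direction, I would start from such an authorised plan $\pi$. Because $\pattern(\pi) = \pattern$, every block $B \in \pattern$ is assigned a single user, so I can define $u_B := \pi(s)$ for any $s \in B$, and this is well defined. The defining property of a pattern guarantees that distinct blocks receive distinct users, so the map $B \mapsto u_B$ is injective. Authorisation of $\pi$ means $s \in A(\pi(s))$ for every step, hence $B \subseteq A(u_B)$, which is exactly the edge condition $(B, u_B) \in E$. The set $\{(B, u_B) : B \in \pattern\}$ is therefore a matching covering every vertex of $V_1$.

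For the converse, I would take a matching $M$ covering $V_1$, let $u_B$ denote the partner of $B$ under $M$, and define $\pi(s) = u_B$ for each step $s$ in each block $B$. Since $M$ is a matching, the assignment $B \mapsto u_B$ is injective, so steps in different blocks receive different users while steps in the same block share a user; this is precisely what forces $\pattern(\pi) = \pattern$. Each edge $(B, u_B) \in E$ gives $B \subseteq A(u_B)$, so every step lands on an authorising user and $\pi$ is authorised. Hence $\pattern$ is authorised.

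The argument is essentially a dictionary translation between the two structures, so I expect no deep obstacle; the only point requiring care is verifying that $\pattern(\pi)$ equals $\pattern$ exactly rather than merely refining it. This is where injectivity of the matching on $V_1$ is indispensable: without it, two blocks could collapse onto the same user, yielding a strictly coarser pattern. I would therefore be careful to invoke the ``if and only if'' in the definition of a pattern (same user iff same block) in both directions, so that equality of patterns, and not just one-sided containment, is established.
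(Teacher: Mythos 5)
Your proof is correct. The paper in fact states Proposition~\ref{prop:matching} without any proof, treating it as immediate from the definitions, and your argument is exactly the routine verification being left implicit: the forward direction reads off a matching from an authorised plan realising $\pattern$ (injectivity of $B \mapsto u_B$ coming from distinct blocks receiving distinct users, the edge condition from $\pi^{-1}(u_B) \subseteq A(u_B)$), and the converse builds the plan from a matching covering $V_1$, with the matching's injectivity being precisely what ensures $\pattern(\pi) = \pattern$ exactly rather than a coarsening --- the same construction the paper invokes when it says the proposition ``provides an algorithm for converting a matching $M$ of size $|\pattern|$ in $G(\pattern)$ into a valid plan.''
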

Proof of Proposition~\ref{prop:matching} is straightforward.

Proposition~\ref{prop:matching} implies that, to determine whether an eligible pattern $\pattern$ is valid, it is enough to construct the assignment graph $G(\pattern)$ and find a maximum size matching in $G(\pattern)$.  
 It also provides an algorithm for converting a matching $M$ of size $|\pattern|$ in $G(\pattern)$ into a valid plan $\pi$ such that $\pattern(\pi) = \pattern$.
 An example of how Proposition~\ref{prop:matching} can be applied to obtain a plan from a pattern is shown in Figure~\ref{fig:assignment-graph}.

 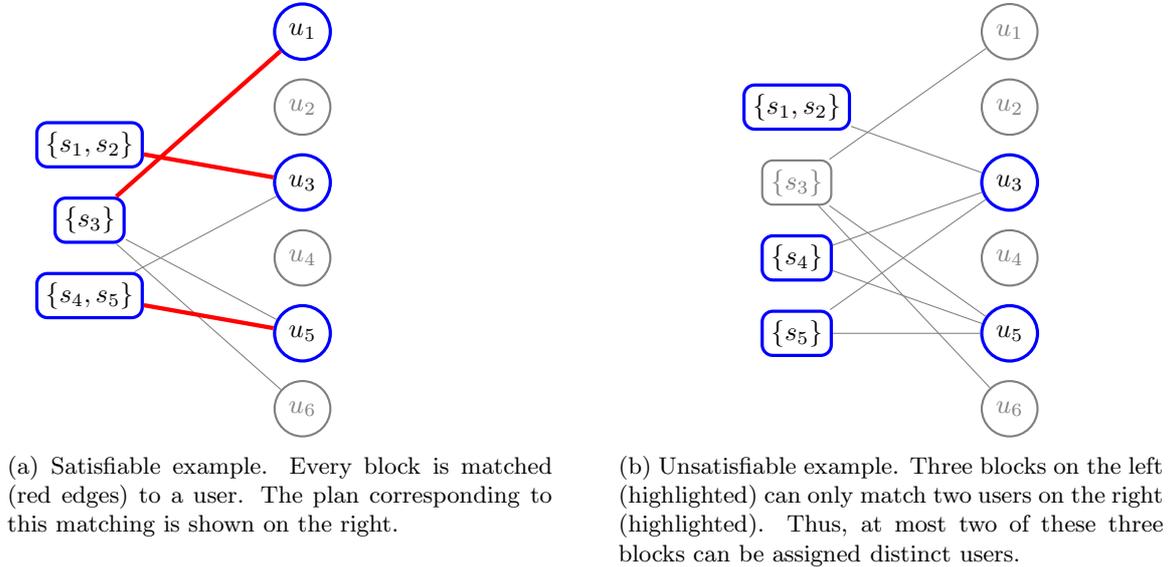
\begin{figure}[tbh]
 	\centering
    \begin{subfigure}[t]{0.45\textwidth}
    \centering
	\begin{tikzpicture}[
    	xscale=0.7,
        yscale=1,
		%>=stealth', 
		%shorten >=1pt, 
		thick,
		main node/.style={rounded corners, draw, very thick, blue, text=black},
		user node/.style={circle, draw, very thick, blue, text=black},
        secondary/.style={thick, gray}]

		\node[main node] (12)  at (0, 4.5) {$\set{s_1, s_2}$};
		\node[main node] (3) at (0, 3.5) {$\set{s_3}$};
		\node[main node] (45) at (0, 2.5) {$\set{s_4, s_5}$};

		\node[user node] (u1) at (4, 6) {$u_1$};
		\node[user node, secondary] (u2) at (4, 5) {$u_2$};
		\node[user node] (u3) at (4, 4) {$u_3$};
		\node[user node, secondary] (u4) at (4, 3) {$u_4$};
		\node[user node] (u5) at (4, 2) {$u_5$};
		\node[user node, secondary] (u6) at (4, 1) {$u_6$};

		\path[gray, thin]
    		(3) edge (u5)
      		(3) edge (u6)
    		(45) edge (u3);
    		
		\path[ultra thick, red]
    		(12) edge (u3)
    		(3) edge (u1)
    		(45) edge (u5);

        %		edge [bend right] node[left] {0.3} (2)
		%		edge [loop above] node {0.1} (1);
        
         \node at (8, 3.5) {
    %     	\begin{tabular}{cc}
    % 		\toprule
    % 			Step & User \\
    % 		\midrule
    % 			$s_1$ & $u_3$ \\
    % 			$s_2$ & $u_3$ \\
    % 			$s_3$ & $u_1$ \\
  	 %			$s_4$ & $u_5$ \\
    % 			$s_5$ & $u_5$ \\  
  		% 	\bottomrule
	   % 	\end{tabular}
         };
	\end{tikzpicture}

    \caption{
    	Satisfiable example.
    	Every block is matched (red edges) to a user.  The plan corresponding to this matching is shown on the right.}
    \label{fig:assignment-graph-sat}
    \end{subfigure}
    \hspace{0.04\textwidth}
    \begin{subfigure}[t]{0.45\textwidth}
    \centering
	\begin{tikzpicture}[
    	xscale=0.7,
        yscale=1,
		%>=stealth', 
		%shorten >=1pt, 
		thick,
		main node/.style={rounded corners, draw, very thick, blue, text=black},
		user node/.style={circle, draw, very thick, blue, text=black},
        secondary/.style={thick, gray}]

		\node[main node] (12)  at (0, 4.5) {$\set{s_1, s_2}$};
		\node[main node, secondary] (3) at (0, 3.5) {$\set{s_3}$};
		\node[main node] (4) at (0, 2.5) {$\set{s_4}$};
		\node[main node] (5) at (0, 1.5) {$\set{s_5}$};

		\node[user node, secondary] (u1) at (4, 5.5) {$u_1$};
		\node[user node, secondary] (u2) at (4, 4.5) {$u_2$};
		\node[user node] (u3) at (4, 3.5) {$u_3$};
		\node[user node, secondary] (u4) at (4, 2.5) {$u_4$};
		\node[user node] (u5) at (4, 1.5) {$u_5$};
		\node[user node, secondary] (u6) at (4, 0.5) {$u_6$};

		\path[gray, thin]
    		(3) edge (u1)
    		(3) edge (u5)
    		(3) edge (u6)
    		
		%\path[ultra thick, blue] %% changed red to blue as the meaning is different
    		(12) edge (u3)
    		(4) edge (u3)
    		(5) edge (u3)
    		(4) edge (u5)
    		(5) edge (u5);

        %		edge [bend right] node[left] {0.3} (2)
		%		edge [loop above] node {0.1} (1);
	\end{tikzpicture}
    \caption{
    	Unsatisfiable example.
        Three blocks on the left (highlighted) can only match two users on the right (highlighted).
        Thus, at most two of these three blocks can be assigned distinct users.}
    \label{fig:assignment-graph-unsat}
\end{subfigure}

    \caption{
    	Two examples of assignment graphs for eligible patterns $\pattern_1 = \set{ \set{1, 2},\ \set{3},\ \set{4, 5}}$ (Figure (a)), and $\pattern_2 = \set{ \set{1, 2},\ \set{3},\ \set{4},\ \set{5}}$ (Figure (b)).
        (The WSP instance is defined in Figure~\ref{fig:wsp-example}.)
        The matching in Figure (a) is of the maximum possible size $|\pattern_1| = 3$, and hence $\pattern_1$ is authorised.
        There exists no matching of size $|\pattern_2| = 4$, and hence $\pattern_2$ is unauthorised, i.e.\ there exists no valid plan within the corresponding equivalence class.
    }
    \label{fig:assignment-graph}
 \end{figure}

 %Depending on the algorithm employed, it takes $O(n^{2.5})$ or $O(n^3)$ time to solve the maximum matching problem, where $n$ is the number of vertices in the graph.
 %However, the assignment graph is usually highly unbalanced since $|\pattern| \le k$ and $|U| \gg k$.
 %As the maximum size of $M$ is at most $|\pattern|$ and the maximum length of an augmenting path\footnote{For a matching $M$ in a graph $G$, a path $P$ is called $M$-\emph{augmenting} if the edges of $P$ alternate between edges in $M$ and edges not in $M$ and, moreover, the first and last edges of $P$ are not in $M$.~\cite{Ahuja1993}.} in $G$ is $O(k)$, the time complexity of the Hungarian and Hopcroft-Karp methods are $O(k^3)$ and $O(k^{2.5})$, respectively~\cite{Ahuja1993}.
 %In other words, finding an authorised plan within an equivalence class is not only polynomial in $n$ but it is actually polynomial in a usually smaller $k$.

%%%%%%%%%%%%%%%%%%%%%%%%%%%%%%%%%%%%%%%%%%%%%%%%%%%%%%%%%%%%%%%%%%%%%%%%%%
\section{The New PBT Algorithm}  
\label{sec:algorithm} 

In this section we first give the core PBT algorithm, and then some improvements, including branching heuristics that significantly improved the performance. 
Cohen et al.~\citeyear{CoCrGaGuJo14} were the first to introduce a general theoretical approach based on patterns to solve WSP\@. 
In their next paper, Cohen et al.~\citeyear{JOCO2014} provided the first refined FPT algorithm and its implementation in the case of WSP with UI constraints; for a short discussion of the algorithm of~\cite{JOCO2014}, see Section~\ref{sec:branching-strategies}.

%%%%%%%%%%%%%%%%%%%%%%%%%%%%%%%%%%%%%%%%%%%%%%%%%%%%%%%%%%%%%%%%%
\subsection{Pattern-Backtracking Algorithm (PBT)} 
\label{sec:pbt}

 We call our new method \emph{Pattern-Backtracking} (PBT) as it uses the backtracking approach to systematically explore the search space of patterns.
 The key idea behind the PBT algorithm is that it is not necessary to search the space of plans; it is sufficient to search the space of patterns checking, for each eligible complete pattern, if it is authorised. 
 (This idea was used in the preliminary version \cite{KaGaGu} of this paper; a similar idea was also exploited by dos Santos et al.~\citeyear{SantosRCP15,SantosRCP17} who however use different algorithmic techniques such as Petri nets and Datalog.)
 
 The algorithm is FPT for parameter $k$, as the size of the space of patterns is a function of $k$ only, and finding a valid plan within an equivalence class (or detecting that the equivalence class does not contain any valid plans) takes time polynomial in $n$. 
 This separation helps to focus on the most important decisions first (as the search for eligible complete patterns is the hardest part of the problem).

\begin{figure}[htb]
\centering
\begin{tikzpicture}[
	tree node/.style={rounded corners, draw, anchor=west, very thick, blue, text=black}
	]
	
	\node[tree node] (root) at (0, 0) {$\big\{ \big\}$};

	\node[tree node] (1n1) at (1.5, 0) {$\big\{ \{ s_1 \} \big\}$};

	\node[tree node] (2n1) at (4, 0.7) {$\big\{ \{ s_1, s_2 \} \big\}$};
	\node[tree node] (2n2) at (4, -0.7) {$\big\{ \{ s_1 \},\ \{ s_2 \} \big\}$};

	\node[tree node] (3n1) at (7.5, 2.2) {$\big\{ \{ s_1, s_2, s_3 \} \big\}$};
	\node[tree node] (3n2) at (7.5, 1.2) {$\big\{ \{ s_1, s_2 \},\ \{ s_3 \} \big\}$};
	\node[tree node] (3n3) at (7.5, 0) {$\big\{ \{ s_1, s_3 \},\ \{ s_2 \} \big\}$};
	\node[tree node] (3n4) at (7.5, -1) {$\big\{ \{ s_1 \},\ \{ s_2, s_3 \} \big\}$};
	\node[tree node] (3n5) at (7.5, -2) {$\big\{ \{ s_1 \},\ \{ s_2 \},\ \{ s_3 \} \big\}$};
	
	\path[red, very thick, ->]
		(root) edge (1n1)
		
		(1n1) edge (2n1)
		(1n1) edge (2n2)
		
		(2n1) edge (3n1)
		(2n1) edge (3n2)
		(2n2) edge (3n3)
		(2n2) edge (3n4)
		(2n2) edge (3n5);
\end{tikzpicture} 
\caption{Illustration of the backtracking mechanism within PBT.}
\label{fig:backtracking}
\end{figure}
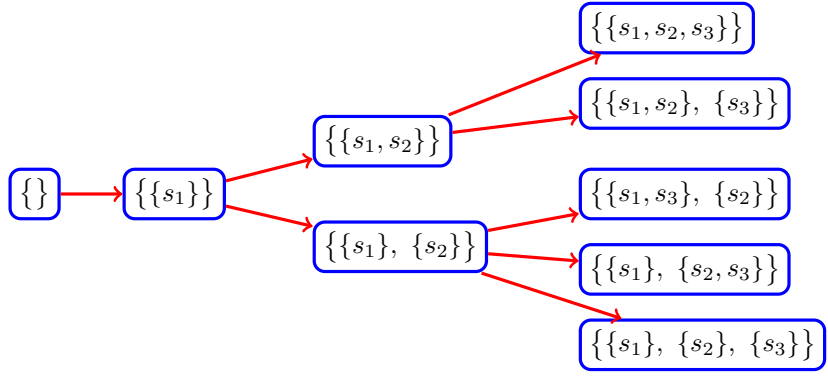
 
 The basic version of PBT is a DPLL-style backtracking algorithm traversing the space of patterns, see Figure~\ref{fig:backtracking}.
 The search starts with an empty pattern $\pattern$, and then, at each forward iteration, the algorithm adds one step to $\pattern$.
 Let $S(\pattern)$ denote the steps included in $\pattern$.
 The branching captures the simple notion that any step $s$ not in $S(\pattern)$ must either be placed in some block of $\pattern$, or else form a new block.  
 These choices are disjoint and so it follows that the search tree is indeed a tree; the same pattern cannot be produced in different ways. 
 Hence the tree can be searched using depth-first search (DFS)\@. 
 This is the key difference between PBT and the previous pattern-based WSP algorithm, PUI\@.
 PUI explores the space of patterns in a breadth-first way, in the worst case building and storing an exponential-in-$k$ number of valid incomplete patterns.
 This leads to the space complexity of PUI also being exponential in $k$.  
 For details see Section~\ref{sec:branching-strategies}.
 
 Pruning in the basic version of PBT is based on constraint violations only.
 Hence, any leaf node at level $k$ in the basic version corresponds to an eligible complete pattern.
 Using Proposition~\ref{prop:matching}, the algorithm verifies the authorisation of the pattern.
 Once an eligible authorised (i.e.\ valid) complete pattern is found, the algorithm terminates returning the corresponding valid complete plan.

 In practice (the source code of our implementation of PBT is publicly available, see~\shortcite{SourceCodes}), we interleave the backtracking with authorisation tests.
 The real PBT uses an improved branch pruning that takes into account both constraints and authorisation.
 In particular, for every node of the search tree the algorithm verifies if the pattern is authorised, and if not, then that branch of the search tree is pruned.

 The calling procedure for the PBT algorithm is shown in Algorithm~\ref{alg:start}, which in turn calls the recursive search function in Algorithm~\ref{alg:recursion}. 
 The recursive function tries all possible extensions $\pattern'$ of the current pattern $\pattern$ with step $s \notin S(\pattern)$.
 The step $s$ is selected heuristically (line~\ref{line:select-s}) using the empirically-tuned function $\rho(s, \pattern)$, which indicates the importance of step $s$ in narrowing down the search space.
 The implementation of $\rho(s, \pattern)$ depends on the specific types of constraints involved in the instance and should reflect the intuition regarding the structure of the problem.
 See function (\ref{eq:rho}) in Section~\ref{sec:branching-heuristic} for a particular implementation of $\rho(s, \pattern)$ for the types of constraints we used in our computational study.

\SetKwInOut{Input}{input}
\SetKwInOut{Output}{output}

%%%%%%%%%%%%%%%%%%
\begin{algorithm}[htb]
	\caption{Backtracking search initialisation (entry procedure of PBT)
    \label{alg:start}}
    
	\Input {WSP instance $W = (S, U, \mathcal{A}, C)$}
	\Output {Valid plan $\pi$ or UNSAT}

	Initialise $\pattern, G, M \gets \emptyset$, 
	$\pi \gets \text{Recursion}(\pattern, G, M)$\;

	\Return{$\pi$} ($\pi$ may be UNSAT here)\;
\end{algorithm}
 
\begin{algorithm}[htb]
	\caption{Recursion$(\pattern, G, M)$ (recursive function for backtracking search)
    \label{alg:recursion}}
 
	\Input {Pattern $\pattern$, authorisation graph $G = G(\pattern)$ and a matching $M$ in $G$ of size $|\pattern|$}

	\Output {Valid plan or UNSAT if no valid plan exists in this branch of the search}
	\If {$S(\pattern) = S$}
	{
		\Return {plan $\pi$ defined by matching $M$}\;
	}
	\Else
	{
		Select an unassigned step $s \in S \setminus S(\pattern)$ that maximises $\rho(s, \pattern)$ (for details see Section~\ref{sec:branching-heuristic})\; \label{line:select-s}
		Compute all the eligible patterns $X$ extending $\pattern$ with step $s$ (for details see Section~\ref{sec:constraints-pruning})\;  \label{line:extend-pattern}
		\ForEach {$\pattern' \in X$}
		{
			Produce an assignment graph $G' = G(\pattern')$ (for details see Section~\ref{sec:authorisations-pruning})\; \label{line:assignment-graph}
			\If {there exists a matching $M'$ of size $|\pattern'|$ in $G'$ \label{line:matching}} 
			{
				$\pi \gets \text{Recursion}(\pattern', G', M')$\;
				\If {$\pi \neq \text{UNSAT}$}
				{
					\Return{$\pi$}\;
				}
			} 
		}
	}

	\Return{UNSAT (for a particular branch of recursion; does not mean that the whole instance is unsat)}\;
\end{algorithm}

%%%%%%%%%%%%%%%%%%%%

 Authorisation tests are implemented in lines~\ref{line:assignment-graph} and~\ref{line:matching} and discussed in more detail in Section~\ref{sec:authorisations-pruning}.
 All the propagation and eligibility tests are implemented in line~\ref{line:extend-pattern} and discussed in Section~\ref{sec:constraints-pruning}.

\subsection{Authorisation-Based Pruning}
\label{sec:authorisations-pruning}

 Although it would be sufficient to check authorisations of complete patterns only, testing authorisations at each node allows us to prune branches if they contain no authorised plans.
 In doing so, it is not necessary to generate the assignment graph from scratch in every node:
 \begin{itemize}
	\item 
    If $\pattern'$ is obtained from $\pattern$ by extending some block $B \in \pattern$, then $G'$ can be obtained by removing all existing edges $(B, u)$, $u \in U$, and adding edges $(B', u)$ such that $B' \subseteq A(u)$, where $B'$ is the extended block.
    One may note that the edge set $(B', u)$, $u \in U$, is a subset of the edge set $(B, u)$, $u \in U$; however, as we will explain below, we do not store the entire set of edges $(B, u)$, $u \in U$, and hence cannot exploit this property.
	
    \item 
    If $\pattern'$ is obtained from $\pattern$ by adding a new block $\set{s}$, then $G'$ can be obtained by adding a new vertex $B = \set{s}$ to $G$ and adding the edge $(B, u)$, for each $u \in U$ such that $s \in A(u)$.
 \end{itemize}
 Similarly, it is possible to recover $G$ from $G'$; hence, we can reuse the same data structure updating it in every node, with updates taking only $O(kn)$ time.
 It is also not necessary to compute the maximum matching $M'$ in $G'$ from scratch.
 We can obtain matching $M'$ in $G'$ from matching $M$ in $G$ in $O(kn)$ time. 
 Indeed, if a new block is added, then a maximum matching of $G'$ can have at most one edge more than $M$. 
 If an existing block in $G$ is extended, we can remove the old edge from $M$ adjacent to the extended block, and then again a maximum matching can have at most one edge more than the updated $M$.
 By Berge's Augmenting Path Theorem, $G'$ has a matching of size $|M|+1$ if and only if $G'$ has an $M$-augmenting path~\cite{West}. 
 Thus, we only need to try to find an $M$-augmenting path in $G'$.
 The augmenting path algorithm requires $O(|V(G')|+|E(G')|)=O(kn)$ time.
 
 In fact, we will never need more than $k \ge |\pattern|$ edges from a vertex $B \in \pattern$ of the assignment graph.
 Hence, when calculating the edge set for a $B \in \pattern$, we can terminate when reaching $k$ edges.
 As a result, we only need $O(k^2)$ time to update/extend the matching $M$, but we still need $O(kn)$ time to update or extend the assignment graph. 

 Note that incremental maintenance of the matching $M$ in every node of the search tree actually improves the worst-case time complexity compared to computation of $M$ in the leaf nodes only, despite incremental maintenance being unable to take advantage of the Hopcroft-Karp method.
 Indeed, in the worst case (when the search tree is of its maximum size), each internal node  of the tree has at least two children.
 Hence, the total number of nodes is at most twice the number of leaf nodes.
 Then the total time spent by PBT on authorisation validations is $O((kn + k^2) p)$, where $p$ is the total number of complete patterns.
 Observe that validation of authorisations of complete patterns only, as in the basic version of PBT, would take $O((kn + k^{2.5}) p)$ time.

 When updating the assignment graph, we have to compute the edge set for a block $B \in \pattern$, and this takes $O(kn)$ time, i.e.\ relatively expensive due to the potentially large $n$.
 Since, during the search, we are likely to compute the edge set for many of the blocks $B$ multiple times, we cache these edge sets for each block $B$.
 The number of blocks generated during the search (bounded by $2^k$) can be prohibitive for caching all the edge sets, and, thus, we erase the cache every time it reaches 16\,384 records (this constant was obtained by parameter tuning). 
 
%%%%%%%%%%%%%%%%%%%%%%%%%%%%%%%%%%%%%%%%%%%%%%%%%%%%%%%%%%%%%%%%%%%%%%%
\subsection{Eligibility-Based Pruning}
\label{sec:constraints-pruning}

 While much of the implementation of PBT is generic enough to handle any UI constraints, some heuristics make use of our knowledge about the types of constraints present in our test instances.
 In particular, we assume that the instances include only not-equals, at-most and at-least constraints, as in \shortcite{Basin2012,JOCO2014,Roy2015,WaLi10}.
 Note that all of the methods discussed in this paper make use of this information -- which is a common practice when developing decision support systems.
 Hence, this makes our experiments more realistic and also helps to fairly compare all the approaches, as the old ones were already specialised.
 
 Since all our constraints are only restricting the number of distinct users to be assigned to the scope, we implemented incremental maintenance of corresponding counters for each at-most and at-least constraint.
 The implementation of line~\ref{line:extend-pattern} scans all the constraints with scopes that include step $s$ and verifies which of the blocks in $\pattern$ can be extended with $s$ without violating the constraint.
 Similarly, the algorithm considers creation of a new block $\{ s \}$.
 
 Note that pruning based on some constraint $c$ does not always require that $T_c \subseteq S(\pattern')$.
 E.g., an at-most-3 constraint with scope $\set{ s_1, s_2, \ldots, s_5 }$ can
%  be used to 
 prune pattern $\set{ \set{s_1}, \set{s_2}, \set{s_3}, \set{s_4} }$.
 Our implementation of PBT produces only the extensions of the pattern that will not immediately break any constraint (disregarding authorisations).

%%%%%%%%%%%%%%%%%%%%%%%%%%%%%%%%%%%%%%%%%%%%%%%%%%%%%%%%%%%%%
\subsection{Branching Heuristic} 
\label{sec:branching-heuristic}

 This section essentially describes line~\ref{line:select-s} of Algorithm~\ref{alg:recursion}.
 As standard in tree-based search, the selection of the branch variable, or step $s$ in PBT, can make a big difference to the size of the search tree. 
 The selection is performed by taking the step with the highest value of a score $\rho(s, \pattern)$.
 The score is designed with the intention to encourage early pruning of the search tree.

 In order to define components of the score function $\rho(s, \pattern)$, we split the constraints $C$ into the not-equals, $C_{\neq}$, the at-most $C_\le$, and at-least $C_\ge$.
 Different constraints have different strengths in terms of the pruning of the search, and so are permitted to be counted with different weights. 
 Specifically:

\begin{equation}
\label{eq:rho}
\rho(s, \pattern) = \alpha_{\neq} \rho_{\neq}(s) + \alpha_{\neq, \le} \rho_{\neq, \le}(s) + \alpha_{\ge, \le} \rho_{\ge, \le}(s) + \alpha^0_\leq \rho^0_\leq(s, \pattern) + \alpha^1_\leq \rho^1_\leq(s, \pattern) + \alpha^2_\leq \rho^2_\leq(s, \pattern),
\end{equation}
where the $\alpha$'s are numeric weights, and 
\begin{itemize}
	\item 
    $\rho_{\neq}(s) = | \set{ c :\; c \in C_{\neq},\ s \in T_c } |$ is the number of not-equals constraints $c$ that cover step $s$.
    The more constraints cover a step, the more important it is in general.
        
	\item 
    $\rho_{\neq, \le}(s) = | \set{ c :\; c \in C_{\neq},\ s \in T_c,\ \exists c' \in C_\le,\ T_c \subseteq T_{c'} } |$ is the count of the not-equals constraints $c$ covering $s$ and also covered by some at-most constraint $c'$.
    Not-equals and at-most constraints are in conflict, and their interaction is likely to further restrict the search.
    
	\item 
    $\rho_{\ge, \le}(s) = | \set{ c \in C_{\le},\ c' \in C_{\ge} :\; |T_{c} \cap T_{c'}| \ge 3,\ s \in T_c \cap T_{c'} } |$ is the number of pairs of constraints (at-least, at-most) covering $s$ with intersections of at least 3 steps.
    Large intersections of at-most and at-least constraints are rare but do significantly reduce the search space.
	    
	\item 
    $\rho^i_\leq(s, \pattern) = | \set{c :\; c \in C_{\le},\ s \in T_c,\ |T_c \cap S(\pattern)| = r - i} |$, where $r$ is the parameter of the at-most-$r$ constraint, is the number of at-most-$r$ constraints such that they can cover at most $i$ new blocks of the pattern.
	For example, $i = 0$ means that $r$ distinct users are already assigned to the scope $T_c$ and, hence, the choice of users for $s$ is limited to those $r$ users.    
\end{itemize}

 We emphasise that these terms might seem `quirky', but they are the result of extensive experimentation with many different ideas, and they represent the best choices found.  
 Space precludes discussion of other possibilities that were tried but not found to be effective in our instances.

 Note that $\rho_{\neq}(s)$, $\rho_{\neq, \le}(s)$, $\rho{\ge, \le}(s)$ do not depend on the state of the search and can be pre-calculated.
 Also $\rho^i_{\le}(s, \pattern)$ can make use of the counters that we maintain to speed-up eligibility tests (see above).

 The values of parameters $\alpha_{\neq}$, $\alpha_{\neq, \le}$, $\alpha_{\ge, \le}$, $\alpha^0_\leq$, $\alpha^1_\leq$ and $\alpha^2_\leq$ were selected empirically using a bespoke automated parameter tuning method.
 We found out that the algorithm is not very sensitive to the values of these parameters, and we settled at $\alpha_{\neq} = 3$, $\alpha_{\neq, \le} = 4$, $\alpha_{\ge, \le} = 2$, 
 $\alpha^0_\leq = 40$, $\alpha^1_\leq = 4$ and $\alpha^2_\leq = 0$.
 
 Note that the function does not account for at-least constraints except for rare cases of large intersection in at-least and at-most constraints.  
 This reflects our empirical observation (also confirmed analytically in \appendixB) that the at-least constraints are usually relatively weak in our instances and rarely help in pruning branches of the search tree. 

 We conducted empirical studies of the branching factor and the depth of the search.
 Our results show significant reduction of both parameters when the branching heuristic was enabled, greatly reducing the size of the search trees.
 For example, at $k = 40$, with the branching heuristic disabled, the overall number of nodes in the search tree is about $5.7 \cdot 10^{10}$, of which around 90\% were at the depths 23--28.
 The average branching factor at the depth 23 reaches its maximum of 5.55.
 With the branching heuristic enabled, the overall number of nodes is about $1.0 \cdot 10^6$, of which 90\% are at the depths 16--27.
 The average branching factor reaches its maximum of 2.56 at the depth of 7.
 
 \subsection{Worst-Case Analysis of PBT} 
\label{sec:worst-case-analysis}

 Recall that, in the worst case, the total number of patterns considered by the PBT algorithm is less than twice the number of complete patterns. 
 Observe that the number of complete patterns equals the number of partitions of a set of size $k$, i.e.\ the $k$'th Bell number $\mathcal{B}_k$ which is $O(2^{k \log_2 k})$. 
 Finally, observe that the PBT algorithm spends time $O(k^2 + kn)$ on each node of the search tree, assuming that checking of all relevant constraints takes $O(k^2)$ time.
 Thus, the time complexity of the PBT algorithm is $O(\mathcal{B}_k \cdot (k^2 + kn))=O^*(2^{k\log_2 k})$.
 In fact, the running time $O^*(2^{k\log_2 k})$ is likely to be optimal. 
 Indeed, \citeA{GW2016} proved that unless the Strong Exponential Time Hypothesis~\shortcite{ImPa01} fails, which is generally considered quite unlikely, there is no $O^*(c^{k\log_2 k})$-time algorithm for WSP with UI constraints with any $c<2$.

 The PBT algorithm follows the depth-first search order and, hence, stores only one pattern at a time.  
 It also maintains a subgraph of the assignment graph with only $O(k^2)$ edges. 
 (See Section~\ref{sec:authorisations-pruning} where the upper bound $k$ on the degrees of blocks in the matching graph is explained.)
 Hence, the space complexity of the algorithm is $O(k^2)$, i.e.\ smaller than the problem itself ($O(kn)$).
 Note that small space complexity is very good for reducing cache misses.

%%%%%%%%%%%%%%%%%%%%%%%%%%%%%%%%%%%%%%%%%%%%%%%%%%%%%%%%%%%%%%%%%%%
%%%%%%%%%%% PBPB formulation %%%%%%%%%%%%%%%%%%%%%%%%%%%%%%%%%%%%%%
%%%%%%%%%%%%%%%%%%%%%%%%%%%%%%%%%%%%%%%%%%%%%%%%%%%%%%%%%%%%%%%%%%%
\section{Pseudo-Boolean and Constraint Satisfaction Formulations}  
\label{sec:PB-form}

 To make the paper self-contained, we provide in Section~\ref{sec:old-formulation} the old pseudo-Boolean formulation~\cite{JOCO2014}, and in Section~\ref{sec:new-formulation} we show how it can be extended to exploit the FPT nature of the problem.
 We also give a CSP formulation of the problem in Section~\ref{sec:CSP-formulation}.
 
%%%%%%%%%%%%%%%%%%%%%%%%%%%%%%%%%%%%%%%%%%%%%%%%%%%%%%%%%%%%%%%%%%%
\subsection{Old Pseudo-Boolean Formulation (UDPB)}   
\label{sec:old-formulation}
 
 The main decision variables in the old formulation are $x_{s, u}$, $s \in S$, $u \in U$, where user $u$ is assigned to step $s$ if and only if $x_{s, u} = 1$.
 Because $x_{s, u}$ directly assigns a step to a particular user, we call this formulation \emph{User Driven PB} (UDPB).
 We also introduce auxiliary variables $y_{c, u}$ and $z_{c, u}$ for at-least and at-most constraints $c$, respectively.
 Variables $y_{c, u}$ and $z_{c, u}$ are used to bound the number of distinct users assigned to the constraints' scopes.% (recall, see Section~\ref{sec:constraints-pruning}, that our test instances include not-equals, at-least and at-most constraints).
 
 The old formulation is presented below in~(\ref{eq:old-one-user})--(\ref{eq:old-define-x}).
 We use notation $A^{-1}(s) = \{ u \in U :\; s \in A(u) \}$ for the set of users authorised for step $s \in S$.

\begin{align}
\label{eq:old-one-user}
& \sum_{u \in U} x_{s, u} = 1 
	&& \forall s \in S, \\
\label{eq:old-authorisations}
& x_{s, u} = 0 
	&& \forall s \in S \text{ and } \forall u \in U \setminus A^{-1}(s), \\
\label{eq:old-not-equals}
& x_{s_1, u} + x_{s_2, u} \le 1 
	&& \forall \text{ not-equals constraint with scope $\set{s_1, s_2}$} \text{ and } \forall u \in U,\\
\label{eq:old-y}
& y_{c, u} \ge x_{s, u} 
	&& \forall \text{ at-most-$r$ constraints $c$ with scope $T_c$},\ \forall s \in T_c \text{ and } \forall u \in U,\\
\label{eq:old-at-most}
& \sum_{u \in U} y_{c, u} \le r 
	&& \forall \text{ at-most-$r$ constraint $c$},\\
\label{eq:old-z}
& z_{c, u} \le \sum_{s \in T_c} x_{s, u} 
	&& \forall \text{ at-least-$r$ constraint $c$ with scope $T_c$}, \text{ and } \forall u \in U,\\
\label{eq:old-at-least}
& \sum_{u \in U} z_{c, u} \ge r 
	&& \forall \text{ at-least-$r$ constraint $c$}\\
& y_{c, u} \in \{ 0, 1 \} 
	&& \forall \text{ at-most-$r$ constraint $c$ and } \forall u \in U,\\
& z_{c, u} \in \{ 0, 1 \} 
	&& \forall \text{ at-least-$r$ constraint $c$ and } \forall u \in U,\\
\label{eq:old-define-x}
& x_{s, u} \in \{ 0, 1 \} 
	&& \forall s \in S \text{ and } \forall u \in U.
\end{align} 

 Constraints~(\ref{eq:old-one-user}) guarantee that exactly one user is assigned to each step.
 Constraints~(\ref{eq:old-authorisations}) implement authorisations.
 Constraints~(\ref{eq:old-not-equals})--(\ref{eq:old-at-least}) define WSP constraints (recall that our test instances include only not-equals, at-least and at-most UI constraints, although the UDPB formulation can obviously encode any computable WSP constraints, whether UI or not).

%%%%%%%%%%%%%%%%%%%%%%%%%%%%%%%%%%%%%%%%%%%%%%%%%%%%%%%%%%%%%%%%%%%
\subsection{New Pseudo-Boolean Formulation (PBPB)}
\label{sec:new-formulation}

 A contribution of this paper is a new pseudo-Boolean formulation (\ref{eq:mx-symmetric})--(\ref{eq:mx-x-define}) exploiting the FPT nature of the problem.
 This formulation, which we call \emph{Pattern Based PB} (PBPB), was inspired by formulations of the graph colouring problem \cite{DukanovicRendl2008:SDP-heuristic-GCP,Dutton1981,Lovasz1979:shannon}. 
 %\footnote{The matrix $M$ can be thought of as a `Merge matrix' as it controls whether or not two steps are effectively merged by being required to have the same user.}
 In particular, steps $s_1$ and $s_2$ are assigned the same user if and only if $M_{s_1, s_2} = 1$ (we assume $M_{s,s} = 1$ for every $s \in S$).
 Such variables are not concerned with the identity of users and, thus, are more effective when handling UI constraints.
 This is the same idea as behind colour matrix in~\cite{DukanovicRendl2008:SDP-heuristic-GCP} which preserves the colour symmetry and encapsulates only the decisions that matter at the upper level of the search.
 However it extends such usage in two ways. Firstly, WSP with UI constraints has a richer set of constraints, defined on ``hyperedges''. 
 Secondly, the matrix $M$ is tightly integrated with the non-UI authorisations.
 Thus, we still use the $x$ variables with the same meaning as in the UDPB formulation but complement them with the new variables $M$.

 The formulation (\ref{eq:mx-symmetric})--(\ref{eq:mx-x-define}) is given for only the specific constraints (namely, at-most-3 and at-least-3 constraints with scope of size 5, and not-equals constraints); for formulations of other constraints, including general UI constraints, see \appendixA{}.

\begin{align}
\label{eq:mx-symmetric}
& M_{s_1, s_2} = M_{s_2, s_1} 
	&& \forall s_1 \neq s_2 \in S, \\
\label{eq:mx-diagonal}
& M_{s, s} = 1
	&& \forall s \in S, \\
\label{eq:mx-link-m1}
& M_{s_1, s_2} \ge M_{s_1, s_3} + M_{s_2, s_3} - 1
	&& \forall s_1 \neq s_2 \neq s_3 \in S, \\
\label{eq:mx-link-m2}
& M_{s_1, s_2} \le M_{s_2, s_3} - M_{s_1, s_3} + 1
	&& \forall s_1 \neq s_2 \neq s_3 \in S, \\
\label{eq:mx-one-user}
& \sum_{u \in U} x_{s, u} = 1 
	&& \forall s \in S, \\
\label{eq:mx-link1}
& x_{s_1, u} - x_{s_2, u} \le 1 - M_{s_1, s_2} 
	&& \forall s_1 \neq s_2 \in S \text{ and } \forall u \in U, \\
\label{eq:mx-link2}
& x_{s_1, u} + x_{s_2, u} \le 1 + M_{s_1, s_2} 
	&& \forall s_1 \neq s_2 \in S \text{ and } \forall u \in U, \\
\label{eq:mx-authorisations}
& x_{s, u} = 0
	&& \forall s \in S \text{ and } \forall u \in U \setminus A^{-1}(s), \\
\label{eq:mx-not-equals}
& M_{s_1, s_2} = 0 
	&& \forall \text{ not-equals constraint with scope $\set{s_1, s_2}$}, \\
\label{eq:mx-at-most}
& \sum_{s_1 < s_2 \in T} M_{s_1, s_2} \ge 2
	&& \forall \text{ at-most-3 constraint with scope $T$}, |T| = 5, \\
\label{eq:mx-at-least}
& \sum_{s_1 < s_2 \in T} M_{s_1, s_2} \le 3 
	&& \forall \text{ at-least-3 constraint with scope $T$, } |T| = 5, \\
& M_{s_1, s_2} \in \{ 0, 1 \} 
	&& \forall s_1, s_2 \in S,\\
\label{eq:mx-x-define}
& x_{s, u} \in \{ 0, 1 \} 
	&& \forall s \in S \text{ and } \forall u \in U.
\end{align}
  
 %As explained above, we still need the $x_{s, u}$ variables to define authorisations, see (\ref{eq:mx-authorisations}).
 The $x_{s, u}$ variables, used to define authorisations in (\ref{eq:mx-authorisations}), need to be linked to the $M_{s_1, s_2}$ variables.
 In particular, if $M_{s_1, s_2} = 1$ then we require that $x_{s_1, u} = x_{s_2, u}$ for every $u \in U$, see (\ref{eq:mx-link1}), and if $M_{s_1, s_2} = 0$ then $x_{s_1, u} + x_{s_2, u} \le 1$, i.e.\ at least one of $x_{s_1, u}$ and $x_{s_2, u}$ has to take value 0, see (\ref{eq:mx-link2}).
 To improve propagation, we formulate optional (transitive closure) constraints (\ref{eq:mx-link-m1}) and (\ref{eq:mx-link-m2}).
 These constraints are entailed by the link between the $M$ and $x$ variables in (\ref{eq:mx-one-user})--(\ref{eq:mx-link2}), but adding them increases the propagation avoiding the cost of extra reasoning involving the $x$ variables.

 Constraints (\ref{eq:mx-not-equals}) encode not-equals, (\ref{eq:mx-at-least}) encode at-least-3 and (\ref{eq:mx-at-most}) encode at-most-3 (these are the constraints present in our instances; for details see Section~\ref{sec:instance-generator}).
 It is useful that (\ref{eq:mx-not-equals})--(\ref{eq:mx-at-least}) involve only the $M$ variables; together with  (\ref{eq:mx-symmetric})--(\ref{eq:mx-link-m2}) they guarantee that a solution corresponds to an eligible pattern. 
 Hence, (\ref{eq:mx-not-equals})--(\ref{eq:mx-at-least}) correspond to the upper level of the search, i.e.\ the search over the space of patterns.

 It is easy to observe that any constraint that is expressed only in terms of the $M$'s is automatically UI, as it does not involve the $x$ variables (users), and so cannot change with permutations of them.
 The following proposition states that the converse also applies.
\begin{proposition} 
\label{prop:all-UI-are-M}
 On solving an instance of the WSP, the decision variables $M$ are sufficient to encode any UI constraint.
\end{proposition}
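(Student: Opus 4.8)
The plan is to show that any UI constraint $c = (T_c, \Theta_c)$ can be rewritten as a constraint that refers only to the $M$-variables, never to the $x$-variables. The key observation is that the $M$-variables encode exactly the pattern $\pattern(\pi)$ induced by a plan $\pi$: by definition $M_{s_1,s_2} = 1$ iff $\pi(s_1) = \pi(s_2)$, so a complete assignment of the $M$-variables (consistent with the symmetry and transitivity constraints (\ref{eq:mx-symmetric})--(\ref{eq:mx-link-m2})) determines a partition of $S$ into blocks, i.e.\ a pattern. Conversely every pattern on $T_c$ corresponds to exactly one valid setting of the $M_{s_1,s_2}$ for $s_1, s_2 \in T_c$. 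Thus the $M$-variables are in bijection with patterns on the relevant scope.

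First I would recall, from Section~\ref{sec:patterns-and-ui}, that two plans $\pi, \pi'$ with $\pi \approx \pi'$ satisfy $\pattern(\pi) = \pattern(\pi')$, and that $\pi \approx \pi'$ holds precisely when one is obtained from the other by relabelling users. The definition of a UI constraint says $\theta \in \Theta_c$ implies $\phi \circ \theta \in \Theta_c$ for every permutation $\phi$ of $U$; equivalently, membership of $\theta$ in $\Theta_c$ depends only on the equivalence class of $\theta$ under $\approx$, that is, only on $\pattern(\theta)$. Hence there is a well-defined set $\mathcal{Q}_c$ of \emph{permissible patterns} on $T_c$ such that $\theta \in \Theta_c$ if and only if $\pattern(\theta|_{T_c}) \in \mathcal{Q}_c$.

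The second step is to translate membership in $\mathcal{Q}_c$ into a predicate on the $M$-variables. Since a pattern on $T_c$ is uniquely determined by the values $\set{M_{s_1,s_2} : s_1, s_2 \in T_c}$, each permissible pattern $Q \in \mathcal{Q}_c$ corresponds to one specific $0/1$ assignment of these $M$-entries; I would write the constraint as the disjunction ``the $M$-entries on $T_c$ match one of the patterns in $\mathcal{Q}_c$.'' Because a disjunction of fixed $0/1$ assignments of boolean variables is itself a boolean (hence pseudo-boolean) predicate, this can be encoded using only $M$-variables — for instance by excluding, via a clause for each forbidden pattern, every assignment whose induced pattern lies outside $\mathcal{Q}_c$. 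This predicate makes no reference to any $x_{s,u}$, establishing the claim. Conjunction over all $c \in C$ then encodes the full UI constraint set.

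The main obstacle is purely a matter of being careful about the bijection between patterns and consistent $M$-assignments: one must verify that the transitivity constraints (\ref{eq:mx-link-m1})--(\ref{eq:mx-link-m2}) are exactly what is needed to guarantee that an arbitrary symmetric $0/1$ matrix with unit diagonal actually corresponds to a genuine partition (equivalence relation), so that the encoded predicate ranges over real patterns and not spurious matrices. Once that correspondence is in place, the argument is essentially definitional, since user-independence is \emph{defined} as invariance under the relabelling that the pattern quotient removes.
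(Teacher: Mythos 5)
Your proposal is correct and follows essentially the same route as the paper's proof: both reduce a UI constraint to the set of patterns on $T_c$ it permits (using that user-independence means membership in $\Theta_c$ depends only on the $\approx$-equivalence class), identify patterns with consistent $0/1$ assignments to the $M$-variables on the scope, and encode the constraint by excluding each forbidden pattern with one pseudo-boolean clause, exactly as in the paper's inequality (\ref{eq:UI-as-PB}). Your added remark about verifying that symmetry plus transitivity makes $M$-assignments correspond to genuine partitions is a sound piece of care, handled in the paper by including constraints (\ref{eq:mx-symmetric}) and (\ref{eq:mx-diagonal}) in the encoding and by the link to the $x$-variables.
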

\begin{proof}
 By definition, any WSP constraint $c = (T_c, \Theta_c)$ can be defined by the set $\Theta_c$ of all the plans $\pi : T_c \rightarrow U$ that are eligible for $C = \set{c}$.
 Moreover, if a constraint $c$ is UI then $\pi \in \Theta_c$ implies that $\pi' \in \Theta_c$ for every $\pi' \approx \pi$.
 Then it follows that a UI constraint can be described by listing equivalence classes of plans or, equivalently, patterns on $T_c$.

 Recall that a pattern can be uniquely described with the $M$ variables; in particular, a pattern $\pattern$ can be described as $M_{s', s''} = 1$ for every $s', s'' \in B$, $B \in \pattern$, and $M_{s', s''} = 0$ for every $s' \in B'$, $s'' \in B''$ and $B' \neq B'' \in \pattern$.
 Then it is easy to exclude a pattern via a linear inequality expressed in variables $M$.
 
 Let $\overline{\mathit{PAT}}$ be the list of all patterns on $T_c$ disobeying a UI constraint $c = (T_c, \Theta_c)$. 
 Then, in general, we can encode a UI constraint $c$ with constraints (\ref{eq:mx-symmetric}), (\ref{eq:mx-diagonal}) and
\begin{equation}
	\sum_{B \in \pattern} \sum_{s' < s'' \in B} (1 - M_{s', s''}) 
    	+ \sum_{B' \neq B'' \in \pattern} \sum_{s' \in B'} \sum_{s'' \in B'', s' < s''} M_{s', s''} 
    \ge 1 
    \qquad 
    \forall \pattern \in \overline{\mathit{PAT}} \,.
    \label{eq:UI-as-PB}
\end{equation}
%\qed
\end{proof}
%The proof of Proposition~\ref{prop:all-UI-are-M} is given in \ref{ap:proofs}.
 To see how Proposition~\ref{prop:all-UI-are-M} works, consider the following example.
 To require that $\pattern \neq \set{ \set{s_1, s_2}, \set{s_3} }$ (which is a UI constraint), or, equally,
 $$
	M \neq
	\left(
		\begin{array}{ccc}
			1 & 1 & 0 \\
			1 & 1 & 0 \\
			0 & 0 & 1 
		\end{array}
	\right)
 $$
 one can write
$$
	(1 - M_{12}) + M_{13} + M_{23} \ge 1 \,.
$$
 Note that to encode a UI constraint in this way, one may need numerous constraints to prohibit multiple patterns.
 %This might be impractical if $|\overline{\mathit{PAT}}|$ is large. \AG{What is $\overline{\mathit{PAT}}$ here? It's not defined anymore. Just simplify-combine this phrase with the previous.}
  In \appendixA{} we give more compact approaches to formulate some standard UI constraints and also discuss why our encodings (\ref{eq:mx-at-most}) and (\ref{eq:mx-at-least}) are correct.
  
% \clearpage

\subsection{CSP Formulation}
\label{sec:CSP-formulation}

 As noted above, the WSP with UI constraints is effectively a Constraint Satisfaction Problem (CSP)\@.
 There is a variable $x_s$ for each step $s \in S$, with the domain $A^{-1}(s)$.
 Note that any CSP constraint that is not sensitive to specific values, but only there equality or not corresponds to a UI constraint, and vice versa.
 
 The not-equals constraint with scope $T = \{ i, j \}$ is encoded as
\begin{equation}
\label{eq:csp-not-equals}
x_i \neq x_j
\end{equation}

 In order to encode an at-least-3 constraint with scope $T$ of size 5, we request that there exists at least one subset of steps in $T$ of cardinality 3 with all different values.
 Let $\mathcal{X}$ be the set of all subsets of $T$ of cardinality 3.
 Let $\mathcal{X}_r \subset T$ be the $r$th element of $\mathcal{X}$.
 Then the constraint can be encoded using auxiliary Boolean variables $v_r$ as follows:
\begin{align}
& \bigvee_{r=1}^{|\mathcal{X}|} v_r && \\
\label{eq:csp-at-least}
& v_r \implies  x_i \neq x_j && r = 1, 2, \ldots, |\mathcal{X}|,\quad i < j \in \mathcal{X}_r \\
& v_r \in \{0, 1\} && r = 1, 2, \ldots, |\mathcal{X}|.
\end{align}

 In order to encode an at-most-3 constraint with scope $T$ of size 5, we request that in every subset $Y \subset T$ of cardinality 4, there exists at least one pair of steps $i \neq j \in Y$ such that $x_i = x_j$.
 (Indeed, an at-least-3 out of 5 constraint with scope $T$ is falsified if and only if any four steps in $T$ are assigned all different values.)
 Let $\mathcal{Y}$ be the set of all subsets of $T$ of cardinality 4.
 Then the constraint can be encoded as follows:
\begin{align}
\label{eq:csp-at-most}
& \bigvee_{i \neq j \in Y} x_i = x_j && \forall Y \in \mathcal{Y} 
\end{align}

%%%%%%%%%%%%%%%%%%%%%%%%%%%%%%%%%%%%%%%%%%%%%%%%%%%%%%%%%%%%%%%%%%%
%%%%%%%%%% Analysis %%%%%%%%%%%%%%%%%%%%%%%%%%%%%%%%%%%%%%%%%%%%%%%
%%%%%%%%%%%%%%%%%%%%%%%%%%%%%%%%%%%%%%%%%%%%%%%%%%%%%%%%%%%%%%%%%%%
\section{Analysis of WSP Solution Approaches} 
\label{sec:analysis-of-approaches} 
 
 In this section we analyse and compare the existing and new WSP solution approaches.
 In Section~\ref{sec:branching-strategies}, we discuss different branching strategies and how they are linked to performance of WSP algorithms.
 Section~\ref{sec:newPB-properties} discusses properties of PBPB with respect to both the upper level search for eligible patterns and of the assignment problems that arise at the lower level. 
 (For asymptotic worst-case analysis of PBT and PUI please see Section~\ref{sec:worst-case-analysis})\@. 

%%%%%%%%%%%%%%%%%%%%%%%%%%%%%%%%%%%%%%%%%%%%%%%%%%%%%%%%%%%%%%%%%%%
\subsection{Branching Strategies} 
\label{sec:branching-strategies}

 In this section, we use the language of patterns and of the PBPB encoding in order to discuss possible branching strategies in a WSP algorithm. 
 The first key observation is that any pattern can be described with $M_{ij}$ variables.
 The matrix of $M$ variables corresponding to a complete pattern is exactly a permutation of block-ones-diagonal matrix, where a block in the matrix corresponds to a block of the pattern.
 A pattern as used within PBT is then a set of blocks (see Figure~\ref{fig:open-pattern}) with the requirement that the steps in different blocks are assigned different users.%
 (It is important to note that in such figures, purely for illustration, we are assuming that the rows/columns are permuted to reveal any block-diagonal structure; there is no implication that the steps are processed in a fixed order.)
 We will say that such an (incomplete) pattern is `open' as the relation between the steps in the pattern and those not in the pattern is left as undetermined; for a step not in the pattern, the values of $M$ are not yet fixed.
 The openness of the pattern corresponds to the open nature of the assignments within PBT\@.
 %PBT considers steps one at a time, and its branching corresponds to picking a block in the pattern which to extend (or creating a new block).
 Figure~\ref{fig:matrix-pbt} illustrates the branching within PBT in terms of the options for extending the value assignments to the $M$ matrix.

\newcommand{\cellfont}[0]{\small\bfseries}

\tikzset{
	left step label/.style={anchor=east, font={\small}, inner sep=2pt},
	top step label/.style={anchor=south, font={\small}, inner sep=2pt},
	cell base/.style={minimum width=1cm, minimum height=1cm, draw=none, transform shape},
	blk1/.style={cell base, fill=green!50!white, label={[font=\cellfont]center:1}},
	blk2/.style={cell base, fill=red!50!white, label={[font=\cellfont]center:1}},
	zero/.style={cell base, fill=gray!30!white, label={[font=\cellfont]center:0}},
	none/.style={cell base, fill=white, label={[font=\cellfont]center:?}},
	diag/.style={cell base, fill=white, label={[font=\cellfont]center:1}},
	matrix grid style/.style={gray, ultra thin, opacity=0.5},
	auth/.style={cell base, fill=white},
	unau/.style={cell base, pattern color=gray!50!white, pattern=north east lines},
}

\newcommand{\decisionmatrix}[1]{
	\xdef\y{0}
	\foreach \row in #1
	{
		\xdef\x{0}
		\foreach \cell in \row
		{
			%\draw[\cell] (\x, \y) rectangle (\x + 1, \y - 1);
			\node[\cell] at (\x + 0.5, \y + 0.5) {};
			\pgfmathparse{\x+1}
			\xdef\x{\pgfmathresult}
		}
		\pgfmathparse{\y+1}
		\xdef\y{\pgfmathresult}
	}

	\draw[xstep=1.0,ystep=-1,matrix grid style] (0,\y) grid (\x,0);
}

\begin{figure}[htb]
\begin{subfigure}{0.45\textwidth}
	\centering
	\begin{tikzpicture}[scale=0.5, y=-1cm]
		\decisionmatrix{{%
			{blk1, blk1, zero, zero, zero, none, none, none},%
			{blk1, blk1, zero, zero, zero, none, none, none},%
			{zero, zero, blk2, blk2, blk2, none, none, none},%
			{zero, zero, blk2, blk2, blk2, none, none, none},%
			{zero, zero, blk2, blk2, blk2, none, none, none},%
			{none, none, none, none, none, diag, none, none},%
			{none, none, none, none, none, none, diag, none},%
			{none, none, none, none, none, none, none, diag}%
		}};
		
%		\begin{scope}[xshift=-14cm]
%		\decisionmatrix{{%
%			{none, none, unau, unau, none, unau, none, unau, none, none, unau, unau},%
%			{none, unau, none, unau, none, none, unau, unau, none, unau, none, unau},%
%			{none, none, none, unau, none, unau, unau, none, unau, none, unau, unau},%
%			{unau, none, unau, none, unau, none, none, none, none, unau, unau, none},%
%			{none, none, unau, none, none, none, unau, none, unau, unau, none, none},%
%			{unau, none, none, unau, none, none, unau, none, unau, none, unau, none},%
%			{none, unau, unau, unau, unau, none, unau, unau, none, unau, none, none},%
%			{unau, none, unau, unau, none, unau, none, none, unau, none, unau, none}%
%		}};		

%		\foreach \i in {1,...,8}
%		{
%			\node[left step label] at (0, \i - 0.5) {$s_{\i}$};
%		}

%		\foreach \i in {1,3,...,12}
%		{
%			\node[top step label] at (\i - 0.5, 0) {$u_{\i}$};
%		}
%		\end{scope}

		\foreach \i in {1,...,8}
		{
			\node[left step label] at (0, \i - 0.5) {$s_{\i}$};
			\node[top step label] at (\i - 0.5, 0) {$s_{\i}$};
		}
	\end{tikzpicture}
	\caption{
    	Open pattern.
        The pattern has two blocks, each of which can be extended with new steps.
        New steps can also be assigned to a new block.
        %None of the $x$ variables is fixed; however, PBT guarantees that there exists at least one feasible assignment of $x$ for any pattern it produces.
    }
    \label{fig:open-pattern}
\end{subfigure}
\hfill
\begin{subfigure}{0.45\textwidth}
	\centering
	\begin{tikzpicture}[scale=0.5, y=-1cm]
		\decisionmatrix{{%
			{blk1, blk1, zero, zero, zero, zero, zero, zero},%
			{blk1, blk1, zero, zero, zero, zero, zero, zero},%
			{zero, zero, blk2, blk2, blk2, zero, zero, zero},%
			{zero, zero, blk2, blk2, blk2, zero, zero, zero},%
			{zero, zero, blk2, blk2, blk2, zero, zero, zero},%
			{zero, zero, zero, zero, zero, diag, none, none},%
			{zero, zero, zero, zero, zero, none, diag, none},%
			{zero, zero, zero, zero, zero, none, none, diag}%
		}};

%		\begin{scope}[xshift=-14cm]
%		\decisionmatrix{{%
%			{blk1, zero, unau, unau, zero, unau, zero, unau, zero, zero, unau, unau},%
%			{blk1, unau, zero, unau, zero, zero, unau, unau, zero, unau, zero, unau},%
%			{zero, blk2, zero, unau, zero, unau, unau, zero, unau, zero, unau, unau},%
%			{unau, blk2, unau, zero, unau, zero, zero, zero, zero, unau, unau, zero},%
%			{zero, blk2, unau, zero, zero, zero, unau, zero, unau, unau, zero, zero},%
%			{unau, zero, none, unau, none, none, unau, none, unau, none, unau, none},%
%			{zero, unau, unau, unau, unau, none, unau, unau, none, unau, none, none},%
%			{unau, zero, unau, unau, none, unau, none, none, unau, none, unau, none}%
%		}};		

%		\foreach \i in {1,...,8}
%		{
%			\node[left step label] at (0, \i - 0.5) {$s_{\i}$};
%		}

%		\foreach \i in {1,3,...,12}
%		{
%			\node[top step label] at (\i - 0.5, 0) {$u_{\i}$};
%		}
%		\end{scope}

		\foreach \i in {1,...,8}
		{
			\node[left step label] at (0, \i - 0.5) {$s_{\i}$};
			\node[top step label] at (\i - 0.5, 0) {$s_{\i}$};
		}
	\end{tikzpicture}
	\caption{
    	Closed pattern.
        Both blocks are closed, i.e.\ the algorithm cannot add any other steps to these two blocks; it can only create new blocks.
        %Two users are already assigned to the two blocks, and this assignment is final.
        %No other steps can be assigned to these users.
    }
    \label{fig:closed-pattern}
\end{subfigure}

\caption{
	Open vs.\ closed pattern in terms of $M$ matrix.
	%In each figure, to the left is the $x$ matrix; to the right is the $M$ matrix.
	Full $M$ matrix is shown for clarity although it is symmetric by definition.
    %Hatch shading in the $x$ matrix shows unauthorised steps.
    Question marks show undecided variables.
    Grey cells (with zeros) are variables fixed at 0; green and red cells (with ones) are variables fixed at 1.
    }
	\label{fig:open-closed-patterns}
\end{figure}

\tikzset{
	left step lable/.style={anchor=east, font={\tiny}},
	top step lable/.style={anchor=south, font={\tiny}},
	cell base/.style={minimum width=1cm, minimum height=1cm, draw=none, transform shape},
	blk1/.style={cell base, fill=green!50!white},
	blk2/.style={cell base, fill=red!50!white},
	blk3/.style={cell base, fill=blue!50!white},
	zero/.style={cell base, fill=gray!30!white},
	none/.style={cell base, fill=white},
	diag/.style={cell base, fill=white},
	matrix grid style/.style={gray, ultra thin, opacity=0.5},
	decision variables/.style={black, solid, draw}
}

\begin{figure}[htb]
\centering
\begin{tikzpicture}[y=-1cm]
	\newcommand{\decisionsframe}{\path[decision variables] (0,5) -- (5,5) -- (5,0) -- (6,0) -- (6,6) -- (0,6) -- cycle;};
	\node (P) at (0, -3.5) {
		\begin{tikzpicture}[scale=0.3]
		\decisionmatrix{{%
			{blk1, blk1, zero, zero, zero, none, none, none},%
			{blk1, blk1, zero, zero, zero, none, none, none},%
			{zero, zero, blk2, blk2, blk2, none, none, none},%
			{zero, zero, blk2, blk2, blk2, none, none, none},%
			{zero, zero, blk2, blk2, blk2, none, none, none},%
			{none, none, none, none, none, diag, none, none},%
			{none, none, none, none, none, none, diag, none},%
			{none, none, none, none, none, none, none, diag}%
		}};		
		\end{tikzpicture}
		};

	\node (C1) at (-3, 0) {
		\begin{tikzpicture}[scale=0.3]
		\decisionmatrix{{%
			{blk1, blk1, zero, zero, zero, blk1, none, none},%
			{blk1, blk1, zero, zero, zero, blk1, none, none},%
			{zero, zero, blk2, blk2, blk2, zero, none, none},%
			{zero, zero, blk2, blk2, blk2, zero, none, none},%
			{zero, zero, blk2, blk2, blk2, zero, none, none},%
			{blk1, blk1, zero, zero, zero, blk1, none, none},%
			{none, none, none, none, none, none, diag, none},%
			{none, none, none, none, none, none, none, diag}%
		}};
		\decisionsframe
		\end{tikzpicture}
		};

	\node (C2) at (0, 0) {
		\begin{tikzpicture}[scale=0.3]
		\decisionmatrix{{%
			{blk1, blk1, zero, zero, zero, zero, none, none},%
			{blk1, blk1, zero, zero, zero, zero, none, none},%
			{zero, zero, blk2, blk2, blk2, blk2, none, none},%
			{zero, zero, blk2, blk2, blk2, blk2, none, none},%
			{zero, zero, blk2, blk2, blk2, blk2, none, none},%
			{zero, zero, blk2, blk2, blk2, blk2, none, none},%
			{none, none, none, none, none, none, diag, none},%
			{none, none, none, none, none, none, none, diag}%
		}};
		\decisionsframe
		\end{tikzpicture}
		};

	\node (C3) at (3, 0) {
		\begin{tikzpicture}[scale=0.3]
		\decisionmatrix{{%
			{blk1, blk1, zero, zero, zero, zero, none, none},%
			{blk1, blk1, zero, zero, zero, zero, none, none},%
			{zero, zero, blk2, blk2, blk2, zero, none, none},%
			{zero, zero, blk2, blk2, blk2, zero, none, none},%
			{zero, zero, blk2, blk2, blk2, zero, none, none},%
			{zero, zero, zero, zero, zero, blk3, none, none},%
			{none, none, none, none, none, none, diag, none},%
			{none, none, none, none, none, none, none, diag}%
		}};
		\decisionsframe
		\end{tikzpicture}
		};
	
	\path[very thick, ->]
		(P) edge (C1)
		(P) edge (C2)
		(P) edge (C3);
\end{tikzpicture}
\caption{
 	PBT branching.
 	The parent pattern contains two blocks (green of size 2 and red of size 3).
 	PBT extends the parent pattern by assigning a new step in three different ways: (left) extend the first block with the new step; (centre) extend the second block with the new step; (right) create a new block consisting of the new step only.
    A black frame encloses the the variables fixed in each of the branches.
    Note that the branching step can be chosen arbitrarily; PBT uses a heuristic to select the branching step.
	}
    \label{fig:matrix-pbt}
\end{figure}
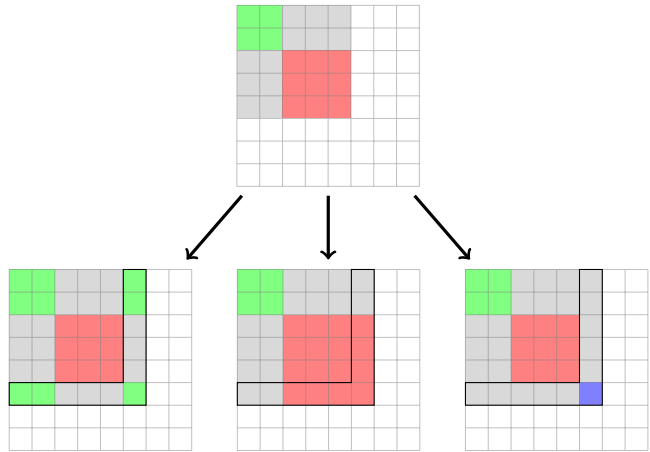 

 PUI, the previously state-of-the-art FPT algorithm for the WSP with UI constraints~\cite{JOCO2014}, implements a different branching strategy.
 It iterates over the set of users $U$ gradually building a set $\mathscr{P}$ of valid patterns.
 At an iteration corresponding to some user $u_i \in U$, PUI attempts to extend each $\pattern \in \mathscr{P}$ with exactly one new block $B$, trying each non-empty $B \subseteq A(u_i) \setminus S(\pattern)$.
 PUI will never attempt to extend an existing block in a pattern, and in this sense it uses `closed' patterns, see Figure~\ref{fig:closed-pattern}.
 While closed patterns support richer propagation, they also reduce the flexibility of search.
 Indeed, a single extension of a closed pattern inevitably fixes many more $M$ variables than that of an open pattern, reducing the ability of the algorithm to focus on the most constrained parts of the problem.
 In general, the smaller the decisions are, the more flexibility the search has in ordering them and, hence, prioritising the most important ones.
 Thus, we suggest that open patterns are preferable to closed patterns.

Storing the set $\mathscr{P}$ of valid patterns also makes the space complexity of PUI exponential in $k$.
Consider an instance with $n = k$ and no constraints.
Let $s \in S$ be some step, and let all the users in this instance be authorised to all the steps except for $s$, i.e.\ $A(u) = S \setminus \{ s \}$ for every $u \in U$.
Clearly, such an instance is unsatisfiable.
To establish this, PUI will generate and store all incomplete valid patterns.
Observe that any partition of $S' \subseteq (S \setminus \{ s \})$ is a valid pattern in this instance.
To count them, consider the set of all partitions of $S$.
There are exactly $\mathcal{B}_k$ of such partitions.
Now, in each of these partitions, remove the block that includes $s$.
This will give us a set of all partial partitions of $S$ that do not cover $s$.
This set exactly corresponds to the set of all the incomplete valid patterns in our problem instance.
Hence, to solve this instance, PUI will populate $\mathscr{P}$ with exactly $\mathcal{B}_k$ patterns.
Hence, the space complexity of PUI is $\Omega^*(B_k)$, where $\Omega^*$ hides polynomial factors similarly to the $O^*$ notation.

 Another aspect in which PBT is different to PUI is that PBT implements delayed assignment of users, branching on user-independent $M$ variables.
 Conversely, PUI branches on the $x$ variables, i.e.\ fixes the assignment of users while branching, achieving the FPT running time by merging equivalent branches.
 We argue that delayed user assignment would usually be more effective.
 As we show in \appendixA, a significant portion of patterns are authorised but usually only a few patterns are eligible; hence branching on the $M$ variables is likely to produce smaller search trees.

 It is possible to implement an algorithm with closed patterns and delayed user assignment.
 Our attempt to implement such a `closed-pattern-based PBT', however, resulted in an algorithm significantly slower than PBT (although faster than PUI).
 
 It is also possible to implement an open-pattern style search with immediate user assignment.
 Indeed, a general purpose PB solver is likely to exploit this strategy when solving UDPB formulation.
 However, assignment of some (but not all) $x_{su}$ for a $u \in U$ would mean that subsequent pruning of the branch does not guarantee that the corresponding open pattern is invalid; indeed, a different assignment of $x_{su}$, $s \in S$, may produce a valid plan.
 Hence, patterns could not be used to merge branches of the search, and the algorithm would not be FPT\@.
 For FPT algorithms with immediate user assignment, it is vital, like in PUI, to try all the authorised combinations of $x_{su}$ for a given $u \in U$ before proceeding to the next $u$; by following this strict sequence, the algorithm guarantees that it finds all the eligible patterns authorisable by processed users.
 For the same reason, PUI could not be implemented as a DFS algorithm.
 
 An important observation is that FPT algorithms with immediate user assignments (i.e.\ PUI) are forced to order the branching variables by user whereas algorithms with delayed user assignments and open patterns (i.e.\ PBT) order the branching variables by steps.
 In a problem with a relatively small number of steps and a large number of users, it is more likely that the users are relatively uniform compared to the steps, and hence the search is more sensitive to the order of steps than to the order of users.
 In other words, branching heuristics in PBT-like algorithms are expected to be more effective compared to branching heuristics in PUI-like algorithms.
 
 Now consider the combination of the PBPB encoding with a DPLL-based~\cite{DavisLL1962} PB solver such as SAT4J\@.
 Internally, a PB solver on PBPB will need to be making branching decisions. 
 This is generally done so as to prefer branching on variables that propagate to entail values for other variables, and given the central nature of the $M$ variables, it seems reasonable they would be favoured as branch variables. 
 As pointed out above, a complete assignment to the variables $M_{s_1, s_2}$, $s_1, s_2 \in S$, and satisfying the constraints~(\ref{eq:mx-symmetric})--(\ref{eq:mx-link-m2}), uniquely defines a complete pattern. 
 A PB solver will be handling partial assignments to the $M$ variables, but it is still reasonable to ask if they are structured like open or closed patterns. 
 To address this, consider the effects of the transitivity constraints (\ref{eq:mx-link-m1}) and (\ref{eq:mx-link-m2}).
 If two $M$ variables sharing a step, e.g.\ $M_{12}$ and $M_{23}$, are set to 1, then (\ref{eq:mx-link-m2}) immediately forces a propagation, $M_{13} = 1$, and similarly for (\ref{eq:mx-link-m1}), so if block-diagonals of 1's happen to overlap then will form into a larger block of 1's.
 Hence there is a tendency to complete the blocks in the $M$ matrix, but there will be no reason to close them.
 This will tend to drive the partial $M$-assignments to have a structure close to open patterns.
 We hence expect that a standard (DPLL-based) PB solver could work on the PBPB formulation in a similar fashion of using open patterns and then extending them. 
 So we expect that the behaviour of the PB solver with PBPB will be more similar to PBT than to PUI; we will see evidence for this in Section~\ref{sec:experiments}.% -- a result that initially surprised us.
 
 We also note here, that, unlike the bespoke PBT and PUI algorithms, PBPB solvers have the flexibility to arbitrarily alternate between branching on $M$'s and $x$'s.
 When user authorisations are tight, this may lead to superior strategies and hence is a potential strength of the general-purpose solver approach.
 
%%%%%%%%%%%%%%%%%%%%%%%%%%%%%%%%%%%%%%%%%%%%%%%%%%%%%%%%%%%%%%%%%%%%%%%%%

\subsection{Properties of the New Pseudo-Boolean Formulation PBPB}   
\label{sec:newPB-properties}

 In this section, we show that the PBPB formulation can also potentially admit FPT running time and polynomial space complexity. 
 The discussion breaks into the upper level search on the $M$-variables for eligible patterns, and the subsequent lower level matching problems arising from the $x$ variables in the context of a pattern.
 
 For the upper level, there are $O(k^2)$ of the $M$ variables, and so a tree search in PB would have the potential to fully instantiate these before handling the user assignments via the $x$ variables (which is not an unreasonable assumption, see Section~\ref{sec:branching-strategies}), using a tree of worst-case size $2^{O(k^2)}$.
 This is FPT, and so whether or not PBPB has a potential to be FPT as a whole depends on the complexity of the user assignments once a complete pattern is reached.

 When all the $M$'s are instantiated, the PBPB formulation (\ref{eq:mx-symmetric})--(\ref{eq:mx-x-define}) reduces to the following:
\begin{align}
\label{eq:sum-x-one}
& \sum_{u \in U} x_{s, u} = 1 
	&& \forall s \in S, \\
& x_{s, u} = 0
	&& \forall s \in S \text{ and } \forall u \in U \setminus A^{-1}(s), \\
\label{eq:substituted-same-user}
& x_{s_1, u} = x_{s_2, u}
	&& \forall s_1 \neq s_2 \in B, B \in \pattern \text{ and } \forall u \in U, \\
\label{eq:substituted-diff-user}
& x_{s_1, u} + x_{s_2, u} \le 1
	&& \forall B_1 \neq B_2 \in \pattern,\ \forall s_1 \in B_1,\ \forall s_2 \in B_2 \text{ and } \forall u \in U, \\
& x_{s, u} \in \{ 0, 1 \} 
	&& \forall s \in S \text{ and } \forall u \in U.
\label{eq:PHP-x-one}
\end{align}

 This is a bipartite matching problem but with blocks of steps being assigned to a user.
 However, because of (\ref{eq:substituted-same-user}), when any one step in a block is assigned (i.e.\ some $x_{s,u} = 1$) then all the other steps in the block are also forced, by propagation, to the same user.

%\DK{Andrew, could you please check the below couple of sentences?}
 While we know algorithms to solve the matching problem in polynomial time, a general-purpose PB solver in a mode that essentially only uses SAT-style resolution cannot solve it efficiently.
 The Pigeon Hole Problem, seeking to assign $n$ entities to $n - 1$ holes and which is a special case of the Bipartite Matching Problem, is known to be exponentially hard for DPLL solvers~\cite{HAKEN1985:intractability} that only use SAT representations; but polynomial size proofs are possible for PB, and so this explains why we encode it using PB.
 The following proposition shows the formulation (\ref{eq:sum-x-one})--(\ref{eq:PHP-x-one}) can be solved in FPT time by a general purpose PB solver, using standard tree-search methods (branching and propagation), but not introducing new variables during the search process. It is important to make this assumption because search or proof methods that are allowed to introduce new variables have the potential to be a lot more powerful, e.g.~\cite{Razborov2002:Complexity-PHP}, but such methods are currently too difficult to control in practice.
% Its proof is given in \ref{ap:proofs}.

\begin{proposition} 
	\label{thm:PB-is-FPT}
	The PB formulation (\ref{eq:sum-x-one})--(\ref{eq:PHP-x-one}) can be solved by tree search and propagation (without the introduction of new variables), in polynomial space, and in time exponential in $|\pattern|$ only.\footnote{The proposition is closely related to known methods in kernelisation \shortcite{Gutin2015}; however, due to the lack of space we do not want to pursue that here.}
\end{proposition}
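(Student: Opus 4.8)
The plan is to recognise the system (\ref{eq:sum-x-one})--(\ref{eq:PHP-x-one}) as a disguised bipartite matching problem and then to exhibit a branching order under which tree search explores only $f(|\pattern|)$ nodes. After all the $M$'s are fixed, constraints (\ref{eq:substituted-same-user}) tie together the $x$-variables of each block $B \in \pattern$, so a solution amounts to choosing, for each block $B$, a single user $u$ with $B \subseteq A(u)$, distinct blocks receiving distinct users (by (\ref{eq:substituted-diff-user})) and every step covered (by (\ref{eq:sum-x-one})). This is exactly a matching saturating $V_1 = \pattern$ in the assignment graph $G(\pattern)$ of Proposition~\ref{prop:matching}. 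Under propagation, setting $x_{s,u}=1$ for one $s \in B$ forces $x_{t,u}=1$ for all $t\in B$ via (\ref{eq:substituted-same-user}) and $x_{t,u}=0$ for every $t$ outside $B$ via (\ref{eq:substituted-diff-user}); hence a single branching decision assigns a whole block and removes that user from all others. The obstacle is that branching naively on the user of a block ranges over up to $n$ authorised users, giving a tree of size $n^{|\pattern|}$ --- polynomial at fixed $|\pattern|$ (class \textbf{XP}) but not FPT.

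The key step I would use to remove the dependence on $n$ in the branching factor is a Hall-type candidate reduction (a kernelisation). For each block $B$ let $\mathrm{auth}(B)=\set{u\in U: B\subseteq A(u)}$ be its authorised users, and retain for $B$ an arbitrary subset $R(B)\subseteq \mathrm{auth}(B)$ of size $\min(|\pattern|,|\mathrm{auth}(B)|)$. I claim that $G(\pattern)$ has a matching saturating $\pattern$ if and only if the reduced graph, in which each block is joined only to $R(B)$, does. The nontrivial direction follows from Hall's condition: for any $X\subseteq\pattern$, either some $B\in X$ has $|\mathrm{auth}(B)|\ge|\pattern|$, whence $|R(B)|=|\pattern|\ge|X|$ already; or every $B\in X$ has $|\mathrm{auth}(B)|<|\pattern|$, in which case $R(B)=\mathrm{auth}(B)$ for all such $B$ and Hall's condition is inherited verbatim from the original graph. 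Thus it suffices to search among the reduced candidate sets, each of size at most $|\pattern|$.

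With this in hand I would specify the branching strategy realising the bound. Process the blocks of $\pattern$ in any fixed order; for the current block $B$, first fix $x_{s,u}=0$ for every step $s\in B$ and every user $u\notin R(B)$ (a sound restriction by the reduction above), and then branch over the at most $|\pattern|$ choices $u\in R(B)$, setting some $x_{s,u}=1$. Propagation then completes the block and deletes $u$ from the remaining blocks; any branch that selects a user already taken is closed immediately by (\ref{eq:substituted-diff-user}), and (\ref{eq:sum-x-one}) forces a failure on a block whose reduced candidates are all exhausted. The search tree therefore has at most $\prod_{B\in\pattern}\min(|\pattern|,|\mathrm{auth}(B)|)\le|\pattern|^{|\pattern|}=2^{O(|\pattern|\log|\pattern|)}$ leaves, exponential in $|\pattern|$ alone; each node performs only propagation over the $O(kn)$ variables and constraints, costing $n^{O(1)}$ time; and since the procedure is depth-first and introduces no new variables, it stores only the current root-to-leaf path and partial assignment, using polynomial space. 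Soundness is immediate, and completeness is guaranteed by the candidate reduction.

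The main obstacle, and the one step worth care, is exactly this reduction of the per-block branching factor from $n$ to $|\pattern|$: without it the search is merely \textbf{XP}. I would therefore present the Hall argument as the heart of the proof, and be explicit that restricting to the reduced candidate sets is compatible with the permitted operations --- it is achieved purely by fixing existing $x$-variables to $0$ and by choosing a value order, never by introducing auxiliary variables --- so that the resulting procedure genuinely lives inside the \emph{tree search and propagation} model assumed in the statement.
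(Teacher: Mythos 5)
Your proof is correct, but it takes a genuinely different route from the paper's own argument. The paper never truncates candidate sets: it argues purely about branching order. Any block $B$ with at least $|\pattern|$ authorised users is delayed until last in the search, because the at most $|\pattern|-1$ users consumed by the other blocks can never exhaust its options; such blocks are therefore completed greedily and never cause backtracking, while the remaining blocks have degree at most $|\pattern|-1$, which directly bounds the branching factor and yields a tree of size $O((|\pattern|-1)^{|\pattern|})$. This is a simple pigeonhole argument that avoids Hall's theorem entirely and stays strictly inside branch-and-propagate: no variable is ever fixed except by branching or by entailed propagation. Your proof instead performs a Hall-type kernelisation --- truncating each block's list $\mathrm{auth}(B)$ to a set $R(B)$ of size $\min(|\pattern|,|\mathrm{auth}(B)|)$ and invoking both directions of Hall's theorem to show the truncation preserves satisfiability --- which is precisely the kernelisation route that the paper's own footnote (citing \cite{Gutin2015}) acknowledges and deliberately sidesteps. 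Note that the two proofs share the same underlying observation (a block with $\ge|\pattern|$ authorised users can never be starved); they differ in the machinery built on top of it. What your route buys: a clean, self-contained completeness lemma, a uniform bound $|\pattern|^{|\pattern|}$ over all blocks, and a natural bridge to Proposition~\ref{th:matching-polysize}, where the paper does use Hall's marriage theorem. What it costs: the fixings $x_{s,u}=0$ for $u\notin R(B)$ are satisfiability-preserving but not logically entailed by the constraints, so you are forced (as you correctly recognise) to argue separately that the truncation can be realised as a value-ordering restriction within the tree-search-and-propagation model; the paper's delayed-assignment argument needs no such defence. Both arguments give the same $2^{O(|\pattern|\log|\pattern|)}$ time bound and polynomial space.
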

%Proof of Proposition~\ref{thm:PB-is-FPT}.
\begin{proof}
 The PB formulation (\ref{eq:sum-x-one})--(\ref{eq:PHP-x-one}) corresponds to the standard bipartite matching problem on a graph with the vertices of one partition consisting of the blocks of $\pattern$ and the other partition vertices are the users $U$.
 Observe that if the degree (number of authorised users) of a block $B \in \pattern$ is greater than the number of blocks in $|\pattern|$, then no set of choices for the other blocks can remove all the options for that block. 
 Hence, all vertices $B \in \pattern$ of degree $|\pattern|$ or above can be delayed until last in the search tree: if the search does reach them, then they can be given arbitrary values and so will never lead to backtracking.
 (An example occurs in Figure~\ref{fig:assignment-graph-sat} in which the block $\{ s_3 \}$ has 3 authorised users; so its assignment can be delayed until after the other 2 blocks.)
 Variables within a block are all constrained to be equal, hence, eventually one of them will be picked as the branch variable; at this time the propagation will give values to all the others.
 The other members of a block hence will no longer be candidate branch variables, and they will not contribute to the size of the search tree.
 Hence in the backtracking portion of the branching, branching factor of the search will be limited by $|\pattern| - 1$, and the depth of the search by $|\pattern|$.
 Hence the search tree size is $O((|\pattern| - 1)^{|\pattern|})$, and the depth is polynomial.
%\qed
\end{proof}

This proposition is sufficient to show that a PB solver based on standard branch-and-propagate methods has the potential to solve the PBPB formulation in FPT time.
 However, Proposition~\ref{thm:PB-is-FPT} effectively shows that an unbalanced bipartite matching problem (with parts of size $O(k)$ and $O(n)$, respectively) can be solved by a PB solver in time polynomial in $n$ and exponential in $k$, whereas we know that the Hungarian method is polynomial in both $n$ and $k$.
 Although we have not observed difficult matching problems in our experiments with WSP algorithms, it is still natural to discuss the worst case, and consider what are the potential limitations of the PB approach.
 For this we will switch to a ``proof theory'' perspective and ask about the sizes of the proofs of unsatisfiability available within the PB representation (note that a proof of satisfiability of the matching problem is trivial in the sense that it is just the verification of a given witness).

\begin{proposition}
 \label{th:matching-polysize}
 When the PB formulation (\ref{eq:sum-x-one})--(\ref{eq:PHP-x-one}) is unsatisfiable, then there is a PB proof of that unsatisfiability, without introducing new variables, and that is polynomial in both $|\pattern|$ and $n$.
\end{proposition}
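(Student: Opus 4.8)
The plan is to reduce the statement to the classical fact that the pigeonhole principle admits short cutting-plane proofs, with the rest being a Hall-type certificate of infeasibility assembled into a polynomial number of inequality additions and roundings.

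First I would use the matching viewpoint already established for (\ref{eq:sum-x-one})--(\ref{eq:PHP-x-one}): the system is satisfiable if and only if the assignment graph has a matching saturating every block of $\pattern$ (this is the content of Proposition~\ref{prop:matching}). Hence, when the system is unsatisfiable, Hall's theorem (equivalently K\"onig's theorem) supplies a \emph{Hall violator}: a set of blocks $W \subseteq \pattern$ whose collective authorised-user set $N(W) = \{u \in U : \exists B \in W,\ B \subseteq A(u)\}$ satisfies $|N(W)| < |W|$. I would fix such a $W$ (its existence, not its computation, is what the proof needs) and restrict all reasoning to the variables $x_{s,u}$ with $s \in B$ for some $B \in W$ and $u \in N(W)$. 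Writing $t = |W|$ and $q = |N(W)| < t$, this subinstance is exactly the pigeonhole configuration with $t$ pigeons (blocks) and $q$ holes (users).

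Next I would assemble the contradiction from two aggregated inequalities over the same double sum. Choosing a representative step $s_B \in B$ for each $B \in W$, the block (pigeon) axiom $\sum_{u \in N(W)} x_{s_B,u} \ge 1$ is cheap: (\ref{eq:sum-x-one}) gives $\sum_{u \in U} x_{s_B,u} = 1$, and for each $u \notin N(W)$ some step of $B$ is unauthorised for $u$, so the authorisation axioms together with the within-block equalities (\ref{eq:substituted-same-user}) force $x_{s_B,u} = 0$. Summing these $t$ axioms yields $\sum_{u \in N(W)} \sum_{B \in W} x_{s_B,u} \ge t$. For the matching (hole) side I need, for each $u \in N(W)$, the clique inequality $\sum_{B \in W} x_{s_B,u} \le 1$; summing those $q$ inequalities gives $\sum_{u \in N(W)} \sum_{B \in W} x_{s_B,u} \le q$. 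Combining the two aggregated inequalities produces $t \le q$, contradicting $q < t$, i.e.\ a derivation of $1 \le 0$. The number of axioms and additions in this skeleton is $O(tq) = O(|\pattern|\, n)$, hence polynomial.

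The genuine obstacle is producing the clique inequalities, since (\ref{eq:substituted-diff-user}) supplies only the \emph{pairwise} constraints $x_{s_B,u} + x_{s_{B'},u} \le 1$, and these do not yield $\sum_{B} x_{s_B,u} \le 1$ by LP or Farkas reasoning alone: the half-integral point $x \equiv \tfrac{1}{2}$ satisfies every pairwise constraint, so the linear relaxation stays feasible even though the instance is integrally infeasible. Integer rounding is therefore essential, and the crux is to bound its size. The key step is to show that the clique "at-most-one" inequality over the at most $t$ relevant blocks is derivable from the pairwise edges by Chv\'atal--Gomory rounding of rank only $\lceil \log_2 t \rceil$ (a halving argument: sum the pairwise inequalities over a set, divide by its size minus one, round down, and iterate on a balanced split), with a derivation of size polynomial in $t$ and introducing no new variables. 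This is exactly the ingredient behind the Cook--Coullard--Tur\'an polynomial cutting-plane refutation of the pigeonhole principle, which I would invoke. Generating one such clique derivation for each of the $q$ users and appending the skeleton above yields a PB proof of unsatisfiability of size polynomial in both $|\pattern|$ and $n$. I expect this size bound on the clique/pigeonhole aggregation to be the main difficulty, precisely because a single naive summation rounds only to $\lfloor m/2 \rfloor$ rather than to $1$; everything else (the Hall certificate, the pigeon axioms, and the final one-line cancellation) is routine.
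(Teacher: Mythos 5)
Your proof follows essentially the same route as the paper's: reduce to the matching of block representatives via propagation through the within-block equalities, invoke Hall's theorem to extract a violating subset of blocks, recognise the restricted system as a pigeonhole instance, and appeal to the known polynomial-size cutting-plane refutation of the PHP without new variables (Cook--Coullard--Tur\'an), which is exactly the result the paper cites. Your extra detail on deriving the per-user clique inequalities by Chv\'atal--Gomory rounding is a correct unpacking of that cited result, not a different argument.
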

\begin{proof}
 Observe that we can take an arbitrary representative, a step, from within each block of the pattern $\pattern$ and use propagation through (\ref{eq:substituted-same-user}) to limit the users permitted for the representative. 
 Hence the problem (\ref{eq:sum-x-one})--(\ref{eq:PHP-x-one}) is precisely the matching of the selected representative of each block to a permitted user. 
 The Proposition~\ref{th:matching-polysize} follows from the Hall's marriage theorem \cite{Cameron1994:Combinatorics-book}; a matching problem on a bipartite graph $G = (L \cup R, E)$ has a complete matching of the vertices of the partition $L$, if and only if it is true that for all subsets $L'$ of $L$, there are at least $|L'|$ elements in $R$ that may match with some vertex in $L'$.
 In WSP language this basically means that the matching problem is unsatisfiable if and only if there is some subset $\mathcal{B}$ of blocks, for which the corresponding set of candidate users is smaller than $|\mathcal{B}|$; for example, see Figure~\ref{fig:assignment-graph-unsat}.
 Such a subset is a constrained form of the pigeon-hole problem (PHP), stating that $|\mathcal{B}|$ blocks cannot be assigned to fewer than $|\mathcal{B}|$ users, and restricting equations (\ref{eq:sum-x-one})--(\ref{eq:PHP-x-one}) to such a subset from the marriage theorem leads to a PB encoding of the PHP\@. 
 (There are also extra constraints to remove unauthorised assignments within the PHP, but these are not needed, as the counting already suffices.)
 Since the PHP is known to have a polynomial size PB proof without the use of new variables, see e.g.~\shortcite{CookEtal1987:complexity-cutting-plane-proofs,DixonEtal:2004:GenBoolSatI}, it follows there is also a PB proof of (\ref{eq:sum-x-one})--(\ref{eq:PHP-x-one}).
%\qed
\end{proof}

 Note that Propositions~\ref{thm:PB-is-FPT} and~\ref{th:matching-polysize} are quite different in that the first one is discussing the process of a solution, whereas the second one is about a witness to unsatisfiability but not the time to find it.
 Nevertheless we conclude that a PB solver might, at least, be expected to use tree search to solve the assignment problem in time exponential in $k$, but also have the potential to be able to solve it in time polynomial in $k$.
 
 We further note that our separate experiments confirmed that at large $n$ the matching problem occurring at the lower level of the algorithm are typically very easy as there is little conflict between users, and simple propagation is usually sufficient to solve the problem.

%%%%%%%%%%%%%%%%%%%%%%%%%%%%%%%%%%%%%%%%%%%%%%%%%%%%%%%%%%%%%%
%%%%%%%%%%%%%%%%%%% PART TWO %%%%%%%%%%%%%%%%%%%%%%%%%%%%%%%%%
%%%%%%%%%%%%%%%%%%%%%%%%%%%%%%%%%%%%%%%%%%%%%%%%%%%%%%%%%%%%%%
\section{Instance Generator and Phase Transitions}  
\label{sec:testbed}

 It is preferred to use real-world instances in computational studies. However, there are only very few real-world instances of WSP available publicly, and those are of size too small to be of interest in our computational experiments, see e.g.~\cite{Bertolissi2018}. 
 We have considered the instances appearing in practice from~\cite{Bertolissi2018,SantosRCP15,SantosRCP17}, namely, TRW (Trip Request Workflow), ITIL (IT Financial Reporting), and ISO (Budgeting for Quality Management) having 5, 7, 9 steps (tasks) and 5, 2, 3 not-equals constraints, respectively. 
 More details and description of user-step authorisations for 3 satisfiable and 3 unsatisfiable versions of these instances can be found in Section~5 of~\cite{SantosRCP17}. 
 Our new solver PBT correctly solved each of the six instances in less than $0.001$ sec. 
 If some businesses do not use more and more complex constraints because of computational complexity issues, PBT provides a good opportunity to consider larger and more involved workflow scenarios.

Therefore, due to the difficulty of acquiring real-world instances with an appropriate range of sizes, to support extensive studies of the scaling of the runtimes, similarly to other authors~\cite{JOCO2014,Roy2015,WaLi10}, we use the synthetic instance generator described in~\cite{JOCO2014} and available for downloading~\cite{SourceCodes}.
In this section, we first present the generator of the WSP instances.
We then empirically study the probability of satisfiability of the instances as we vary the generator parameters. 
We give evidence for PT, between the satisfiable and unsatisfiable regions.
The point is that the resulting instances from the PT region can be expected to be a good test of the effectiveness of solution algorithms.

\subsection{The Instance Generator} 
\label{sec:generator}
\label{sec:instance-generator}

 Three families of UI constraints are used: \emph{not-equals} (also called \emph{separation-of-duty}), \emph{at-most-$r$} and \emph{at-least-$r$} constraints.
 A not-equals constraint with scope $\set{s, t}$ is satisfied by a complete plan $\pi$ if and only if $\pi(s) \neq \pi(t)$.
 An at-most-$r$ constraint $c$ with scope $T_c$ is satisfied if and only if $|\pi(T_c)| \le r$.
 Similarly, an at-least-$r$ constraint $c$ with scope $T_c$ is satisfied if and only if $|\pi(T_c)| \ge r$.
 We do not explicitly consider the widely used binding-of-duty constraints, that require two steps to be assigned to one user, as those can be trivially eliminated during preprocessing.
 While the binding-of-duty and separation-of-duty constraints provide the basic modelling capabilities, the at-most-$r$ and at-least-$r$ constraints impose more general ``confidentiality'' and ``diversity'' requirements on the workflow, which can be important in some business environments.
 %Following~\cite{JOCO2014,KaGaGu,Roy2015}, 
 We decided to focus this study on at-least-3 and at-most-3 constraints with a scope of 5, $|T_c|=5$ for the following reason. 
 Cohen et al.~\cite{JOCO2014} performed several computational experiments with the at-least-$r$ and at-most-$r$ constraints for various values of $r$ and various sizes of scope $T$. 
 The constraints at-least-3 and at-most-3 with scope of size 5 were selected in~\cite{JOCO2014} as they appear to be in the practical range of the parameters $r$ and $|T|$ and the values $r=3$ and $|T|=5$ often lead to computationally challenging WSP instances. 
 For more details, see Section~5.2 of~\cite{JOCO2014}.

The specific stochastic WSP Instance Generator takes as input four parameters,
\begin{enumerate}
  \item $k$, the number of steps;
  \item $n$, the number of users;
  \item $e$, the number of not-equals constraints;
  \item $\gamma$, the number of at-most-3 and also the number of at-least-3 constraints (all with scope 5).
\end{enumerate}
The generator, which we denote as $\mathit{WIG}(k,n,e,\gamma)$, is stochastic, but as usual can be made deterministic by also specifying a value for the random generator seed.
 For each user $u \in U$, it generates $A(u)$ such that the size of $A(u)$ is first selected uniformly from $\{ 1, 2, \ldots, \lfloor 0.5 k \rfloor \}$ at random and then the set $A(S)$ itself is selected randomly and uniformly from $S$ with no repetitions.
 This results in each step having $n/4$ authorised (random) users on average. 
 The generator also produces $e$ distinct not-equals, $\gamma$ at-most-3 constraints, and $\gamma$ at-least-3 constraints, all uniformly at random.
 
 Note that in general, WSP instances could have some symmetries, e.g.\ different steps, or different users are interchangeable, potentially, making search processes more inefficient and needing the usage of symmetry breaking methods (e.g.\ see~\shortcite{CrawfordEtal1996:symmetry-breaking-predicates}).
 However, our benchmark instances have negligible amounts of such symmetry (both by theory and also by direct evaluation) and so we do not consider it here.

 The PBT algorithm, conversion routines, test instances with solutions and the test instance generator are available for downloading~\cite{SourceCodes}.

%%%%%%%%%%%%%%%%%%%%%%%%%%%
\subsection{Thresholds from the Instance Generator} 
\label{sec:setting-density}

 In this section, we focus on experiments to study the dependency of instance properties on the parameters of the instance generator, and show that the WSP instances we use exhibit the classic properties expected of PTs.

 Figure~\ref{fig:vary-params} shows an example, at $k=40$ and $n=400$, of the running time of PBT and the percentage of unsatisfiable (unsat) instances as they change with variation of parameters $e$ (Figure~\ref{fig:vary-d}) and $\gamma$ (Figure~\ref{fig:vary-alpha}).
 As one could expect, the number of unsat instances grows with the number of constraints.
 It can also be observed that the hardest instances are those near the 50\% level of the unsat curve.
Following standard arguments, under-constrained instances have a lot of valid plans which makes them relatively easy.
 Unsatisfiability of the oversubscribed instances can be proved relatively quickly due to heavy pruning of the branches.
 However, instances around the 50\% unsat level are likely to have none to few valid plans making it hard to find them or prove unsatisfiability.
Note that one can argue that real-world instances in many cases are likely to be in the region of 50\% unsatisfiable: Organisations are likely to be constraining their workflows up to the point when the workflows become unsatisfiable, or start with unsatisfiable workflows and gradually relax the constraints until obtaining satisfiable problems.
 
\begin{figure}[htb]
	\begin{subfigure}{0.48\textwidth}
		\begin{tikzpicture}
		\begin{axis}[
			compat=newest,
			width=0.9\textwidth,
			height=6cm,
			legend pos=north west,
			hide x axis,
			ylabel={Running time, sec},
			title={},
			xmin=0,
			xmax=200,
			ymin=0
			%legend cell align=left,
		]
        \addplot+[name path=A, blue!50!black, thin, solid, mark=none] table[
			x expr={\thisrow{density} * 40 * 39 / 2},
			y=runtime-25,
            each nth point=2
		] {vary-d.dat};
        \addplot+[name path=B, blue!50!black, thin, solid, mark=none] table[
			x expr={\thisrow{density} * 40 * 39 / 2},
			y=runtime-75,
            each nth point=2
		] {vary-d.dat};

        \addplot[blue!20] fill between[of=A and B];

        \addplot+[name path=C, blue!50!black, thin, solid, mark=none] table[
			x expr={\thisrow{density} * 40 * 39 / 2},
			y=runtime-35,
            each nth point=2
		] {vary-d.dat};
        \addplot+[name path=D, blue!50!black, thin, solid, mark=none] table[
			x expr={\thisrow{density} * 40 * 39 / 2},
			y=runtime-65,
            each nth point=2
		] {vary-d.dat};

        \addplot[blue, opacity=0.3] fill between[of=C and D];

        \addplot[blue, ultra thick] table[
			x expr={\thisrow{density} * 40 * 39 / 2},
			y=runtime-50,
            each nth point=2
		] {vary-d.dat};

		%\addplot[thick, blue] table[
		%	x expr={\thisrow{density} * 100},
		%	y=runtime,
        %    each nth point=2
		%] {vary-d.dat};
		\end{axis}
		
		\begin{axis}[
			compat=newest,
			width=0.9\textwidth,
			height=6cm,
			legend pos=north west,
			xlabel={Number of not-equals constraints $e$},
			ylabel={Unsat., \%},
			axis y line*=right,
			%legend cell align=left,
			grid=major,
			xmin=0,
			xmax=200,
			ymin=0,
			ymax=100
		]
		\addplot[ultra thick, red, dashed] table[
			x expr={\thisrow{density} * 40 * 39 / 2},
			y expr={\thisrow{unsat} * 100},
		] {vary-d.dat};
		\end{axis}
		\end{tikzpicture}
		\caption{The number $\gamma$ of at-most and at-least constraints is fixed at $\gamma = k$, and the number $e$ of not-equals constraints is varied.}
		\label{fig:vary-d}
	\end{subfigure}
	\hspace{0.04\textwidth}
	\begin{subfigure}{0.48\textwidth}
		\begin{tikzpicture}
		\begin{axis}[
			compat=newest,
			width=0.9\textwidth,
			height=6cm,
			legend pos=north west,
			hide x axis,
			ylabel={Running time, sec},
			xmin=0,
			xmax=79,
			ymin=0,
			%legend cell align=left,
		]
		\addplot+[name path=A, blue!50!black, thin, solid, mark=none] table[
			x=c,
			y=runtime-25,
		] {vary-alpha.dat};
        \addplot+[name path=B, blue!50!black, thin, solid, mark=none] table[
			x=c,
			y=runtime-75,
		] {vary-alpha.dat};

        \addplot[blue!20] fill between[of=A and B];
        
   		\addplot+[name path=C, blue!50!black, thin, solid, mark=none] table[
			x=c,
			y=runtime-35,
		] {vary-alpha.dat};
        \addplot+[name path=D, blue!50!black, thin, solid, mark=none] table[
			x=c,
			y=runtime-65,
		] {vary-alpha.dat};

        \addplot[blue, opacity=0.3] fill between[of=C and D];

        \addplot[blue, ultra thick] table[
			x=c,
			y=runtime-50,
		] {vary-alpha.dat};

		%\addplot[thick, blue] table[
		%	x=c,
		%	y=runtime-50,
		%] {vary-alpha.dat};
		\end{axis}
		
		\begin{axis}[
			compat=newest,
			width=0.9\textwidth,
			height=6cm,
			legend pos=north west,
			xlabel={Num.\ of at-most and at-least constraints $\gamma$},
			ylabel={Unsat., \%},
			axis y line*=right,
			%legend cell align=left,
			grid=major,
			xmin=0,
			xmax=79,
			ymin=0,
			ymax=100
		]
		\addplot[ultra thick, red, dashed] table[
			x=c,
			y expr={\thisrow{unsat} * 100},
		] {vary-alpha.dat};
		\end{axis}
		\end{tikzpicture} 
		\caption{The number $e$ of not-equals constraints is fixed at $e = 78$ (corresponding to 10\% of all available choices), and the value of $\gamma$ is varied.}
		\label{fig:vary-alpha}
	\end{subfigure}
 \caption{
	At $(k,n)=(40,400)$, the running times of PBT (blue) and percentage of unsatisfiable instances (red) for various values of the instance generator parameters.
    The blue shades show the [35\%--65\%] (deep blue) and [25\%--75\%] (lighter blue) percentiles of the runtimes.
 }
 \label{fig:vary-params}
\end{figure}
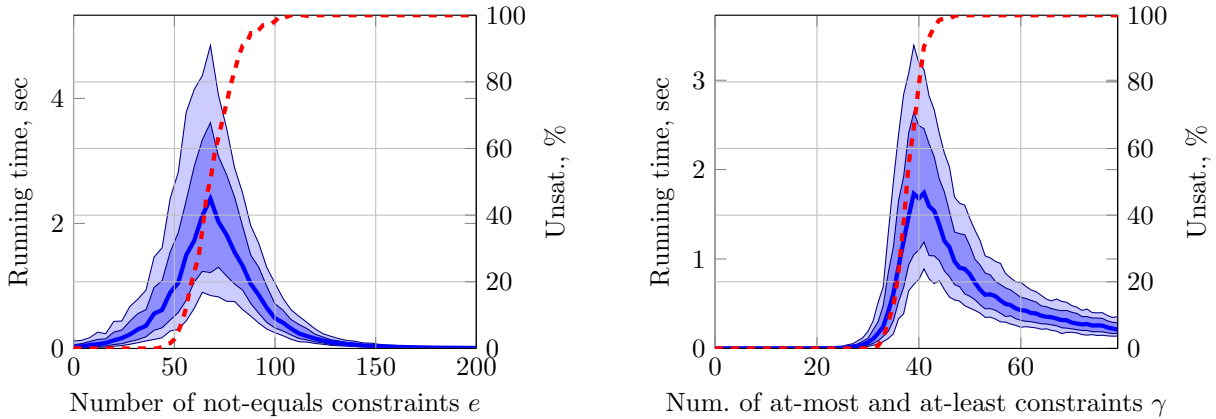

In the comparison of algorithms we use `critical' instances with $\gamma = k$ and $e = e_{50}$, where $e_{50} = e_{50}(k, n, \gamma)$ makes $\mathit{WIG}(k, n, e_{50}, \gamma)$ instances satisfiable with 50\% probability.
The values of $e_{50}(k, n, \gamma)$ are obtained empirically for each $k$, $n$ and $\gamma$ and are freely available to facilitate future work~\cite{SourceCodes}.

 Although we do not use it directly, it is also possible to estimate $e_{50}$ analytically based on the instance generator properties.
 In \appendixB, we give an (approximate) computation in the style of an `annealed estimate'~\cite{SelmanKirkpatrick1996:AIJ-critical-behavior} of the average number of solutions given $k$, $n$, $e$ and $\gamma$; with the intent to use it to obtain an approximate value of $e_{50}(k, n, \gamma)$.
 The annealed estimate seeks the average number of solutions over all instances, and so gives an upper bound on the PT -- because once the average drops below 0.5 then at least half the instances must be unsatisfiable.
 (A similar concept is used in the constrainedness parameter~\shortcite{Gent1996} which is expressed via the average number of feasible solutions in an ensemble of problem instances.)
 The main novelty of our analysis is that it accounts for the unevenness of distribution of solutions arising from the two-level nature of the problem.
 In particular, we observed that a straightforward strategy of direct estimation of the probability of a single plan being valid does not give a tight estimate in our case.
 Indeed, there might be millions of authorised plans per pattern but at the same time the expected number of eligible patterns can be well below one.
 In that case, most of the instances will have no valid plans at all but some very rare instances will have millions of valid plans.
 Since the straightforward estimation strategy gives the average number of valid plans per instance, its result is likely to be a significant over-estimate of the position of the PT.
 In order to estimate the critical point $e_{50}$ more accurately, we have to ask a different question: we need to know when the probability of an instance to have at least one valid plan is 50\%.
 For this, we have to estimate the number of eligible patterns and then the probability of a pattern being authorised.
 This will give us the expected number of valid \emph{patterns} which yields a more accurate estimate of the critical point $e_{50}$.

 Lastly, to further support that the observed phenomena have properties expected of a PT, we conducted a set of experiments at the critical points and around them.
 In particular, we show in \appendixC{} the emergence of \emph{forced variables} similar to~\cite{Culberson2001:frozen-graph-coloring}, i.e.\ the decision variables with values forced by the instance, in the critical region.
 We observe that the PT coincides with a rapid growth of the number of $M$ variables forced to be either 0 or 1, effectively corresponding to forced (not included explicitly) not-equals or ``equals'' constraints, respectively.

%%%%%%%%%%%%%%%%%%%%%%%%%%%%%%%%%%%%%%%%%%%%%%%%%%%%%%%%%%%%%%%%%%%%
\section{Computational Experiments}  
\label{sec:experiments}

 Specifically, we empirically study and compare the following WSP solvers:
\begin{description}
	\item[PBT] The algorithm proposed in this paper;
	\item[PUI] The FPT algorithm proposed and evaluated in \cite{JOCO2014};
	\item[UDPB (Res)] The old pseudo-Boolean SAT formulation of the problem (see Section~\ref{sec:old-formulation}) solved with SAT4J~\cite{BePa10} in the resolution proof system mode.
    We also attempted to use the cutting planes mode to solve UDPB but the performance was prohibitively poor for running the experiments.
	\item[PBPB (Res)] The new pseudo-Boolean SAT formulation, PBPB, of the problem (see Section~\ref{sec:new-formulation}) solved with SAT4J in the resolution proof system mode.
	\item[PBPB (CutP)] PBPB solved with SAT4J in the cutting planes proof system mode.
	
	\item[CSP (CP-SAT)] Our CSP formulation solved with CP-SAT, the latest constraint solver from OR-Tools.
\end{description}

 The PBT algorithm is implemented in C\#, and the PUI algorithm is implemented in C++.  
 Our test machine is based on two Intel Xeon CPU E5-2630 v2 (2.6~GHz) and has 32~GB RAM installed.  
 Hyper-threading is enabled, but we never run more than one experiment per physical CPU core concurrently, and concurrency is not exploited in any of the tested solution methods.

%%%%%%%%%%%%%%%%%%%%%%%%%%%%%%%%%%%%%%%%%%%%%%%%%%%%%%%%%%%%%%%%%%%%
\subsection{Slices in Studying Algorithm Scaling} 
\label{sec:slices}

 As discussed in Section~\ref{sec:testbed}, we focus on the PT WSP instances in our computational study.
 In a standard (non-FPT) study of PTs, this is generally straightforward -- at least conceptually, though potentially quite computationally challenging.
 For example, consider standard Random-3SAT; the size of the problem is indicated by the number, $n$, of propositional variables. 
 For each value of $n$, the number of clauses $c$ is selected so that the instances have a 50\%{} probability of being satisfiable, which we might write as ``set $c = c_{50}$''.
 Then, to study the algorithm's complexity, one tests it on these PT instances.
 
 However, a key aspect of FPT is that, in addition to a main problem size parameter $n$, it also has some other parameter $k$ which is closely involved in the problem complexity.
 One will generally wish to study the algorithm's scalability in terms of both $k$ and $n$.
 We call $n$ and $k$ \emph{size parameters} following the observation that they control the size of the space of WSP solutions.
 Consequently, we say that $(k, n)$ is the \emph{size space}.
 The remaining parameters $e$ and $\gamma$ are then \emph{constraint parameters} as they control the number of constraints, and they are chosen such that the instances have 50\% chance of being satisfiable as discussed in Section~\ref{sec:setting-density}.

 Since $(k, n)$ is two dimensional, studying the performance over the whole size space is computationally expensive and also difficult to analyse. 
Accordingly, in this paper, for simplicity and clarity we study the scaling along one-dimensional subspaces of $(k,n)$, which we will refer to as \emph{slices}.
 Since the size space is two-dimensional, we need to study the scaling in at least two independent (not necessarily orthogonal) directions; or along two independent one-dimensional ``slices''. 
 While studying the FPT properties, it is natural that such slices should also tend to focus on the regions in which $k$ is small compared to $n$.

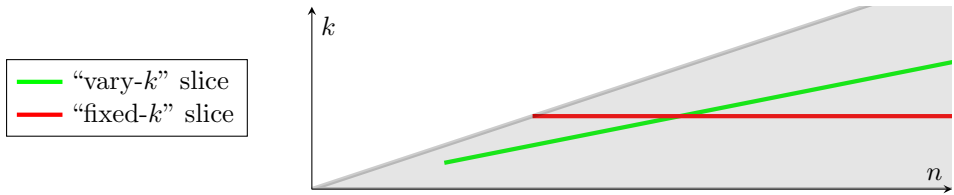
\begin{figure}[htb]
	\begin{center}
	\begin{tikzpicture}
		\begin{axis}[		
            compat=newest,
            width=10cm,
            height=4cm,
            legend pos=north west,
            xlabel={$n$},
            ylabel={$k$},
            title={},
            %legend cell align=left,
            grid=none,
            axis lines = middle,
            ticks=none,
            xmin=0,
            ymin=0,
            xmax=2.9,
            ymax=2.5,
            legend style={at={(-0.1,0.5)},anchor=east},
            legend cell align=left,
        ]
			\addplot[ultra thick, green, domain=0.6:3] {0.6 * x};
            \addlegendentry{``vary-$k$'' slice}        
        
       		\addplot[ultra thick, red, domain=1:3] {1};
			\addlegendentry{``fixed-$k$'' slice}          
		
			\addplot[opacity=0.2, fill=gray, ultra thick] coordinates {(0,0) (3,3) (3,0)} \closedcycle;
    	\end{axis}
	\end{tikzpicture} 
	\end{center}
	\caption{
    	Schematic view of the two ``slices'' used in our computational study in order to cover the two-dimensional size space $(k,n)$ used in order to investigate the empirical FPT properties. 
        Such FPT studies should naturally focus on the lower right (grey) region in which $k$ is small compared to $n$.
        }
	\label{fig:slices}
\end{figure}

 Many options of choosing the slices are possible, including non-linear slices, however in this paper we will just use two linear slices schematically illustrated in Figure~\ref{fig:slices}, as they give a good and useful insight into the behaviour: 
\begin{description}
	\item[``vary-$k$'']
    Vary the value of $k$, but the value of $n$ is given as a specified function of $k$.
    In this paper, following~\cite{JOCO2014,KaGaGu}, we use the choice $n = 10k$. 
    This gives a simple and clean way of keeping $k$ to be `small' compared to $n$. 
    We also report the experimental results for the $n = 100k$ slice in \appendixD, however the conclusions are somewhat similar.
    
	\item[``fixed-$k$'']
    Use a constant value of $k$ and vary $n$.
    This is a natural slice for a test of FPT performance; recall that the worst-case time complexity grows polynomially with $n$ at a fixed $k$, and one can expect the algorithms to demonstrate good scalability in this slice.

\end{description}

%%%%%%%%%%%%%%%%%%%%%%%%%%%%%%%%%%%%%%%%%%%%%%%%
\subsection{Performance Comparison: Slice ``vary-k''} 
\label{sec:vary-k-slice}

 To compare performance of various WSP algorithms, for each value of $k$ we generated 100 instances using $\mathit{WIG}(k, 10k, e_{50}, k)$. 
 %It is particularly important to select instances lying at the crossover point (50\% chance of being satisfiable), because otherwise the slice through the parameter space might move off of the phase transition and this may well be expected to distort the scaling properties.
 The empirical number of not-equals constraints $e_{50}$ for each $k$ needed to obtain PT instances is shown on Figure~\ref{fig:density-vs-k}.
One can observe the `zig-zag' shape of the curve in that the values corresponding to odd $k$'s are greater than the values corresponding to even $k$'s.
This minor artefact arises simply because the size of each authorisation list is randomly drawn by our instance generator from $[1, \lfloor 0.5k - 0.5 \rfloor]$.
As a result, the average number of authorisations in an instance with $k = 2i - 1$ is equal to that in an instance with $k = 2i$, $i \in \mathbb{N}$.
This makes the authorisations in `even' instances slightly more constrained compared to `odd' instances, which is then compensated by reduced number $e$ of not-equals constraints.

 We then solved each instance with each of the algorithms and in Figure~\ref{fig:performance} we report the median running time.
 We report separate times for the satisfiable and unsatisfiable instances.
 The immediate observations are that all the algorithms demonstrate roughly exponential growth of the running time, but that the performances of all the methods differ widely.
 
 \pgfplotsset{PBT/.append style={red}}
\pgfplotsset{PBPB/.append style={blue}}
\pgfplotsset{PUI/.append style={green}}
\pgfplotsset{UDPB/.append style={black}}
\pgfplotsset{PBPB CP/.append style={olive, no marks}}
\pgfplotsset{CSP/.append style={orange, no marks}}

\pgfplotsset{sat/.append style={very thick, dashed}}
\pgfplotsset{unsat/.append style={solid, ultra thick}}

\begin{figure}[tbhp]
%\pgfplotsset{yticklabel style={text width=3.2em,align=right}}
\begin{subfigure}{\textwidth}
\begin{flushright} 
\begin{tikzpicture}[trim axis right,trim axis left]
	\begin{semilogyaxis}[
		compat=newest,
		width=\textwidth,
		height=7.4cm,
		legend pos=south east,
		xlabel={Number of steps $k$},
		ylabel={Running time, sec},
		title={},
		legend cell align=left,
		grid=major,
        unbounded coords=jump,
        xmin=17,
        xmax=59
	]
	\addplot[PBT, sat, forget plot] table[
		x=k,
		y=pbt-sat-50,
	] {vary-k-10.dat};
	\addplot[PBT, unsat] table[
		x=k,
		y=pbt-unsat-50,
	] {vary-k-10.dat};
	\addlegendentry{PBT}

% 	\addplot[PUI, sat, forget plot] table[
% 		x=k,
% 		y=pbt-sat-50,
% 	] {vary-k-5.dat};
% 	\addplot[PUI, unsat] table[
% 		x=k,
% 		y=pbt-unsat-50,
% 	] {vary-k-5.dat};
% 	\addlegendentry{PBT n=5k}
    
%     \addplot[UDPB, sat, forget plot] table[
% 		x=k,
% 		y=pbt-sat-50,
% 	] {vary-k-100.dat};
% 	\addplot[UDPB, unsat] table[
% 		x=k,
% 		y=pbt-unsat-50,
% 	] {vary-k-100.dat};
% 	\addlegendentry{PBT n=100k}
	
	\addplot[PUI, sat, forget plot] table[
		x=k,
		y=ui-sat-50,
	] {vary-k-10.dat};
	\addplot[PUI, unsat] table[
		x=k,
		y=ui-unsat-50,
	] {vary-k-10.dat};
	\addlegendentry{PUI}

	\addplot[UDPB, sat, forget plot] table[
		x=k,
		y=sat4j-sat-50,
	] {vary-k-10.dat};
	\addplot[UDPB, unsat] table[
		x=k,
		y=sat4j-unsat-50,
	] {vary-k-10.dat};
	\addlegendentry{UDPB (Res)}

	\addplot[PBPB, sat, forget plot] table[
		x=k,
		y=mxsat-sat-50,
	] {vary-k-10.dat};
	\addplot[PBPB, unsat] table[
		x=k,
		y=mxsat-unsat-50,
	] {vary-k-10.dat};
	\addlegendentry{PBPB (Res)}

	%\addplot[gray, ultra thick, unsat, domain=18:58] 
    %{
    %	1e-5 * 2^(0.11*x * ln(x))
    %};
	%\addlegendentry{$2^{0.11k \cdot \log_2{k}}$}

	%\addplot[MxMIP, sat, forget plot] table[
	%	x=k,
	%	y=mip-sat-50,
	%] {vary-k-10.dat};
	%\addplot[MxMIP, unsat] table[
	%	x=k,
	%	y=mip-unsat-50,
	%] {vary-k-10.dat};
	%\addlegendentry{MxMIP (will remove)}
    
    \addplot [PBPB CP, sat, forget plot] table [
     	x=k, 
        y=mxpb-cp-sat-50
    ] {vary-k-10.dat};
    \addplot [PBPB CP, unsat, mark=*] table [
       	x=k, 
        y=mxpb-cp-unsat-50
    ] {vary-k-10.dat};
    \addlegendentry{PBPB (CutP)}    
	
    % \addplot [CSP, sat, forget plot] table [
    %  	x=k, 
    %     y=csp-sat-50
    % ] {csp-vary-k-10.dat};
    % \addplot [CSP, unsat, mark=*] table [
    %   	x=k, 
    %     y=csp-unsat-50
    % ] {csp-vary-k-10.dat};
    % \addlegendentry{CSP (CP-SAT)}       

% csp-vary-k-10-NEW.dat is for CSP data after forlond rebuild
    \addplot [CSP, sat, forget plot] table [
     	x=k, 
        y=csp-sat-50
    ] {csp-vary-k-10-NEW.dat};
    \addplot [CSP, unsat, mark=*] table [
       	x=k, 
        y=csp-unsat-50
    ] {csp-vary-k-10-NEW.dat};
    \addlegendentry{CSP (CP-SAT)}       
    
	%\addplot[sat, yellow, forget plot] table[
	%	x=k,
	%	y=oldmip-sat-50,
	%] {vary-k-10.dat};
	%\addplot[unsat, yellow] table[
	%	x=k,
	%	y=oldmip-unsat-50,
	%] {vary-k-10.dat};
	%\addlegendentry{UD-MIP (temp.)}
    
	%\addplot[sat, orange, forget plot] table[
	%	x=k,
	%	y=sat4j-cp-sat-50,
	%] {vary-k-10.dat};
	%\addplot[unsat, orange] table[
	%	x=k,
	%	y=sat4j-cp-unsat-50,
	%] {vary-k-10.dat};
	%\addlegendentry{UDPB CP (temp.)}    
\end{semilogyaxis}
\end{tikzpicture}
\end{flushright} 
\caption{The solid lines correspond to unsatisfiable instances, and dashed to satisfiable instances.
% \AJP{Note that for PBT, the n=5k, n=10k, and n=100k lines are very similar. Presumably, it is all about the UI, and they don't "see n".}
}
\end{subfigure}
\\[3ex]
\begin{subfigure}{\textwidth}
	\begin{flushright} 
	\begin{tikzpicture}[trim axis right,trim axis left]
        \begin{semilogyaxis}[		
            compat=newest,
            width=\textwidth,
            height=4.5cm,
            legend style={at={(0.6, 0.95)}},
            xlabel={Number of steps $k$},
            ylabel={Time over $2^{0.5k}$},
            title={},
            legend cell align=left,
            grid=major,
            xmin=17,
            xmax=59]       
		\addplot+[name path=A, red!50!black, thin, solid, mark=none, forget plot] table[
            x=k,
            y expr={\thisrow{pbt-unsat-35} / (2^(0.5  * x ))},
        ] {vary-k-10.dat};
        \addplot+[name path=B, red!50!black, thin, solid, mark=none, forget plot] table[
            x=k,
            y expr={\thisrow{pbt-unsat-65} / (2^(0.5  * x ))},
        ] {vary-k-10.dat};

        \addplot[red, opacity=0.2, forget plot] fill between[of=A and B];
        
%        \addplot[red, ultra thick] table[
        \addplot[red, mark=x, mark size=4pt] table[
            x=k,
            y expr={\thisrow{pbt-unsat-50} / (2^(0.5  * x ))},
        ] {vary-k-10.dat};
		\addlegendentry{PBT unsat}
        
        %\addplot[blue, ultra thick, unsat, opacity=0.5, domain=18:58] 
   		%{
    	%	3.5e-6 * 2^(0.05  * x)
	    %};
		%\addlegendentry{$3.5 \cdot 10^{-6} \cdot 2^{0.5k}$}
        
        \addplot[blue, ultra thick, unsat, opacity=0.5, domain=18:58] 
   		{
    		(4e-5 * 2^(0.076  * x * log2(x) )) / (2^(0.5  * x ))
	    };
		\addlegendentry{$\text{const} \cdot 2^{k \cdot \log_2{k} / 13.2}$}        
    \end{semilogyaxis}
    \end{tikzpicture}
	\end{flushright} 
	\caption{
        PBT unsat running time with the [35--65] percentile range (shaded) as an indication of a confidence interval on the medians.
        The vertical axis is rescaled to better show the fit.
        Any exponential function would be a straight line in this plot, however empirical running time appears to be curved.
        The blue line appears to be an accurate approximation of the running times, demonstrating that scaling of PBT running time closely follows $B_k$, with a speed-up power factor 13.2.
        %\AJP{COMMENT: yes i prefer this version as is easier to explain and makes it much visually clearer the actual behaviour is not a plain exponential. Any chance to give Rsquared values? Note the value is probably very good for the manifestly  bad fit as well; but better for the proper one. out of curiosity, but not for the paper: Is there a statistical test that would reject the plain exponential fit? not sure how to do this except maybe to fit on a subrange of small k, and then test on large k. After all the point is to predict the value at larger k.  }
        %\DK{To AJP: What is Rsquared?  Sum of squared residuals?}
        %\AG{Yes, this new figure looks much better: the blue line follows the shaded region. Description in the text should make it more clear what's going on.}
        %\DK{Andrei, the explanation is there already -- beginning of next page.}
%\AJP{maybe we can internally also do a similar plot for the PBPB(res) and CSP unsat data, and just report on the speedup power factor. 
%Their speedup factor seems to be very similar, but maybe just a bit smaller? }
}
    \label{fig:pbt-unsat-fit-scaled}
\end{subfigure}
\\[3ex]
\begin{subfigure}{\textwidth}
	\begin{flushright} 
     \begin{tikzpicture}[trim axis right,trim axis left]
        \begin{axis}[		
            compat=newest,
            width=\textwidth,
            height=3.5cm,
            legend pos=north west,
            xlabel={Number of steps $k$},
            ylabel={$e_{50}$},
            title={},
            %legend cell align=left,
            grid=major,
            xmin=17,
            xmax=59]       
        ]
        \addplot[ultra thick] table[
            x=k,
            y=e,
    ] {vary-k-10.dat};
        \end{axis}
     \end{tikzpicture}
	\end{flushright} 
    \caption{
		The number $e_{50}$ of not-equals constraints at phase transition.
     }
     \label{fig:density-vs-k}
     \label{fig:e-vs-k}
\end{subfigure}
\caption{
	Evaluation of algorithms' performance along the ``vary-$k$'' slice, i.e.\ $\mathit{WIG}(k, 10k, e_{50}, k)$.}
\label{fig:performance}
\end{figure}
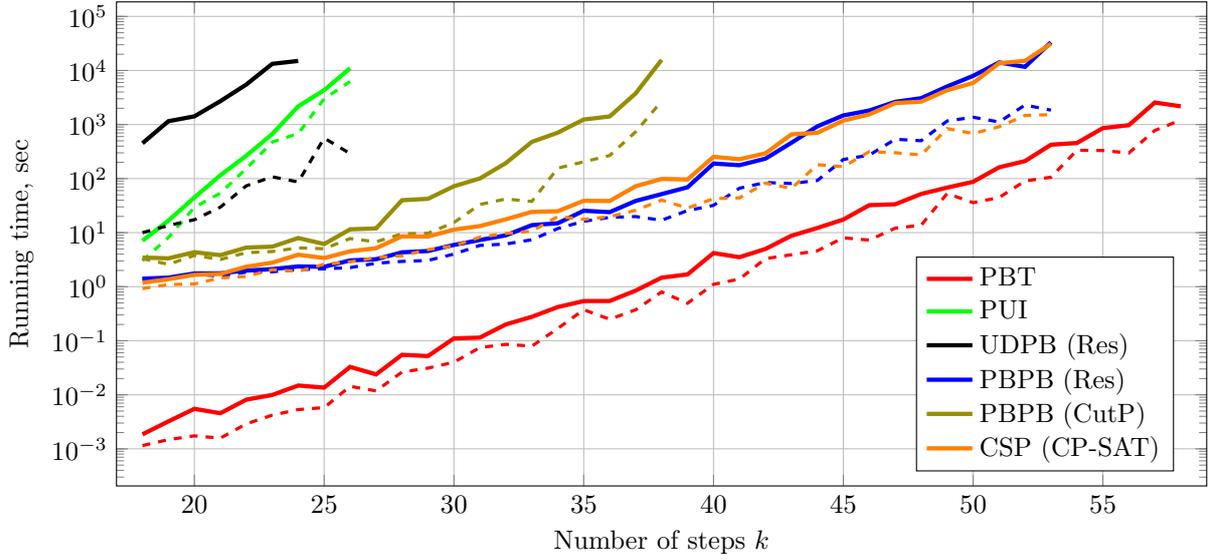
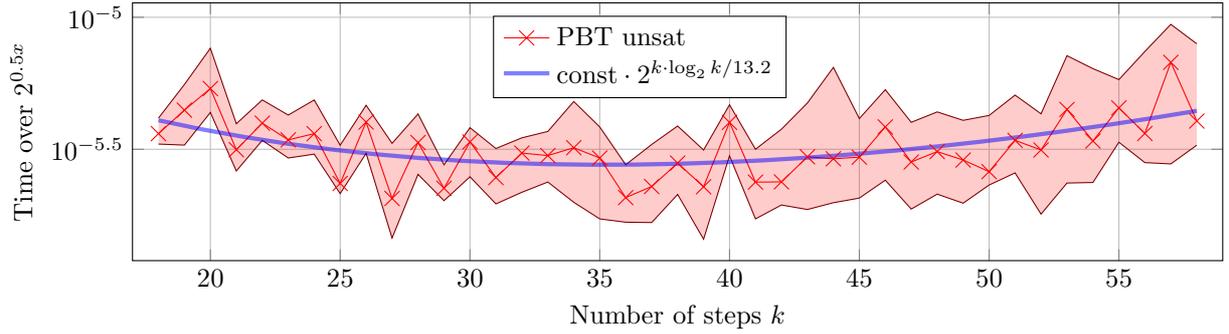
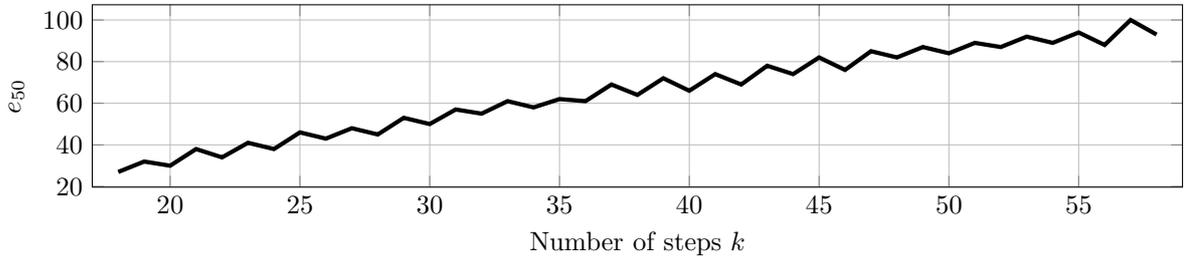

 Crucially, the new PBPB (Res) and PBT (Res) both drastically outperform the previous UDPB (Res) and PUI, showing lower growth rate and also being significantly faster even on small instances.
 We first look at the scaling of the best performing PBT, on the unsatisfiable instances as the discussion of scaling of satisfiable instances can be obscured by finding solutions early in the search tree.
 Although it is not immediately obvious, the scaling of the PBT on unsatisfiable instances in Figure~\ref{fig:performance} is slightly super-exponential; the empirical curve bends slight upwards and so $2^{ak}$, with a constant $a$, does not give a convincing fit. 
Deep analysis of the average case effects of heuristic improvements in such tree-based search is not yet possible, but generally the expectation, based on experience, is that heuristics will improve the coefficients in the exponents but retain the form.
 Recalling from Section~\ref{sec:worst-case-analysis}, that the number of patterns scales as $2^{\Theta(k\log k)}$, it is reasonable to compare the empirical scaling to $2^{(k \log_2 k) / b}$ for some empirically determined constant $b$. 
 In Figure~\ref{fig:pbt-unsat-fit-scaled} we show that a good fit is a function $2^{k \cdot \log_2 k/13.2}$, confirming our assumption and indicating the effectiveness of the branching heuristics and pruning.

\begin{figure}[htb]
\pgfplotsset{yticklabel style={text width=2.5em,align=right}}
\begin{tikzpicture}
	\begin{semilogyaxis}[
		compat=newest,
		width=0.97\textwidth,
		height=5cm,
		legend pos=north east,
		xlabel={Number of steps $k$},
		ylabel={PBPB / PBT},
		title={},
		legend cell align=left,
		grid=major,
        ymin=10,
        xmin=17,
        xmax=59
	]
	\addplot[sat] table[
		x=k,
		y expr=\thisrow{mxsat-sat-50} / \thisrow{pbt-sat-50},
	] {vary-k-10.dat};
	\addlegendentry{Satisfiable}

	\addplot[unsat] table[
		x=k,
		y expr=\thisrow{mxsat-unsat-50} / \thisrow{pbt-unsat-50},
	] {vary-k-10.dat};
	\addlegendentry{Unsatisfiable}
	\end{semilogyaxis}
\end{tikzpicture}
\caption{
	Comparison of PBPB and PBT performance.
	The vertical coordinate is the ratio between the median running time of PBPB (Res) and median running time of PBT.
    }
\label{fig:ratio}
\end{figure}
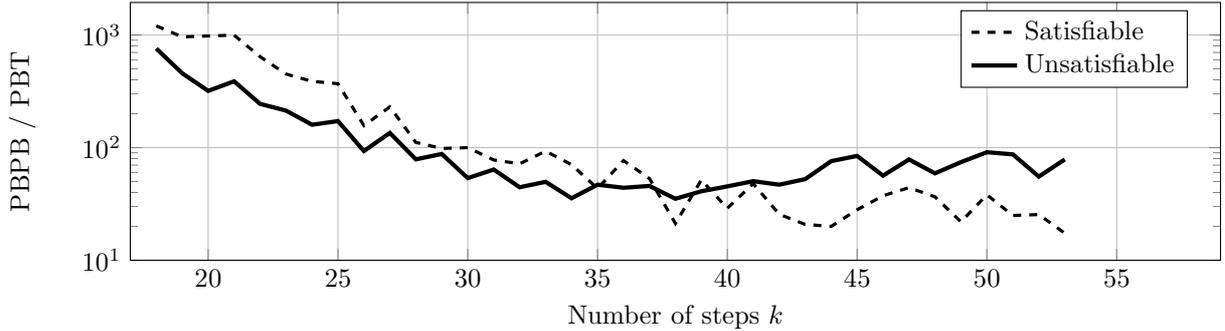

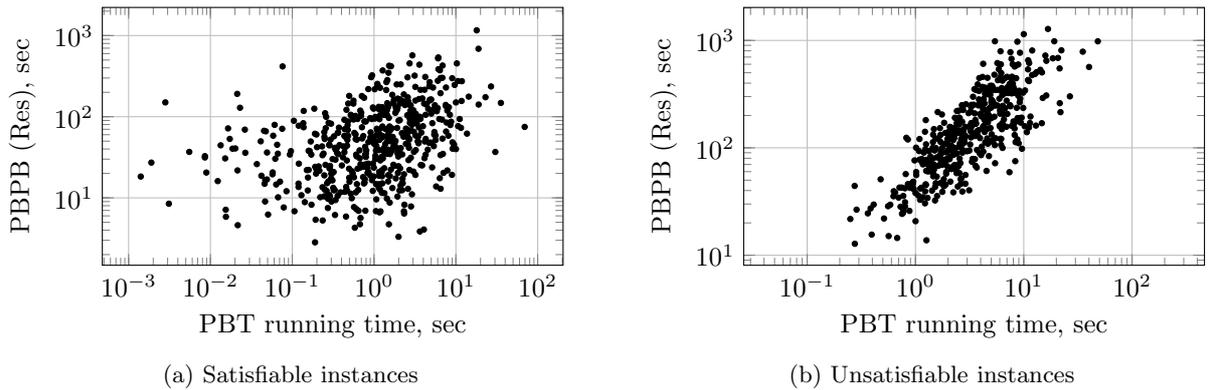
\begin{figure}[htb]
	\begin{subfigure}{0.48\textwidth}
		\skipfigure{
		\begin{tikzpicture}
		\begin{loglogaxis}[
	        axis equal,
			width=\textwidth,
			xlabel={Running time, sec},
			height=5cm,
			grid=major,
			xlabel={PBT running time, sec},
			ylabel={PBPB (Res), sec},
			tick label style={/pgf/number format/fixed}]
		\addplot [only marks, black, mark size=1pt]
			table [x=pbt-sat, y=mxpb-sat] {correlation-sat.dat};

		\end{loglogaxis}
		\end{tikzpicture}
		}
		\caption{Satisfiable instances}
	\end{subfigure}
	\hspace{0.04\textwidth}
	\begin{subfigure}{0.48\textwidth}
		\skipfigure{
		\begin{tikzpicture}
		\begin{loglogaxis}[
	        axis equal,
			width=\textwidth,
			xlabel={Running time, sec},
			height=5cm,
			grid=major,
			xlabel={PBT running time, sec},
			ylabel={PBPB (Res), sec},
			tick label style={/pgf/number format/fixed}]
		\addplot [only marks, black, mark size=1pt]
			table [x=pbt-unsat, y=mxpb-unsat] {correlation-unsat.dat};

		\end{loglogaxis}
		\end{tikzpicture}
		}
		\caption{Unsatisfiable instances}		
	\end{subfigure}
	\caption{
		Correlation between PBT and PBPB (Res) running times on instances of size $k = 40$.
		1000 instances used in this experiment (about 500 satisfiable and 500 unsatisfiable).
	}
	\label{fig:correlation}
\end{figure}

 The next observation is that PBT is faster than PBPB (Res) by one to two orders of magnitude, but that the scaling behaviours are similar.
 A more accurate comparison can be done by inspecting Figure~\ref{fig:ratio}.
 While the limited range of $k$ and noise do not allow us to convincingly make any definite conclusions, we hypothesise that the ratio of the runtimes between PBT and PBPB (Res) is close to a constant for large $k$.
 Also, PBPB (Res) is relatively slow on small instances, presumably because of an expensive initialisation/preprocessing, normal for an off-the-shelf solver, but this does not affect the method's scalability.
 The similarity of the scaling on larger instances ($k > 35$) supports our hypothesis from Section~\ref{sec:branching-strategies} that the search processes of PBPB (Res) and PBT could well be similar.

It is also clear that PBPB (Res) and CSP (CP-SAT) show (remarkably) similar performance, despite CSP (CP-SAT) not explicitly using patterns.
In this paper, we treat the PBPB and CSP solvers in an ``off-the-shelf blackbox'' fashion and do not study their internal workings. 
However, CP-SAT is built on top of a SAT solver, and so we speculate that the search in the SAT solver effectively mimics a pattern-based reasoning. This could be directly through the the $v_k$ variables. Alternatively, conversion to SAT could introduce a new variable for each term or predicate that occurs in the constraints, including the terms $x_i = x_j$ and $x_i \neq x_j$ in (\ref{eq:csp-not-equals}), (\ref{eq:csp-at-least}) and (\ref{eq:csp-at-most}) -- such Boolean variables would play a similar role to the pattern variables $M_{ij}$.

 We also observe that the ratio between the solution time of unsatisfiable and satisfiable instances steadily grows for PBPB (Res), while it stays roughly constant for PBT\@.
 This may be explained by the fact that the PB solver is likely to employ some heuristics for ordering search branches; these heuristics generally improve the running time of the solver on satisfiable instances while leaving its performance on unsatisfiable instances intact.
 PBT does not currently have any such heuristic; in our attempt to implement one, the gain was comparable to the overheads, and, thus, we dropped the branch ordering heuristic in our final implementation of PBT\@.

 We also directly investigated whether the running times of PBPB (Res) correlate with the running times of PBT, see Figure~\ref{fig:correlation}.
 On satisfiable instances the correlation is relatively weak which is natural as the running time depends on the branching decisions which differ in the two algorithms.
 On unsatisfiable instances the correlation is much stronger which again shows that, although the individual branching decisions of the two algorithms may be different, the effectiveness of the heuristics is comparable.
 
The gap between the running times on satisfiable and unsatisfiable instances is relatively small (within one order of magnitude) for all the solvers except for UDPB (Res).
We hypothesise that this difference is due to the inherent symmetry of the UDPB formulation, meaning that there are many valid plans and there is a high probability of finding one well before exhausting the entire search tree.
In contrast, the number of valid patterns in a PT instance is likely to be small, and one is likely to be found only after searching a significant part of the search tree.
 Note that PBPB (Res) is still superior to UDPB (Res) on satisfiable instances, as the plans search tree is much larger than the patterns search tree when $n \gg k$.

\begin{figure}[htb]
\pgfplotsset{yticklabel style={text width=2.5em,align=right}}
\begin{tikzpicture}
	\begin{semilogyaxis}[
		compat=newest,
		width=\textwidth,
		height=7cm,
		legend pos=north west,
		xlabel={Number of steps $k$},
		ylabel={Running time, sec},
		title={},
		legend cell align=left,
		grid=major,
        %unbounded coords=jump,
        xmin=17,
        xmax=59
	]
	\addplot[blue, ultra thick, dashed] table[
		x=k,
		y=beta0.5,
	] {vary-k-betas.dat};
	\addlegendentry{$\beta = 0.5$}

	\addplot[green, ultra thick, dashed] table[
		x=k,
		y=beta0.75,
	] {vary-k-betas.dat};
	\addlegendentry{$\beta = 0.75$}

	\addplot[red, ultra thick] table[
		x=k,
		y=beta1,
	] {vary-k-betas.dat};
	\addlegendentry{$\beta = 1$ (PT)}

	\addplot[gray, ultra thick] table[
		x=k,
		y=beta1.25,
	] {vary-k-betas.dat};
	\addlegendentry{$\beta = 1.25$}

	\addplot[black, ultra thick] table[
		x=k,
		y=beta1.5,
	] {vary-k-betas.dat};
	\addlegendentry{$\beta = 1.5$}
\end{semilogyaxis}
\end{tikzpicture}
\caption{
	Performance of PBT on instances outside the PT region (obtained by changing $\beta$).
}
\label{fig:different-beta}
\end{figure}
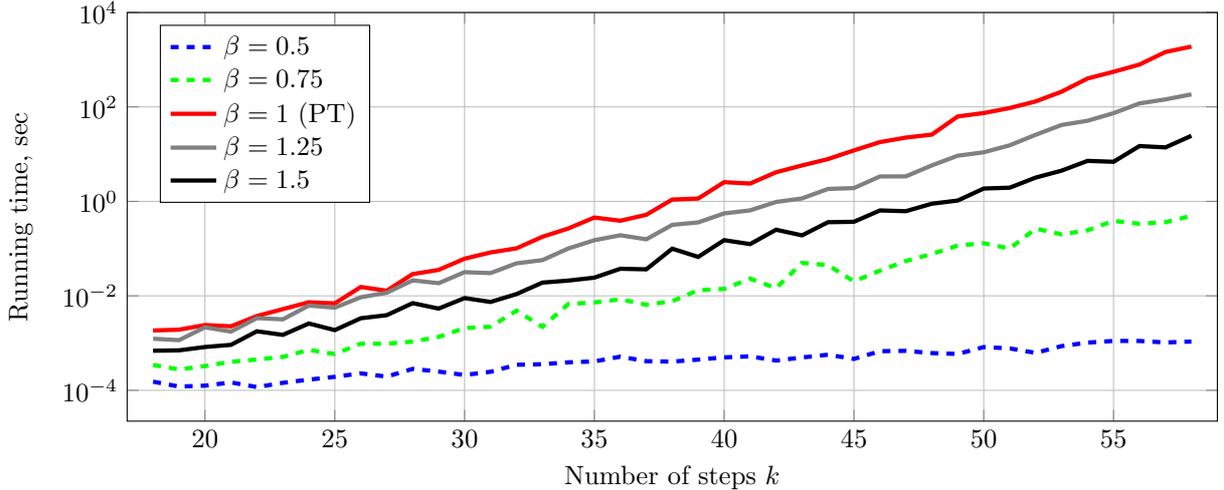
 
 Although we argued that good experiments should seek instances in the PT region of parameter space, it is still interesting to verify the performance of PBT on under- and over-constrained instances.
 To build such instances in a consistent way, we define a new parameter $\beta$, and study instances $\mathit{WIG}(k, 10k, \beta e_{50}(k, 10k, k), \beta k)$, i.e.\ the PT instances with the number of constraints scaled by $\beta$.
 Figure~\ref{fig:different-beta} shows how the scaling changes as we move away from the PT, $\beta = 1.0$. 
 It shows the classic so-called ``easy-hard-easy'' behaviour. 
 Below the PT ($\beta < 1$) the runtimes and scaling are much better than at the PT; that is most of the instances have many solutions, and so solving will terminate early.
  Above the PT ($\beta > 1$) most of the instances are unsatisfiable, but the pruning will increase, and this is reflected in the improved scaling.

%%%%%%%%%%%%%%%%%%%%%%%%%%%%%%%%%%%%%%%%%%%%%%%%
\subsection{Performance Comparison: Slice ``fixed-k"} \label{sec:fixed-steps}

 All the FPT algorithms discussed in this paper are designed to solve large WSP with relatively small number $k$ of steps.
 This reflects the fact that in large organisations there might be thousands of users with only tens of steps in an instance.
 Thus it is of both theoretical and practical importance to evaluate scalability of the approaches with regards to the number of users $n$.
 
\begin{figure}[bth]
%\pgfplotsset{yticklabel style={text width=3em,align=right}}
\begin{subfigure}{\textwidth}
	\begin{flushright}
    \begin{tikzpicture}[trim axis right,trim axis left]
		\begin{loglogaxis}[
			width=0.97\textwidth,
			xlabel={Number of users $n$},
			ylabel={Running time, sec},
			height=10cm,
			grid=major,
            legend cell align=left,
            legend pos=north east,
            xmin=30,
            xmax=1000000000,
            ymax=50000,
            ymin=0.001
        ]
		
        \addplot [PBT, sat, forget plot] table [x=n, y=pbt-sat] {vary-n-NEW.dat};
        \addplot [PBT, unsat] table [x=n, y=pbt-unsat] {vary-n-NEW.dat};
        \addlegendentry{PBT}

%% AJP:
%% I think vary-n.dat if from DK laptop?
%% vary-n-NEW.data is from the rebuilt forlond
%% they seem essentially the same so can use either?
%% or will rerun the PBT when have possible.

        %\addplot [PUI, sat, forget plot] table [x=n, y=pbt-sat] {vary-n-NEW.dat};
        %\addplot [PUI, unsat] table [x=n, y=pbt-unsat] {vary-n-NEW.dat};
        %\addlegendentry{PBT NEW}

        \addplot [PUI, sat, forget plot] table [x=n, y=ui-sat] {vary-n.dat};
        \addplot [PUI, unsat] table [x=n, y=ui-unsat] {vary-n.dat};
        \addlegendentry{PUI}

        \addplot [UDPB, sat, forget plot] table [x=n, y=pb-sat] {vary-n.dat};
        \addplot [UDPB, unsat] table [x=n, y=pb-unsat] {vary-n.dat};
        \addlegendentry{UDPB (Res)}
        
        \addplot [PBPB, sat, forget plot] table [x=n, y=mx-sat] {vary-n.dat};
        \addplot [PBPB, unsat] table [x=n, y=mx-unsat] {vary-n.dat};
        \addlegendentry{PBPB (Res)}
		
        \addplot [PBPB CP, sat, forget plot] table [x=n, y=mxpbcp-sat] {vary-n.dat};
        \addplot [PBPB CP, unsat, mark=*] table [x=n, y=mxpbcp-unsat] {vary-n.dat};
        \addlegendentry{PBPB (CutP)}

        \addplot [CSP, sat, forget plot] table [x=n, y=csp-sat] {vary-n-NEW.dat};
        \addplot [CSP, unsat, mark=*] table [x=n, y=csp-unsat] {vary-n-NEW.dat};
        \addlegendentry{CSP (CP-SAT)}

%%%% AJP: From separate figure
        %\addplot [MxMIP, sat, forget plot] table [x=n, y=mip-sat] {vary-n.dat};
        %\addplot [MxMIP, unsat] table [x=n, y=mip-unsat] {vary-n.dat};
        %\addlegendentry{MxMIP}
		
%%%% above From separate figure

%        \addplot [gray, ultra thick, dotted, domain=50:1000] {0.3e-6 * x^(4)};

%        \addplot [gray, ultra thick, dotted, domain=500:20000] {30e-3 * x^(0.5*log10(x))};

        \addplot [green, line width=3mm, opacity=0.2, domain=2000:150000] {0.003 * x^(1.25)};
        \addlegendentry{$\text{const} \cdot n^{1.25}$}
        
        % \addplot [blue, line width=3mm, opacity=0.2, domain=10000:150000] {0.00000012 * x^(2.0)};
        % \addlegendentry{$\text{const} \cdot n^{2.0}$}

        \addplot [olive, line width=3mm, opacity=0.2, domain=2000:300000] 
        	{0.002 * x^(0.9)};
        \addlegendentry{$\text{const} \cdot n^{0.9}$}

%         \addplot [green, line width=3mm, opacity=0.2, domain=8000:3000000] 
%         	{0.001 * x^(0.90)};
%         \addlegendentry{$\text{const} \cdot n^{0.90}$}

        %\addplot [orange, line width=3mm, opacity=0.2, domain=8000:3000000] 
        %	{0.000275 * x};
        %\addlegendentry{$\text{const} \cdot n$}

        \addplot [orange, line width=3mm, opacity=0.2, domain=50000:3000000] 
        	{0.00115 * x^(0.9)};
        \addlegendentry{$\text{const} \cdot n^{0.9}$}

        \addplot [orange, thin, decorate, decoration={coil,aspect=0}] coordinates {(2000, 0.001) (2000, 50000)};
        \addlegendentry{$n = n^*$}

        %\addplot [black, thick, solid, domain=36:300] {0.0025 * (x)^(0.46 * ln(x))};
        %\addlegendentry{$y = \text{const} \cdot x^{\text{const} \cdot \log{x}}$}
		\end{loglogaxis}
	\end{tikzpicture}
	\end{flushright}
    \caption{
    	Median runtimes for the different solvers, as a function of $n$.
    	Solid lines show running times for unsat instances and dashed lines for sat instances.}
	\label{fig:vary-n-loglog}
\end{subfigure}
\\[3ex]
\begin{subfigure}{\textwidth}
	\begin{flushright}
	\begin{tikzpicture}[trim axis right,trim axis left]
		\begin{semilogxaxis}[
			width=0.97\textwidth,
			xlabel={Number of users $n$},
			ylabel={$e_{50}$},
			height=4cm,
			grid=major,
            xmin=30,
            xmax=1000000000,
            ymin=0,
            ymax=45
        ]
		
        \addplot [black, ultra thick, mark=*] table [x=n, y=e] {vary-n-NEW.dat};
        %\addlegendentry{$e$}
        
        \addplot [orange, thin, decorate, decoration={coil,aspect=0}] coordinates {(2000, 0) (2000, 45)};
        
		\end{semilogxaxis}
	\end{tikzpicture}
	\end{flushright}
	\caption{
    	Number $e$ of not-equals constraints as a function of $n$ selected so as to be 50\% satisfiable.
        }
	\label{fig:vary-n-num-e}
\end{subfigure}
	\caption{
    	The ``fixed-$k$'' slice for $k = 18$, i.e.\ $\textit{WIG}(18, n, e_{50}, 18)$.
        %(a) Running time as a function of $n$ at the phase transition point.
        %(b) The number of edges $e$ required for the instances to be at the phase transition point. 
    	%  The running time of xPB is roughly proportional to $n^{\text{const} \log{n}}$.
}
\label{fig:vary-n}
\end{figure}
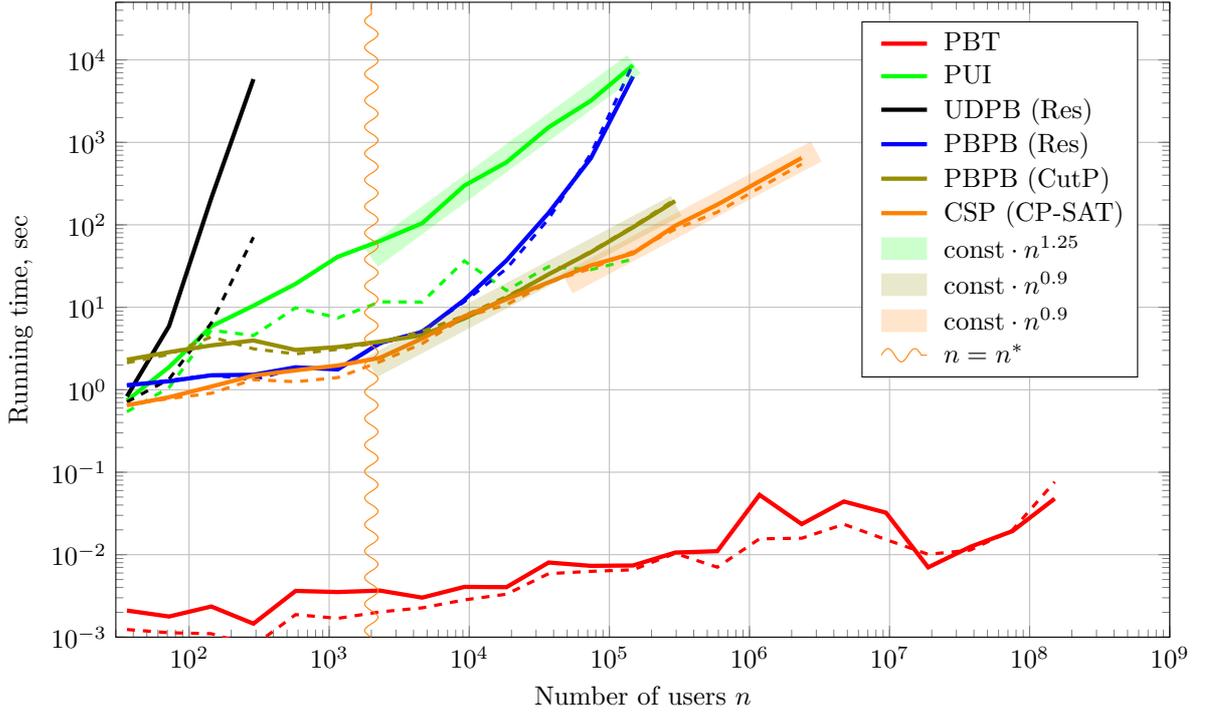
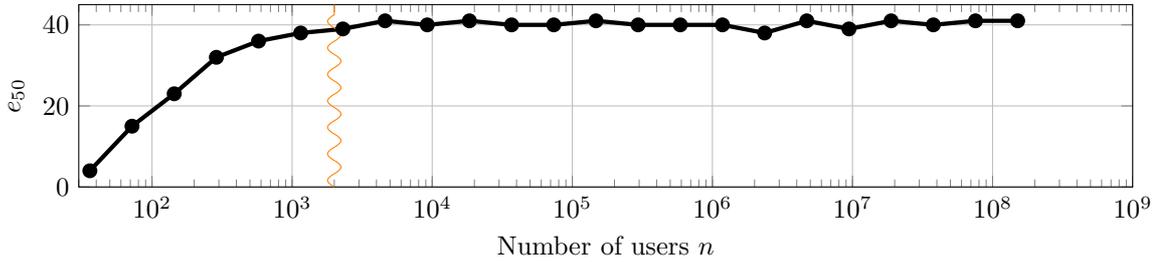

 Figure~\ref{fig:vary-n} reports on a set of experiments with $\mathit{WIG}(18, n, e_{50}, 18)$ instances.
 Note that a relatively small value of $k = 18$ was chosen to allow even the slowest algorithms to terminate in reasonable time.
 The (impractically large) maximum value of $n$ was selected to investigate behaviour of the algorithms.

 Figure~\ref{fig:vary-n-num-e} shows how the value of $e = e_{50}(18, n, 18)$ varies with $n$ so as to remain at the PT region.
 The corresponding results for the runtimes of the different algorithms are given in Figure~\ref{fig:vary-n-loglog}.
 Although all the instances at all values of $n$ are selected from a PT region, it seems that there are two regions above and below a dividing value of $n^* \approx 2000$. 
 For $n < n^*$ the number of not-equals constraints in the graph is increasing with $n$, but for $n > n^*$ the number of the not-equals constraints in the graph is roughly constant, and the properties of the instances are practically independent of $n$.

 The simplest behaviour to interpret is that of PUI on unsatisfiable instances, which exhibits a scaling that is approximately proportional to $n^{1.25}$. 
 This slightly superlinear scaling is natural as the algorithm works through the users one at a time, and if the instance is highly constrained by the not-equals constraints then the work per user may well become roughly constant, with only a mild accumulation of new patterns and, hence, increase of the patterns pool and associated runtime costs.
 On satisfiable instances PUI has a potential to solve the problem after $O(k)$ iterations, i.e.\ with a perfect user ordering heuristic its running time could theoretically be independent of $n$.
 However, the real user ordering heuristic does not always pick the ``right'' users and the running time mildly increases with $n$, matching our expectations.
 
 The running time of PBT shows little dependency on $n$.
 Observe that the upper level of the search algorithm in PBT does not depend on $n$.
 Due to the heuristic described in Section~\ref{sec:authorisations-pruning}, the size of the assignment graph (the lower level) is bounded by $k^2$, i.e.\ does not generally grow with $n$.
 Hence, only the generation of the assignment graph depends on $n$ in PBT\@.
 However, the larger the value of $n$, the less likely that full scans of the list of users are needed.
 As a result, PBT demonstrates sub-linear scaling in this experiment, solving instances with more than $10^8$ users in under a second.
 Even if not directly practical in case of WSP, this result shows that a careful design and implementation of an FPT algorithm has a potential to routinely address very large problems.

 In contrast to the experiments along the ``vary-$k$'' slice, the PBPB (Res) scaling on the ``fixed-$k$'' slice is not similar to that of PBT, and is possibly worse than polynomial (as the slope increases on the log-log plot).
 On the other hand, the PBPB (CutP) and CSP (CP-SAT), both show very good performance, demonstrating sub-linear scaling (roughly $n^{0.9}$), and outperforming PBPB (Res) on large $n$'s.
 For PBPB (CutP), it is natural to hypothesise that this is because at large $n$ the matching problem becomes even more important, making the cutting planes proof system more suitable.
 For CSP (CP-SAT), we can hypothesise that, unlike the SAT4J solver, the OR~Tools SAT solver efficiently discovers strategies similar to the one proposed in Proposition~\ref{th:matching-polysize}, and thus solves the problem in FPT time.
 This behaviour is interesting, and good news for SAT solvers, but requires future investigation as there might be potential for significant improvement to enhance the solver's performance using the fact that the problem is FPT.

 We note here that the off-the-shelf solvers (PBPB~(Res), PBPB~(CutP) and CSP~(CP-SAT) all compete with each other in the `vary-$k$' and `fixed-$k$' slices (see also \appendixD{} for the $n = 100k$ slice), however none of them is a universal winner.
 It is clear though that, as expected, the resolution proof system performs better than the cutting planes resolution system on the `vary-$k$' slice.

%%%%%%%%%%%%%%%%%%%%%%%%%%%%%%%%%%%%%%%%%%%%%%%%%%%%%%%%%%%%%
\section{Conclusion}  
\label{sec:conclusion}
 
 In this paper, we studied the Workflow Satisfiability Problem (WSP), with User Independent (UI) constraints, which admits FPT algorithms.
 Employing a new view of the FPT nature of the problem and the AI methods, we designed a new FPT algorithm, PBT, that significantly outperforms the previous state-of-the-art algorithm, extending the size of instances that can be reasonably tackled from $k \approx 20$ to $k \approx 50$.
 This is partly due to the fact that PBT is the first FPT algorithm for WSP with UI constraints that has polynomial space complexity.
 We believe that an important lesson is that having found an FPT algorithm for a problem should be just a starting point for designing a practical algorithm as there are still likely to be many opportunities for significant improvement via the repertoire of intelligent heuristic search mechanisms.

 The direct study of algorithms for the WSP was also complemented with a study of PTs arising from a generator of WSP instances.
 We found strong evidence of PT phenomena in the same fashion as previous extensive studies within graph theory and AI\@.

 While typical PT studies deal with a single size parameter, FPT implies that there are two `size parameters', both playing important role in the complexity of the problem.
 This paper gives a novel combination of studying FPT and PT and the use of multiple `slices' of the size space for a thorough empirical study of scaling of algorithms.

A common approach to solving decision problems in practical computing is to formulate them using a general-purpose declarative language and then using off-the-shelf solvers.
 Naturally, one may be interested in representations, with which off-the-shelf general solvers result in similar scaling to the direct implementations of FPT algorithms.  
 We did observe such good behaviour of our new PBPB formulation solved by SAT4J (in resolution proof system mode) on the ``vary-$k$'' slice, in which the scaling was roughly as good as the PBT solver, and a lot better than that of the previous solvers.
 A very similar performance was observed from a CSP solver even though our CSP formulation did not explicitly exploit our understanding of the FPT nature of the problem.
 We expect that the internal mechanism of handling our CSP formulation by CP-SAT in fact leads to an internal search process similar to pattern-based methods of PBPB and the overall winner PBT.

 Although the running time of the general-purpose solvers was by one or two orders of magnitude worse than that of PBT, as might be expected, it indicated that the solvers were able to determine a good search strategy. 
 Furthermore, one may expect some additional efficiencies of the general purpose solvers when applied to real instances, as an off-the-shelf solver is more likely to be able to exploit associated structures.
 
The behaviour of these off-the-shelf solvers was not as good in the ``fixed-$k$'' slice experiments compared to PBT\@.
 The PBPB (Res) solver demonstrated, apparently, exponential scaling despite us showing in Proposition~\ref{thm:PB-is-FPT} that a simple tree-based solver is capable of exhibiting FPT time, i.e.\ polynomial scaling in ``fixed-$k$'' slice.
 Also, the memory usage by the general-purpose solver was a bottleneck; e.g.\ CSP~(CP-SAT) consumed around 1.5-2~GB of RAM at $k = 18$ and $n \approx 2.4 \cdot 10^6$.
 In comparison, for such a problem, PBT takes around 1~MB of RAM on top of the size of instance data itself (about 20~MB).
 This indicates that there is a good potential for improvement of current off-the-shelf solvers to enable them to take advantage of FPT properties.

%%%%%%%%%%%%%%%%%%%%%%%%%%%%%%%%%%%%%%%%%%
\subsection{Future Directions}  
\label{sec:future}

 A natural direction for future research is to further improve the performance of the PBT algorithm.
 One can investigate improving the pruning from the authorisations by adding extra lookahead; further improving the branching heuristics (possibly by exploiting machine learning for adaptive search) and also finding branch ordering selections that give a net gain on the satisfiable instances.
 A particularly important direction arises from noting that PB solvers on the PBPB encoding are likely to be benefiting from learning and storing of no-goods (entailed constraints). 
 It would be interesting to consider how PBT could be enhanced with such no-good learning, and in such a way that is compatible with FPT -- enhancing the FPT-driven two-layer nature of the solution, rather than breaking it.
 The PT properties of the set of generated instances should also be mapped in more detail, along with consideration of a wider range of UI constraints. 
 Such an enhanced study of the PT properties should be also exploited for further evaluation of proposed algorithms.
 
 Although the combination of PBPB and SAT4J in the resolution proof system mode worked well, there was evidence from the ``fixed-$k$'' that in some regions of the space of instances it was performing less well.
 In particular, scaling with $n$ seemed to be non-polynomial, and this deserves further investigation.
 Possibly, general PB solution methods need different branching heuristics, or could be extended to better exploit the matching problem (equivalently, list colouring of a clique) that arises as a vital sub-problem when using the PBPB formulation.
We emphasise that here we do not attempt to systematically cover `all encodings', but to take some representative and natural ones and study their behaviour from an FPT perspective. However, we believe the methodology and initial results give a foundation for a future systematic study of combinations of encodings and solvers.

 We propose this study as a contribution towards ensuring that general purpose solvers are appropriately effective on FPT problems; and give an example of developing a formulation that enables solvers to exploit the inherent FPT properties of the problem.  
 A future challenge in AI may be to study how a general-purpose solver can automatically discover such formulations. 

 An important outcome is that the combination of decomposition and FPT ideas leads to new highly-effective algorithm, and then combining FPT with PT ideas give a powerful framework for empirical study.
  Our PT study of WSP also revealed interesting challenges in empirical average time complexity studies for FPT problems.
 We proposed using multiple slices through the size space while adjusting other parameters of the instance generator to stay in the PT region.
 However, it is still an open question how to best select these slices, or indeed how to do a more integrated study of the effects of the multiple parameters.
 Considering the general interest of the AI community in analysis of a widening range of complexity classes (e.g.~\shortcite{BaileyEtAl2007:PT-in-PP}) (including studies of FPT problems, see e.g.~\shortcite{DeHaan:2015:IJCAI:FPT-planning,HaanEtal2013:IJCAI-FPT-planning}) and understanding of practical implications of these complexity classes, we believe that further development of the study of interactions between FPT and PT offers the potential for deeper insight into computational challenges arising in AI.

%%%%%%%%%%%%%%%%%%%%%%%%%%%%%%%%%%%%%%%%%%%%%%%%%%%%%%%%%%%%%
\bigskip
\paragraph{Acknowledgements} We would like to thank the reviewers for the valuable comments that helped us to improve the paper.
 This research was partially supported by EPSRC grants EP/H000968/1 (for DK and AP), EP/K005162/1 (for AG and GG), and a Leverhulme Trust grant (GG).
 The source codes of the instance generator and PBT, as well as the test instances, solutions, translation routines and the experimental data, are publicly available~\cite{SourceCodes}.

\ifdefined\excludeappendix
\else
\clearpage

%%%%%%%%%%%%%%%%%%%%%%%%%%%%%%
\appendix
\section{Encoding Constraints} 
\label{ap:encoding-constraints}

 As noted in Section~\ref{sec:new-formulation}, any UI constraint can be formulated in terms of $M$ variables only.
 However, the straightforward approach adopting a list of all obeying or disobeying patterns may result into encoding of exponential size in the size of the constraint scope.
 Here we show some more compact encodings of several standard UI constraints.
 Let $c$ be the constraint to be encoded and $T_c = \set{s_1, s_2, \ldots, s_q}$.

\subsection{Easy Cases}

 For completeness, below we give a list of constraints that are easy to compactly encode with the $M$ variables.

\begin{itemize}
	\item 
    Not-equals, also called separation of duty: $M_{s_1, s_2} = 0$.
    
    \item
    Equals, also called binding of duty: $M_{s_1, s_2} = 1$.
    
    \item
    All-different, or at-least-$q$-out-of-$q$: $M_{s_i, s_j} = 0$ for every $1 \le i < j \le q$.
    
    \item
    Not-all-different, or at-most-$(q - 1)$-out-of-$q$: $\sum_{i = 1}^{q - 1} \sum_{j = i + 1}^{q} M_{s_i, s_j} \ge 1$.
\end{itemize}

%%%%%%%%%%%%%%%%%%%%%%%%%%%%%%%%%%%%%%%%%%%%%
\subsection{At-least and At-most Constraints with Scope Size up to Five}
\label{ap:counting-constraints-up-to-five}
 Let $G = (T_c, E)$ be a graph with vertex set $T_c$ and edges $E = \{ (s_i, s_j) :\; M_{s_i, s_j} = 1 \}$.
 Observe that $G$ uniquely represents a pattern on step set $T_c$ as defined by appropriate variables $M$'s; it consists of cliques only, with each clique corresponding to a block of the pattern.
 Recall that at-least and at-most constraints are restricting the number of distinct users to be assigned to the scope; thus they are step-symmetric, i.e.\ satisfiability of a single at-least/at-most constraint does not depend on the permutation of steps in its scope.
 In particular, the number of users assigned to the scope $T_c$ is exactly the number of cliques in $G$, and this number of cliques can often be determined by simple counting of edges in $G$.
 Figure~\ref{fig:counting-edges} shows all patterns (subject to step permutations) on scope of size 5 and gives the possible number of edges.

\newcommand{\scopegraph}[1][]{%
  	\begin{tikzpicture}[
		%>=stealth', 
		%shorten >=1pt, 
		thick,
		vertex/.style={circle, draw}]
		
		\foreach \i in {0,1,...,4}{
			\pgfmathtruncatemacro{\colour}{\colour[\i]}
			\node[vertex] (\i) at (\i*360/5: 1cm) {$u_{\colour}$};
		}
		
		\foreach \i in {0,...,3}{
			\pgfmathtruncatemacro{\icolour}{\colour[\i]};
			%\pgfmathtruncatemacro{\i1}{\i + 1};
			\foreach \j in {\i,...,4}{
				\pgfmathtruncatemacro{\jcolour}{\colour[\j]}
				\ifthenelse{\equal{\icolour}{\jcolour} \AND \j > \i}{\path[ultra thick, black] (\i) edge (\j);}{};
			}
		}

        %		edge [bend right] node[left] {0.3} (2)
		%		edge [loop above] node {0.1} (1);
	\end{tikzpicture}
}

\begin{figure}[htb]
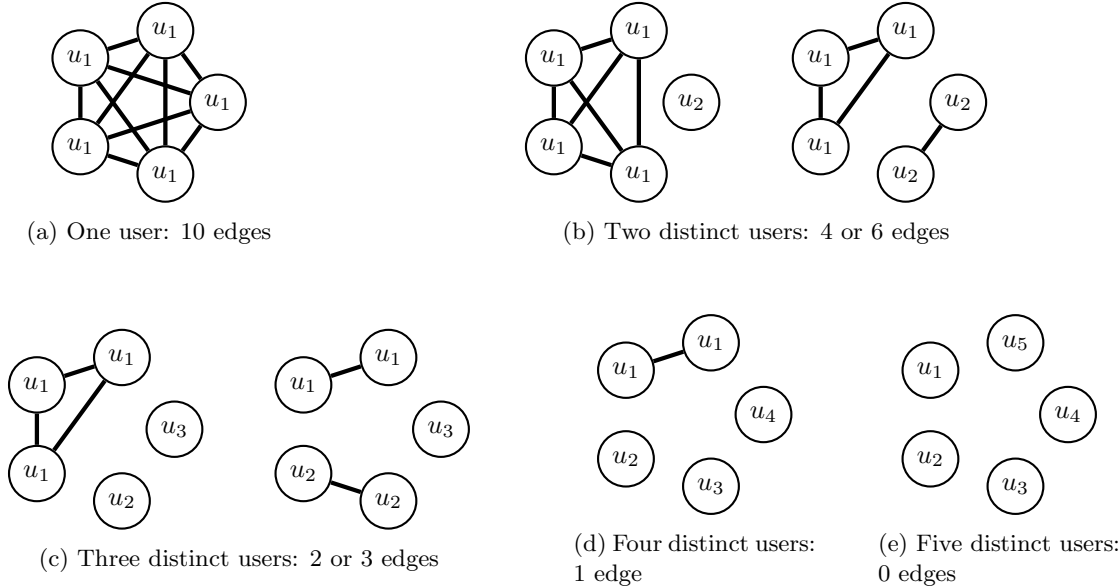

\centering
\begin{subfigure}{0.3\textwidth}
\centering
\def\colour{{1,1,1,1,1}} \scopegraph

\caption{One user: 10 edges}
\end{subfigure}
\qquad
\begin{subfigure}{0.6\textwidth}
\centering
\def\colour{{2,1,1,1,1}} \scopegraph
\qquad
\def\colour{{2,1,1,1,2}} \scopegraph

\caption{Two distinct users: 4 or 6 edges}
\end{subfigure}

\bigskip
\bigskip

\begin{subfigure}{0.45\textwidth}
\centering
\def\colour{{3,1,1,1,2}} \scopegraph
\qquad
\def\colour{{3,1,1,2,2}} \scopegraph

\caption{Three distinct users: 2 or 3 edges}
\end{subfigure}
\qquad
\begin{subfigure}{0.2\textwidth}
\centering
\def\colour{{4,1,1,2,3}} \scopegraph

\caption{Four distinct users: 1 edge}
\end{subfigure}
\qquad
\begin{subfigure}{0.2\textwidth}
\centering
\def\colour{{4,5,1,2,3}} \scopegraph

\caption{Five distinct users: 0 edges}
\end{subfigure}

\caption{
	Graphs $G$ illustrating of the user assignments within a scope of five steps.
    There are $B_5 = 52$ patterns with scope of size 5, however those differing only by a permutation of the steps are not given as they do not change the number of edges.
	}
\label{fig:counting-edges}
\end{figure}

\newcolumntype{L}[1]{>{\raggedright\let\newline\\\arraybackslash\hspace{0pt}}m{#1}}
\newcolumntype{C}[1]{>{\centering\let\newline\\\arraybackslash\hspace{0pt}}m{#1}}
\newcolumntype{R}[1]{>{\raggedleft\let\newline\\\arraybackslash\hspace{0pt}}m{#1}}
 
\begin{table}[htb]
 \centering
 \begin{tabular}{@{} c *{6}{C{1cm}} @{}}
 	\toprule
 	& \multicolumn{6}{c}{\# distinct users} \\
 	\cmidrule(l){2-7}
 	$|T_c| = q$	& 1 	& 2		& 3		& 4		& 5		& 6 \\
 	\midrule
 	2		& 1 	& 0		& --	& --	& --	& -- \\
 	3		& 3 	& 1		& 0		& --	& --	& -- \\
 	4		& 6 	& 2--3	& 1		& 0		& --	& -- \\
 	5		& 10	& 4--6	& 2--3	& 1		& 0		& -- \\
 	6		& 15	& \phantom{0}6--10	& 3--6	& 2--3	& 1		& 0 \\
 	\bottomrule
 \end{tabular}
 \caption{
 	This table gives the bounds $\underline{\sigma_{q, r}} \le |E| \le \overline{\sigma_{q, r}}$ for the number $|E|$ of edges in a graph $G$ for each scope size $q$ and number of distinct users.
    One can observe that counting the number of edges is sufficient to define at-least and at-most constraints with scope size up to 5.
 }
 \label{tab:sum-ranges}
\end{table}
 
 Table~\ref{tab:sum-ranges} gives the possible number of edges in $G$ for various scope sizes $q$ and numbers of distinct assigned users.
 One can see that for any $q \le 5$ the ranges do not overlap, indicating that it is possible to determine the number of distinct users assigned to $T_c$ by counting edges in $G$.
 However, at $q = 6$, some of the ranges overlap, and then not every at-least and at-most constraint with $q \ge 6$ can be encoded in this way.
 
 For $q \le 5$ and $1 \le r \le q$, let $\underline{\sigma}_{q, r}$ and $\overline{\sigma}_{q, r}$ be the lower and upper bounds, respectively, on the number of edges in $G$ with $q$ nodes and exactly $r$ cliques.
 Then we can formulate an at-most-$r$ constraint as
\begin{equation}
    \sum_{i = 1}^{q - 1} \sum_{j = i + 1}^{q} M_{s_i, s_j} \ge \underline{\sigma}_{q, r},
\end{equation}
and an at-least-$r$ constraint as
\begin{equation}
    \sum_{i = 1}^{|T| - 1} \sum_{j = i + 1}^{|T|} M_{s_i, s_j} \le \overline{\sigma}_{q, r}.
\end{equation}

 Encoding of at-least and at-most constraints with scope sizes above five is considered in the next section.

\subsection{Other Constraints}

 Here we give some other encodings that, among others, allow us to formulate at-least and at-most constraints with scope size above 5 and some other standard UI constraints.
 
 Observe that existence of at least $t$ cliques in $G$ (see \ref{ap:counting-constraints-up-to-five}) means that there is at least one independent set $T' \subset T_c$ of size $t$.
 Thus, to encode an at-most-$r$ constraint, we can request that there is no independent set of size $r + 1$:
\begin{equation}
	\sum_{s' < s'' \in T'} M_{s', s''} \ge 1 \text{ for every } T' \subset T_c,\ |T'| = r + 1.
\end{equation}
 This encoding requires $\binom{q}{r + 1}$ constraints and no new variables.
 
 Now consider a permutation $\sigma$ of $T_c$, and let $\sigma(q + 1) = \sigma(1)$.
 Observe that 
$$
	|\set{ i :\; i \in \set{1, 2, \ldots, q} \text{ and } (\sigma(i), \sigma(i + 1)) \notin E}| \ge t,
$$ 
where $t$ is the number of cliques in $G$.
 Hence, the following encodes an at-least-$r$ constraint:
 \begin{equation}
	\sum_{i = 1}^{q} (1 - M_{s_{\pi(i)}, s_{\pi(i + 1)}}) \ge r \text{ for every permutation $\pi$ of $T_c$.}
 \end{equation}
 Some of the constraints will be identical due to the step symmetry; as a result, we will need $(q - 1)! / 2$ constraints, and no new variables.

 \bigskip

 We also propose more compact encodings that involve creation of new variables.
 Let $G_i$ be a subgraph of $G$ induced by a node set $\set{s_1, s_2, \ldots, s_i}$; observe that $G_i$ also consists of cliques only.
 Let $t_i$ be the difference in the number of cliques between $G_i$ and $G_{i - 1}$ for $i = 2, 3, \ldots, q$, and $t_1 = 1$.
 Then the number of cliques in $G = G_q$ can be computed as $\sum_{i = 1}^q t_i$.
 
 Using variables $t$, we can encode an at-least-$r$ constraint with an arbitrary scope as follows:
\begin{align}
	& t_1 = 1,\\
	& t_i \le 1 - M_{s_j, s_i} \ \forall j < i,\ i = 2, 3, \ldots, q \text{ and}\\
    & \sum_{i = 1}^q t_i \ge r.
\end{align}
 This encoding takes $O(q^2)$ constraints and $O(q)$ new variables.

 Similarly we can encode an at-most-$r$ constraint with an arbitrary scope:
\begin{align}
	& t_1 = 1,\\
	& t_i \ge \sum_{j = 1}^{i - 1} (1 - M_{s_j, s_i}) - (i - 2), \ i = 2, 3, \ldots, q \text{ and}\\
    & \sum_{i = 1}^q t_i \le r.
\end{align}
 This encoding takes $O(q)$ constraints and $O(q)$ new variables.

\bigskip

 Another standard UI constraint is the \emph{generalised threshold constraint} $(t_l, t_r, T_c)$ which restricts the number of steps assigned to a user involved in execution of steps $T_c$~\cite{CrGuYe13}.
 More formally, each user assigned to at least one step $s \in T_c$ is required to execute between $t_l$ and $t_r$ steps in $T_c$.
 Observe that this constraint can be enforced by restricting the size of cliques in $G$ between $t_l$ and $t_r$:
 \begin{equation}
    t_l \le \sum_{j = 1}^{q} M_{s_i, s_j} \le t_r \text{ for } s_i \in T_c \,.
 \end{equation}

%%%%%%%%%%%%%%%%%%%%%%%%%%%%%%%%%%%%%%%%%%%%%%%%%%%%%%%%%%%%%%%%%%%%%%%%%%%%%%%%%%%%%%%%%%%%%
\section{Estimation of the Critical Point} 
\label{ap:estimated-PT}

 Here we give an (approximate) computation in the style of an `annealed estimate' of the average number of solutions given $k$, $n$, $\gamma$ and $e$; with the intent to use it to obtain indications of the location of the PT points.
 Since many of the probabilities depend on the number $b$ of blocks in the solution, we first compute the estimate for a given $b$ and then aggregate the results.
 
 The Stirling number $c(k,b)$ of the second kind is the number of ways to partition a set of $k$ labelled objects into $b$ nonempty unlabelled subsets. Equivalently, $c(k,b)$ is the number of different equivalence relations with precisely $b$ equivalence classes that can be defined on a $k$-element set.
 
 The number of patterns with $b$ blocks is exactly the Stirling number $c(k, b)$ of the second kind, and there are exactly $P(n, b) = \frac{n!}{(n - b)!}$ plans implementing a pattern with $b$ blocks.
 Consider a scope $T$ of size $|T| = q$.
 Let $p(q, r)$ be the probability of that $T$ intersect with exactly $r$ distinct pattern blocks.
 There are $b^q$ ways to assign blocks within $T$, and $c(q, r) \cdot P(b, r)$ ways to assign exactly $r$ distinct blocks.
 Hence, $p(q, r) = \frac{c(q, r) \cdot P(b, r)}{b^q}$.
 Then the probability that a random pattern (or plan, which is the same in this context) satisfies an at-most-$r$ constraint is $\sum_{r' = 1}^r p(q, r')$.
 The probability that a random pattern satisfies an at-least-$r$ constraint is $1 - \sum_{r' = 1}^{r - 1} p(q, r')$.
 Not-equals is a special case of at-least-$r$ with $q = 2$ and $r = 2$; hence, the probability that a not-equals hits a random pattern is $1 - p(2, 1) = 1 - \frac{1}{b}$ as one could predict.
 We conclude that the number $N^\text{elig}_\text{pat}(b)$ of eligible patterns with $b$ blocks is on average 
 \begin{equation}
	\label{eq:eligible-patterns}
	N^\text{elig}_\text{pat}(b) = c(k, b) \cdot \frac{1}{b^e} \cdot \left( \left( \sum_{r' = 1}^r p(q, r') \right) \cdot \left( 1 - \sum_{r' = 1}^{r - 1} p(q, r') \right) \right)^\gamma \,.
 \end{equation}

 \begin{equation}
	\label{eq:eligible-patterns-all}
	N^\text{elig}_\text{pat} = \sum_b N^\text{elig}_\text{pat}(b) 
 \end{equation}

 When \ref{eq:eligible-patterns-all} gives $N^\text{elig}_\text{pat} < 0.5$ then at least half the instances have no eligible pattern and so must be unsatisfiable, and so this gives an upper bound on the location of the PT. 
 However, it is relatively weak, and so we also need, unsurprisingly, to also take account of the authorisations.

 The probability of a random plan being authorised is $p^\text{auth}(b) = (1/4)^k$ as the probability of a single step being authorised is $1/4$.
 Then the number of valid plans, on average, is 
 \begin{equation}
	\label{eq:num-valid-plans}
   N^\text{valid}_\text{plans} = \sum_{b = 3}^k N^\text{elig}_\text{pat}(b) \cdot P(n, b) \cdot p^\text{auth}(b) \,.
 \end{equation}
 (Note that there are no eligible patterns with $b < 3$ due to at-least-3 constraints.) 
 We report the number $N^\text{elig}_\text{pat}(b)$ of eligible patterns, number $P(n, b)  \cdot p^\text{auth}(b)$ of authorised plans per pattern and the average number $N^\text{valid}_\text{plans}$ of valid plans in Table~\ref{tab:annealed-estimate}.
 Observe that $N^\text{valid}_\text{plans}$ is often well above 1, especially for large $b$, but this is mainly because of the huge number of authorised plans per pattern; the expected number of eligible patterns for large $b$ is negligible.
 This suggests that the distribution of $N^\text{valid}_\text{plans}$ is highly multimodal; averaged over the instances of the given parameters, the number of valid plans will be large but the majority of instances will have no valid plans at all.
 Nonetheless, \ref{eq:num-valid-plans} can be used to give an upper bound on the location of the PT because when the average number drops to 0.5, then, independently of the distribution, at least one-half of the instances must be unsatisfiable.
 
 Thus, the average number of valid plans might not be a practical indicator of the probability of instance satisfiability; to establish the PT parameters, we suggest a different technique.
 In particular, we exploit the two-layer nature of the problem.
 We estimate the probability $p^\text{auth}_\text{pat}(b)$ that a pattern with $b$ blocks is authorised, i.e.\ there exists at least one plan realising it, and then use it to compute the probability $p^\text{sat}(b)$ that there exists at least one valid plan.
 
 To obtain a rough estimate of $p^\text{auth}_\text{pat}(b)$ (which is then not an upper bound), we assume that all the blocks in the pattern are of the same size $k / b$.
 This might be justified as reasonable because
 \begin{enumerate}
     \item
     There are more patterns with roughly equal size blocks -- by standard counting;
 
 	 \item
     Authorisations are harder to satisfy for larger blocks and the increase is likely be super-linear in the block size -- and this will tend to promote solutions that have a fair distribution of block sizes.
 \end{enumerate}
 
 The probability of a single block being authorised by a random user is $(1/4)^{k / b}$, and the probability that there is at least one user authorised to a given block is $1 - \left(1 - \left(\frac{1}{4}\right)^{k / b}\right)^n$.
 To simplify calculations, we also relax the requirement that distinct blocks need to be assigned to distinct users, and conclude that
 \begin{equation}
   \label{eq:prob-authorised}
   p^\text{auth}_\text{pat}(b) = \left( 1 - \left[1 - \left(\frac{1}{4}\right)^{k / b}\right]^n \right)^b \,.
 \end{equation}

 Using (\ref{eq:prob-authorised}) we compute $p^\text{sat}(b)$ as follows:
 \begin{equation}
 \label{eq:prob-sat-b}
 p^\text{sat}(b) = \begin{cases}
   		N^\text{elig}_\text{pat}(b) \cdot p^\text{auth}(b) & \text{if } N^\text{elig}_\text{pat}(b) < 1, \\
   		1 - \left(1 - p^\text{auth}(b)\right)^{N^\text{elig}_\text{pat}(b)} & \text{if } N^\text{elig}_\text{pat}(b) \ge 1. \\     
   \end{cases}
 \end{equation}
 Then the probability $p^\text{sat}$ of existence of at least one valid pattern is $1 - \prod_{b = 3}^k 1 - p^\text{sat}(b)$, and the PT region can be established by finding parameters, using \ref{eq:prob-sat-b}, leading to $p^\text{sat} = 0.5$.

 Observe that $p^\text{sat}(b)$ is very low at high $b$ (see Table~\ref{tab:annealed-estimate}) reflecting the fact that there is a very low probability of existence of a valid plan with many blocks.
 In fact, Table~\ref{tab:annealed-estimate} shows that the number of blocks in a valid pattern is tightly bounded -- hence we expect the number of users in a valid plan being usually forced by the instance.

 Moreover, we can see that the total number of eligible patterns is relatively low at PT and, hence, the complexity of the problem is driven by constraints and not authorisations.
 This is reflected in our overall strategy focusing on constraints (by means of patterns) and considering authorisations as a secondary component.

 Another interesting observation is that the at-least-3 constraints are relatively weak while at-most-3 constraints significantly affect the probability of a pattern being eligible.
 This fact is exploited by our branching heuristic, see Section~\ref{sec:branching-heuristic}.
  
 \begin{table}
 \small
\begin{tabular}{@{} r *{4}{p{4.5em}} p{5em} *{3}{p{4.5em}} @{}}
\toprule
 & \multicolumn{3}{c}{Prob.\ per constraint} & \multicolumn{3}{c}{Number of} & \multicolumn{2}{c}{Probability of} \\
\cmidrule(lr){2-4}
\cmidrule(lr){5-7}
\cmidrule(l){8-9}
 & $\neq$ & $\le$ & $\ge$ & eligible patterns & auth.\ plans per pattern & valid plans & pattern is auth. & valid pat.\ exists \\
\midrule
$b$ & & & & $N^\text{elig}_\text{pat}(b)$ & $P(b, r)$ & $N^\text{valid}_\text{plans}$ & $p^\text{auth}_\text{pat}(b)$ & $p^\text{sat}(b)$ \\
\midrule
3 & $6.7 \cdot 10^{-1}$ & $1.0 \cdot 10^{0}$ & $6.2 \cdot 10^{-1}$ & $2.8 \cdot 10^{-2}$ & $2.3 \cdot 10^{-11}$ & $6.5 \cdot 10^{-13}$ & $2.3 \cdot 10^{-11}$ & $6.5 \cdot 10^{-13}$ \\
4 & $7.5 \cdot 10^{-1}$ & $7.7 \cdot 10^{-1}$ & $8.2 \cdot 10^{-1}$ & $2.4 \cdot 10^{4}$ & $6.9 \cdot 10^{-9}$ & $1.6 \cdot 10^{-4}$ & $6.9 \cdot 10^{-9}$ & $1.6 \cdot 10^{-4}$ \\
5 & $8.0 \cdot 10^{-1}$ & $5.8 \cdot 10^{-1}$ & $9.0 \cdot 10^{-1}$ & $3.6 \cdot 10^{5}$ & $2.0 \cdot 10^{-6}$ & $7.3 \cdot 10^{-1}$ & $1.8 \cdot 10^{-6}$ & $4.7 \cdot 10^{-1}$ \\
6 & $8.3 \cdot 10^{-1}$ & $4.4 \cdot 10^{-1}$ & $9.4 \cdot 10^{-1}$ & $1.5 \cdot 10^{5}$ & $6.0 \cdot 10^{-4}$ & $8.8 \cdot 10^{1}$ & $2.7 \cdot 10^{-4}$ & $1.0 \cdot 10^{0}$ \\
7 & $8.6 \cdot 10^{-1}$ & $3.5 \cdot 10^{-1}$ & $9.6 \cdot 10^{-1}$ & $1.3 \cdot 10^{4}$ & $1.8 \cdot 10^{-1}$ & $2.2 \cdot 10^{3}$ & $1.4 \cdot 10^{-2}$ & $1.0 \cdot 10^{0}$ \\
8 & $8.8 \cdot 10^{-1}$ & $2.8 \cdot 10^{-1}$ & $9.7 \cdot 10^{-1}$ & $5.0 \cdot 10^{2}$ & $5.2 \cdot 10^{1}$ & $2.6 \cdot 10^{4}$ & $1.9 \cdot 10^{-1}$ & $1.0 \cdot 10^{0}$ \\
9 & $8.9 \cdot 10^{-1}$ & $2.3 \cdot 10^{-1}$ & $9.8 \cdot 10^{-1}$ & $1.3 \cdot 10^{1}$ & $1.5 \cdot 10^{4}$ & $1.9 \cdot 10^{5}$ & $6.2 \cdot 10^{-1}$ & $1.0 \cdot 10^{0}$ \\
10 & $9.0 \cdot 10^{-1}$ & $1.9 \cdot 10^{-1}$ & $9.9 \cdot 10^{-1}$ & $2.4 \cdot 10^{-1}$ & $4.4 \cdot 10^{6}$ & $1.1 \cdot 10^{6}$ & $9.1 \cdot 10^{-1}$ & $2.2 \cdot 10^{-1}$ \\
11 & $9.1 \cdot 10^{-1}$ & $1.6 \cdot 10^{-1}$ & $9.9 \cdot 10^{-1}$ & $3.7 \cdot 10^{-3}$ & $1.3 \cdot 10^{9}$ & $4.8 \cdot 10^{6}$ & $9.9 \cdot 10^{-1}$ & $3.7 \cdot 10^{-3}$ \\
12 & $9.2 \cdot 10^{-1}$ & $1.4 \cdot 10^{-1}$ & $9.9 \cdot 10^{-1}$ & $5.0 \cdot 10^{-5}$ & $3.7 \cdot 10^{11}$ & $1.8 \cdot 10^{7}$ & $1.0 \cdot 10^{0}$ & $5.0 \cdot 10^{-5}$ \\
\ignore{
13 & $9.2 \cdot 10^{-1}$ & $1.2 \cdot 10^{-1}$ & $9.9 \cdot 10^{-1}$ & $5.9 \cdot 10^{-7}$ & $1.1 \cdot 10^{14}$ & $6.2 \cdot 10^{7}$ & $1.0 \cdot 10^{0}$ & $5.9 \cdot 10^{-7}$ \\
14 & $9.3 \cdot 10^{-1}$ & $1.1 \cdot 10^{-1}$ & $9.9 \cdot 10^{-1}$ & $6.1 \cdot 10^{-9}$ & $3.0 \cdot 10^{16}$ & $1.9 \cdot 10^{8}$ & $1.0 \cdot 10^{0}$ & $6.1 \cdot 10^{-9}$ \\
15 & $9.3 \cdot 10^{-1}$ & $9.4 \cdot 10^{-2}$ & $1.0 \cdot 10^{0}$ & $5.7 \cdot 10^{-11}$ & $8.7 \cdot 10^{18}$ & $5.0 \cdot 10^{8}$ & $1.0 \cdot 10^{0}$ & $5.7 \cdot 10^{-11}$ \\
16 & $9.4 \cdot 10^{-1}$ & $8.4 \cdot 10^{-2}$ & $1.0 \cdot 10^{0}$ & $4.8 \cdot 10^{-13}$ & $2.5 \cdot 10^{21}$ & $1.2 \cdot 10^{9}$ & $1.0 \cdot 10^{0}$ & $4.8 \cdot 10^{-13}$ \\
17 & $9.4 \cdot 10^{-1}$ & $7.5 \cdot 10^{-2}$ & $1.0 \cdot 10^{0}$ & $3.6 \cdot 10^{-15}$ & $7.1 \cdot 10^{23}$ & $2.6 \cdot 10^{9}$ & $1.0 \cdot 10^{0}$ & $3.6 \cdot 10^{-15}$ \\
18 & $9.4 \cdot 10^{-1}$ & $6.7 \cdot 10^{-2}$ & $1.0 \cdot 10^{0}$ & $2.4 \cdot 10^{-17}$ & $2.0 \cdot 10^{26}$ & $4.9 \cdot 10^{9}$ & $1.0 \cdot 10^{0}$ & $2.4 \cdot 10^{-17}$ \\
19 & $9.5 \cdot 10^{-1}$ & $6.1 \cdot 10^{-2}$ & $1.0 \cdot 10^{0}$ & $1.5 \cdot 10^{-19}$ & $5.6 \cdot 10^{28}$ & $8.3 \cdot 10^{9}$ & $1.0 \cdot 10^{0}$ & $1.5 \cdot 10^{-19}$ \\
20 & $9.5 \cdot 10^{-1}$ & $5.5 \cdot 10^{-2}$ & $1.0 \cdot 10^{0}$ & $7.8 \cdot 10^{-22}$ & $1.6 \cdot 10^{31}$ & $1.2 \cdot 10^{10}$ & $1.0 \cdot 10^{0}$ & $7.8 \cdot 10^{-22}$ \\
21 & $9.5 \cdot 10^{-1}$ & $5.0 \cdot 10^{-2}$ & $1.0 \cdot 10^{0}$ & $3.6 \cdot 10^{-24}$ & $4.4 \cdot 10^{33}$ & $1.6 \cdot 10^{10}$ & $1.0 \cdot 10^{0}$ & $3.6 \cdot 10^{-24}$ \\
22 & $9.5 \cdot 10^{-1}$ & $4.6 \cdot 10^{-2}$ & $1.0 \cdot 10^{0}$ & $1.5 \cdot 10^{-26}$ & $1.2 \cdot 10^{36}$ & $1.8 \cdot 10^{10}$ & $1.0 \cdot 10^{0}$ & $1.5 \cdot 10^{-26}$ \\
23 & $9.6 \cdot 10^{-1}$ & $4.2 \cdot 10^{-2}$ & $1.0 \cdot 10^{0}$ & $5.2 \cdot 10^{-29}$ & $3.4 \cdot 10^{38}$ & $1.8 \cdot 10^{10}$ & $1.0 \cdot 10^{0}$ & $5.2 \cdot 10^{-29}$ \\
24 & $9.6 \cdot 10^{-1}$ & $3.9 \cdot 10^{-2}$ & $1.0 \cdot 10^{0}$ & $1.5 \cdot 10^{-31}$ & $9.5 \cdot 10^{40}$ & $1.5 \cdot 10^{10}$ & $1.0 \cdot 10^{0}$ & $1.5 \cdot 10^{-31}$ \\
25 & $9.6 \cdot 10^{-1}$ & $3.6 \cdot 10^{-2}$ & $1.0 \cdot 10^{0}$ & $3.8 \cdot 10^{-34}$ & $2.6 \cdot 10^{43}$ & $9.9 \cdot 10^{9}$ & $1.0 \cdot 10^{0}$ & $3.8 \cdot 10^{-34}$ \\
26 & $9.6 \cdot 10^{-1}$ & $3.4 \cdot 10^{-2}$ & $1.0 \cdot 10^{0}$ & $7.4 \cdot 10^{-37}$ & $7.2 \cdot 10^{45}$ & $5.4 \cdot 10^{9}$ & $1.0 \cdot 10^{0}$ & $7.4 \cdot 10^{-37}$ \\
27 & $9.6 \cdot 10^{-1}$ & $3.1 \cdot 10^{-2}$ & $1.0 \cdot 10^{0}$ & $1.1 \cdot 10^{-39}$ & $2.0 \cdot 10^{48}$ & $2.3 \cdot 10^{9}$ & $1.0 \cdot 10^{0}$ & $1.1 \cdot 10^{-39}$ \\
28 & $9.6 \cdot 10^{-1}$ & $2.9 \cdot 10^{-2}$ & $1.0 \cdot 10^{0}$ & $1.3 \cdot 10^{-42}$ & $5.4 \cdot 10^{50}$ & $6.9 \cdot 10^{8}$ & $1.0 \cdot 10^{0}$ & $1.3 \cdot 10^{-42}$ \\
29 & $9.7 \cdot 10^{-1}$ & $2.7 \cdot 10^{-2}$ & $1.0 \cdot 10^{0}$ & $9.2 \cdot 10^{-46}$ & $1.5 \cdot 10^{53}$ & $1.3 \cdot 10^{8}$ & $1.0 \cdot 10^{0}$ & $9.2 \cdot 10^{-46}$ \\}
\multicolumn{9}{c}{$\cdots$}\\
30 & $9.7 \cdot 10^{-1}$ & $2.6 \cdot 10^{-2}$ & $1.0 \cdot 10^{0}$ & $3.2 \cdot 10^{-49}$ & $4.0 \cdot 10^{55}$ & $1.3 \cdot 10^{7}$ & $1.0 \cdot 10^{0}$ & $3.2 \cdot 10^{-49}$ \\
\bottomrule
\end{tabular}
 \caption{
 	Computational analysis of random WSP instances; $k = 30$, $\gamma = k$, $e = e_\text{PT} = 50$, $n = 10k$.
    $b$ is the number of blocks in the solution.
    Columns 2 to 4 show probabilities of a random plan with the given number of blocks satisfying corresponding constraint.
    Using the estimates of the number of eligible patterns and authorised plans per pattern, we compute the average number of valid plans.
    We also estimate the probability of existence of at least one valid pattern.
 }
 \label{tab:annealed-estimate}
 \end{table}

 Our experiments show that the estimate of $p^\text{sat}(b)$ is relatively accurate; for example, for $k = 30$ it predicts PT at $\beta = 1.17$ (for the definition of $\beta$ see Section~\ref{sec:vary-k-slice}), and for $k = 50$ at $\beta = 1.02$, see Figure~\ref{fig:annealed-estimate}.
 Hence, our formulas can be used to quickly predict if certain parameters are likely to result in sat or in unsat instances.
 
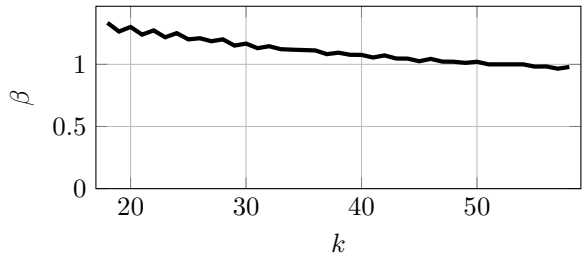
\begin{figure}[htb]
	\centering
     \begin{tikzpicture}
        \begin{axis}[		
            compat=newest,
            width=0.5\textwidth,
            height=4cm,
            legend pos=north west,
            xlabel={$k$},
            ylabel={$\beta$},
            title={},
            %legend cell align=left,
            grid=major,
            ymin=0,
            xmin=17,
            xmax=59
            %clip bounding box=upper bound
        ]       
        \addplot[ultra thick] table[
            x=k,
            y=beta
        ] {annealed-estimate.dat};
	 	\end{axis}
     \end{tikzpicture}
     \caption{
     	Predicted values of $\beta$ by our improved annealed estimate method.
        Recall that, by definition, the correct value of $\beta$ is 1 for any $k$.}
     \label{fig:annealed-estimate}
\end{figure}

%%%%%%%%%%%%%%%%%%%%%%%%%%%%%%%%%%%%%%%%%%%%%%%%
\section{Emergence of Forced Variables} 
\label{ap:PT-forced}

 A particularly interesting aspect of PTs is the emergence of forced variables in the critical region. 
 That is, variables that must have some particular value in all solutions -- and so are entailed by the system.
 This has been extensively studied in the context of Random 3SAT and Graph Colouring Problem (e.g.\ \shortcite{Culberson2001:frozen-graph-coloring,KilbyEtal2005:backbones,parkes1997clustering,mezard2005clustering}).
 In the context of standard graph colouring, the permutation symmetry means no vertex can be forced to have a particular colour. 
 Hence, the forcing is instead considered in terms of whether different vertices are forced to have the same colour, or else forced to have different colours. 
 In other words, an instance might imply separation-of-duty and/or binding-of-duty constraints that are not explicitly listed in its description.
 This directly corresponds to whether $M$ variables become forced. 
 
 We have performed experiments to determine this empirically in the WSP instances in the region of the PT\@.
 Determining whether a particular value of $M$ is forced can be done directly by adding the negation to the instance and then testing for unsatisfiability.

\newcommand{\histogramforced}[3]{
     \begin{tikzpicture}
        \begin{axis}[		
            compat=newest,
            width=0.33\textwidth,
            height=3cm,
            legend pos=north west,
            title={},
            %legend cell align=left,
            grid=none,
        	ymin=0,
            ymax=0.25,
            xmin=0,
            xmax=#3
            %clip bounding box=upper bound
        ]       
        \addplot[
        	fill=blue!50,
        	hist={
            	density,
	        	bins=10,
	    	    %data min=0.5,
    	    	%data max=4
	    	}
        ] table[
            y={Forced-#2},
        ] {forced-constraints-20-#1.dat};
	 	\end{axis}
     \end{tikzpicture}
}

\newcommand{\equalshistright}{50}
\newcommand{\notequalshistright}{150}

\begin{table}[htb]
	\centering
	\begin{tabular}{
            @{}
            >{\raggedleft\arraybackslash} m{0.04\textwidth} 
            >{\raggedleft\arraybackslash} m{0.08\textwidth} 
            >{\raggedleft\arraybackslash} m{0.08\textwidth} 
            >{\centering\arraybackslash} m{0.32\textwidth} 
            >{\centering\arraybackslash} m{0.32\textwidth}
            @{}
        }
    	\toprule    	
    	$\beta$
        	& Avg.\ $=$
        	& Avg.\ $\neq$
        	& $=$
        	& $\neq$ \\
        \midrule
		%0.5 
        %	& 0 (0)
        %    & 0 (0)
        %    & \histogramforced{0.5}{equals}{\equalshistright}
        %    & \histogramforced{0.5}{not-equals}{\notequalshistright} \\
		0.8 
        	& 0% & 0
            & 12% & 4
            & \histogramforced{0.8}{equals}{\equalshistright}
            & \histogramforced{0.8}{not-equals}{\notequalshistright} \\
		1.0 
        	& 16% & 11
            & 91% & 96
            & \histogramforced{1}{equals}{\equalshistright}
            & \histogramforced{1}{not-equals}{\notequalshistright} \\
		1.2 
        	& 31% & 33
            & 112% & 115
        	& \histogramforced{1.2}{equals}{\equalshistright}
            & \histogramforced{1.2}{not-equals}{\notequalshistright} \\
% 		1.5 \AJP{DROP THIS?} \DXK{Drop what?}
%         	& 36 & 37
%             & 106 & 106
%             & \histogramforced{1.5}{equals}{\equalshistright}
%             & \histogramforced{1.5}{not-equals}{\notequalshistright} \\
        \bottomrule
    \end{tabular}
    \caption{Forced constraints. 
    	In this experiment, $k = 20$ and the total number of $M$-variables is 190 (recall that $M_{ij} = M_{ji}$; we count such a pair as one variable).
        Constraints explicitly defined by the instance are not counted as forced.
    }
    \label{tab:forced-constraints}
\end{table}

 The results for the WSP are summarised in Table~\ref{tab:forced-constraints}.
 Note that there is a difference between a forced binding-of-duty (equals) constraint, denoted `$=$', and a forced separation-of-duty (not-equals) constraint, denoted `$\neq$'.
 Although a forced not-equals is more likely as there are usually more zeros than ones in the $M$ matrix, and as $M_{s_1, s_2} = 0$ is a weaker decision than $M_{s_1, s_2} = 1$, we still observe quite a lot of forced equals constraints. 
 We also see that as we move through the PT region from slightly under-constrained to slightly over-constrained, the number of forced $M$ variables increases rapidly. 
 Similar behaviour has been empirically seen in Random 3SAT (see e.g.~\cite{parkes1997clustering}).
 This is important in that it again gives evidence that the WSP instances are behaving like a PT would be expected to behave, and so can be expected to be a good and effective test of algorithms for the WSP.
 
 Note that we have only studied the freezing of the $M$ variables, but due to the user authorisations (or list-colouring) it is quite possible that other notions of freezing also emerge.
 We also believe it further supports that the PT is worthy of study in its own right.
  
 Another observation is that the information on forced variables (or constraints) could actually be useful to the users of the WSP decision support system.
 Indeed, knowing which of the constraints are forced might help the user to understand the instance and change it when necessary.
 A fast effective solver such as PBT can be used to produce such information, as well as specific solutions.

%\section{Anotehr vary-k $n = 100k$ slice} %% latex will not do maths in a section header?
\section{A supplementary ``vary-k'' slice}
\label{ap:100k}

 Figure~\ref{fig:100k} shows how the solvers perform on the $n = 100k$ slice (for the $n = 10k$ slice, see Figure~\ref{fig:slices}).
 
\begin{figure}[htb]
\begin{tikzpicture}[]
	\begin{semilogyaxis}[
		compat=newest,
		width=\textwidth,
		height=8cm,
		legend pos=south east,
		xlabel={Number of steps $k$},
		ylabel={Running time, sec},
		title={},
		legend cell align=left,
		grid=major,
        unbounded coords=jump,
        xmin=17,
        xmax=59
	]
	%\addplot[UDPB, sat, forget plot] table[
	%	x=k,
	%	y=sat4j-sat-50,
	%] {vary-k-100.dat};
	%\addplot[UDPB, unsat] table[
	%	x=k,
	%	y=sat4j-unsat-50,
	%] {vary-k-100.dat};
	%\addlegendentry{UDPB}

	\addplot[PBT, sat, forget plot] table[
		x=k,
		y=pbt-sat-50,
	] {vary-k-100.dat};
	\addplot[PBT, unsat] table[
		x=k,
		y=pbt-unsat-50,
	] {vary-k-100.dat};
	\addlegendentry{PBT}

	\addplot[PUI, sat, forget plot] table[
		x=k,
		y=ui-sat-50,
	] {vary-k-100.dat};
	\addplot[PUI, unsat] table[
		x=k,
		y=ui-unsat-50,
	] {vary-k-100.dat};
	\addlegendentry{PUI}

	\addplot[PBPB, sat, forget plot] table[
		x=k,
		y=mxsat-sat-50,
	] {vary-k-100.dat};
	\addplot[PBPB, unsat] table[
		x=k,
		y=mxsat-unsat-50,
	] {vary-k-100.dat};
	\addlegendentry{PBPB (Res)}

    %\addplot [UDPB, sat, forget plot] table [
    % 	x=k, 
    %    y=sat4j-sat-50
    %] {vary-k-100.dat};
    %\addplot [UDPB, unsat, mark=*] table [
    %   	x=k, 
    %    y=sat4j-unsat-50
    %] {vary-k-100.dat};
    %\addlegendentry{UDPB}

    \addplot [PBPB CP, sat, forget plot] table [
     	x=k, 
        y=mxpb-cp-sat-50
    ] {vary-k-100.dat};
    \addplot [PBPB CP, unsat, mark=*] table [
       	x=k, 
        y=mxpb-cp-unsat-50
    ] {vary-k-100.dat};
    \addlegendentry{PBPB (CutP)}

    \addplot [CSP, sat, forget plot] table [
     	x=k, 
        y=csp-sat-50
    ] {csp-vary-k-100.dat};
    \addplot [CSP, unsat, mark=*] table [
       	x=k, 
        y=csp-unsat-50
    ] {csp-vary-k-100.dat};
    \addlegendentry{CSP (CP-SAT)}

% 	\addplot[PUI, sat, forget plot] table[
% 		x=k,
% 		y=pbt-sat-50,
% 	] {vary-k-5.dat};
% 	\addplot[PUI, unsat] table[
% 		x=k,
% 		y=pbt-unsat-50,
% 	] {vary-k-5.dat};
% 	\addlegendentry{PBT n=5k}

%     \addplot [PUI, sat, forget plot] table [
%      	x=k, 
%         y=csp-sat-50
%     ] {vary-k-5.dat};
%     \addplot [PUI, unsat, mark=*] table [
%       	x=k, 
%         y=csp-unsat-50
%     ] {vary-k-5.dat};
%     \addlegendentry{CSP n=5k}
  
	%\addplot[PBPB, sat, forget plot] table[
	%	x=k,
	%	y=pbt-sat-50,
	%] {vary-k-2.dat};
	%\addplot[PBPB, unsat] table[
	%	x=k,
	%	y=pbt-unsat-50,
	%] {vary-k-2.dat};
	%\addlegendentry{PBT n=2k}
        
    %\addplot [PBPB, sat, forget plot] table [
    % 	x=k, 
    %    y=csp-sat-50
    %] {vary-k-2.dat};
    %\addplot [PBPB, unsat, mark=*] table [
    %   	x=k, 
    %    y=csp-unsat-50
    %] {vary-k-2.dat};
    %\addlegendentry{CSP n=2k}
    
	%\addplot[red, line width=3mm, opacity=0.2, domain=30:55] 
    %{
    %	1.5e-5 * 2^( x * log2(x)/12.2)
    %};
	%\addlegendentry{const $\cdot 2^{ k \cdot \log_2{k}/12.2}$}

\end{semilogyaxis}
\end{tikzpicture}
\caption{Performance of the solvers on the $n = 100k$ slice.}
\label{fig:100k}
\end{figure}
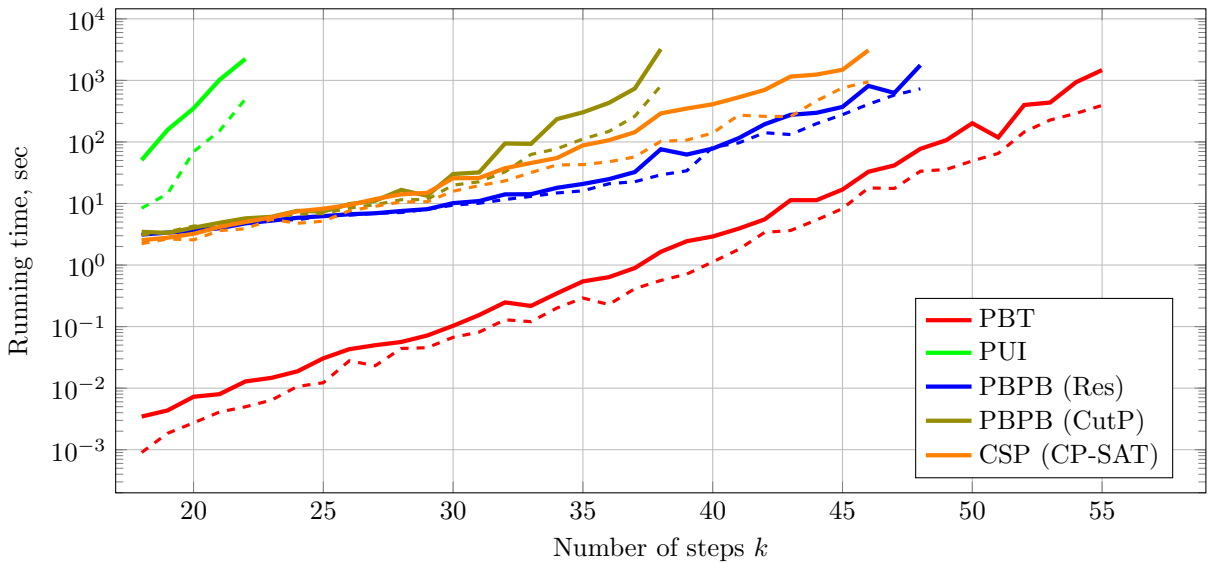

  The poor performance of UDPB~(Res) made it impractical to include it in this experiment; this is due to the non-FPT scaling of UDPB\@.
  The performance of most of the solvers in the $n = 100k$ slice closely matches that in the $n = 10k$ slice.
 On changing from $n=10k$ to $n=100k$, the performance of PBT barely changes, which is consistent with its low dependence on $n$ in the fixed-$k$ slice. 
 The two PBPB solvers became a little faster, and the CSP became a bit slower. \thepage
 PUI is slower by a constant factor of about 10 on the unsat instances, which is the expected result as it iterates over all the users.

\fi

%\section*{References}

%\bibliographystyle{apalike}
% \bibliographystyle{splncs03}
%\bibliographystyle{plain}

%\bibliographystyle{abbrv}

\vskip 0.2in
\bibliography{refs}

\begin{thebibliography}{}

\bibitem[\protect\BCAY{{American National Standards Institute}}{{American
  National Standards Institute}}{2004}]{ansi-rbac04}
{American National Standards Institute} \BBOP2004\BBCP.
\newblock \BBOQ American national standard for information technology -- {R}ole
  {B}ased {A}ccess {C}ontrol ({ANSI INCITS} 359-2004)\BBCQ.

\bibitem[\protect\BCAY{Bailey, Dalmau,\ \BBA\ Kolaitis}{Bailey
  et~al.}{2007}]{BaileyEtAl2007:PT-in-PP}
Bailey, D.~D., Dalmau, V., \BBA\ Kolaitis, P.~G. \BBOP2007\BBCP.
\newblock \BBOQ Phase transitions of {PP}-complete satisfiability
  problems\BBCQ\
\newblock {\Bem Discrete Applied Mathematics}, {\Bem 155\/}(12), 1627 -- 1639.

\bibitem[\protect\BCAY{Basin, Burri,\ \BBA\ Karjoth}{Basin
  et~al.}{2012}]{Basin2012}
Basin, D., Burri, S.~J., \BBA\ Karjoth, G. \BBOP2012\BBCP.
\newblock \BBOQ Optimal workflow-aware authorizations\BBCQ\
\newblock In {\Bem Proceedings of the 17th ACM Symposium on Access Control
  Models and Technologies}, SACMAT'12, \BPGS\ 93--102, New York, NY, USA. ACM.

\bibitem[\protect\BCAY{Basin, Burri,\ \BBA\ Karjoth}{Basin
  et~al.}{2014}]{BaBuKa14}
Basin, D., Burri, S.~J., \BBA\ Karjoth, G. \BBOP2014\BBCP.
\newblock \BBOQ Obstruction-free authorization enforcement: Aligning security
  and business objectives\BBCQ\
\newblock {\Bem Journal of Computer Security}, {\Bem 22\/}(5), 661--698.

\bibitem[\protect\BCAY{Bertino, Ferrari,\ \BBA\ Atluri}{Bertino
  et~al.}{1999}]{BeFeAt99}
Bertino, E., Ferrari, E., \BBA\ Atluri, V. \BBOP1999\BBCP.
\newblock \BBOQ The specification and enforcement of authorization constraints
  in workflow management systems\BBCQ\
\newblock {\Bem ACM Transactions on Information and System Security}, {\Bem
  2\/}(1), 65--104.

\bibitem[\protect\BCAY{Bertolissi, dos Santos,\ \BBA\ Ranise}{Bertolissi
  et~al.}{2018}]{Bertolissi2018}
Bertolissi, C., dos Santos, D.~R., \BBA\ Ranise, S. \BBOP2018\BBCP.
\newblock \BBOQ Solving multi-objective workflow satisfiability problems with
  optimization modulo theories techniques\BBCQ\
\newblock In {\Bem Proceedings of the 23d ACM Symposium on Access Control
  Models and Technologies}, SACMAT '18, \BPGS\ 117--128, New York, NY, USA.
  ACM.

\bibitem[\protect\BCAY{Bollobas}{Bollobas}{1985}]{Bollobas:book}
Bollobas, B. \BBOP1985\BBCP.
\newblock {\Bem Random Graphs}.
\newblock Academic Press, London, England.

\bibitem[\protect\BCAY{Boros\ \BBA\ Hammer}{Boros\ \BBA\
  Hammer}{2002}]{BorosHammer2002:PBO}
Boros, E.\BBACOMMA\  \BBA\ Hammer, P.~L. \BBOP2002\BBCP.
\newblock \BBOQ Pseudo-boolean optimization\BBCQ\
\newblock {\Bem Discrete Applied Mathematics}, {\Bem 123\/}(1-3), 155--225.

\bibitem[\protect\BCAY{Cameron}{Cameron}{1994}]{Cameron1994:Combinatorics-book}
Cameron, P.~J. \BBOP1994\BBCP.
\newblock {\Bem Combinatorics: Topics, Techniques, Algorithms}.
\newblock Cambridge University Press.

\bibitem[\protect\BCAY{Cheeseman, Kanefsky,\ \BBA\ Taylor}{Cheeseman
  et~al.}{1991}]{cheeseman91where}
Cheeseman, P., Kanefsky, B., \BBA\ Taylor, W.~M. \BBOP1991\BBCP.
\newblock \BBOQ {Where the Really Hard Problems Are}\BBCQ\
\newblock In {\Bem Proceedings of the 12th International Joint Conference on
  Artificial Intelligence}, {IJCAI}'91, \BPGS\ 331--337.

\bibitem[\protect\BCAY{Cohen, Crampton, Gagarin, Gutin,\ \BBA\ Jones}{Cohen
  et~al.}{2014}]{CoCrGaGuJo14}
Cohen, D., Crampton, J., Gagarin, A., Gutin, G., \BBA\ Jones, M.
  \BBOP2014\BBCP.
\newblock \BBOQ Iterative plan construction for the workflow satisfiability
  problem\BBCQ\
\newblock {\Bem Journal of Artificial Intelligence Research}, {\Bem 51},
  555--577.

\bibitem[\protect\BCAY{Cohen, Crampton, Gagarin, Gutin,\ \BBA\ Jones}{Cohen
  et~al.}{2016}]{JOCO2014}
Cohen, D., Crampton, J., Gagarin, A., Gutin, G., \BBA\ Jones, M.
  \BBOP2016\BBCP.
\newblock \BBOQ Algorithms for the workflow satisfiability problem engineered
  for counting constraints\BBCQ\
\newblock {\Bem Journal of Combinatorial Optimization}, {\Bem 32\/}(1), 3--24.

\bibitem[\protect\BCAY{Cook, Coullard,\ \BBA\ Tur{\'a}n}{Cook
  et~al.}{1987}]{CookEtal1987:complexity-cutting-plane-proofs}
Cook, W., Coullard, C.~R., \BBA\ Tur{\'a}n, G. \BBOP1987\BBCP.
\newblock \BBOQ On the complexity of cutting-plane proofs\BBCQ\
\newblock {\Bem Discrete Applied Mathematics}, {\Bem 18\/}(1), 25--38.

\bibitem[\protect\BCAY{Crampton}{Crampton}{2005}]{Cr05}
Crampton, J. \BBOP2005\BBCP.
\newblock \BBOQ A reference monitor for workflow systems with constrained task
  execution\BBCQ\
\newblock In Ferrari, E.\BBACOMMA\  \BBA\ Ahn, G.-J.\BEDS, {\Bem Proceedings of
  the 10th ACM Symposium on Access Control Models and Technologies},
  {SACMAT}'05, \BPGS\ 38--47. ACM.

\bibitem[\protect\BCAY{Crampton, Gutin,\ \BBA\ Karapetyan}{Crampton
  et~al.}{2015}]{ValuedWSP-SACMAT}
Crampton, J., Gutin, G., \BBA\ Karapetyan, D. \BBOP2015\BBCP.
\newblock \BBOQ Valued workflow satisfiability problem\BBCQ\
\newblock In {\Bem Proceedings of the 20th ACM Symposium on Access Control
  Models and Technologies}, SACMAT'15, \BPGS\ 3--13.

\bibitem[\protect\BCAY{Crampton, Gutin,\ \BBA\ Yeo}{Crampton
  et~al.}{2013}]{CrGuYe13}
Crampton, J., Gutin, G., \BBA\ Yeo, A. \BBOP2013\BBCP.
\newblock \BBOQ On the parameterized complexity and kernelization of the
  workflow satisfiability problem\BBCQ\
\newblock {\Bem ACM Transactions on Information and System Security}, {\Bem
  16\/}(1), 4:1--4:31.

\bibitem[\protect\BCAY{Crampton, Gutin, Karapetyan,\ \BBA\ Watrigant}{Crampton
  et~al.}{2017a}]{CramptonGKW17}
Crampton, J., Gutin, G.~Z., Karapetyan, D., \BBA\ Watrigant, R.
  \BBOP2017a\BBCP.
\newblock \BBOQ The bi-objective workflow satisfiability problem and workflow
  resiliency\BBCQ\
\newblock {\Bem Journal of Computer Security}, {\Bem 25\/}(1), 83--115.

\bibitem[\protect\BCAY{Crampton, Gutin,\ \BBA\ Watrigant}{Crampton
  et~al.}{2017b}]{CramptonGW17}
Crampton, J., Gutin, G.~Z., \BBA\ Watrigant, R. \BBOP2017b\BBCP.
\newblock \BBOQ On the satisfiability of workflows with release points\BBCQ\
\newblock In Bertino, E., Sandhu, R., \BBA\ Weippl, E.~R.\BEDS, {\Bem
  Proceedings of the 22nd {ACM} Symposium on Access Control Models and
  Technologies, \emph{SACMAT'17}}, \BPGS\ 207--217. {ACM}.

\bibitem[\protect\BCAY{Crawford, Ginsberg, Luks,\ \BBA\ Roy}{Crawford
  et~al.}{1996}]{CrawfordEtal1996:symmetry-breaking-predicates}
Crawford, J.~M., Ginsberg, M.~L., Luks, E.~M., \BBA\ Roy, A. \BBOP1996\BBCP.
\newblock \BBOQ Symmetry-breaking predicates for search problems\BBCQ\
\newblock In {\Bem Proceedings of the Fifth International Conference on
  Principles of Knowledge Representation and Reasoning}, KR'96, \BPGS\
  148--159, San Francisco, CA, USA. Morgan Kaufmann Publishers Inc.

\bibitem[\protect\BCAY{Culberson\ \BBA\ Gent}{Culberson\ \BBA\
  Gent}{2001}]{Culberson2001:frozen-graph-coloring}
Culberson, J.\BBACOMMA\  \BBA\ Gent, I. \BBOP2001\BBCP.
\newblock \BBOQ Frozen development in graph coloring\BBCQ\
\newblock {\Bem Theoretical Computer Science}, {\Bem 265\/}(1-2), 227--264.

\bibitem[\protect\BCAY{Davis, Logemann,\ \BBA\ Loveland}{Davis
  et~al.}{1962}]{DavisLL1962}
Davis, M., Logemann, G., \BBA\ Loveland, D. \BBOP1962\BBCP.
\newblock \BBOQ A machine program for theorem-proving\BBCQ\
\newblock {\Bem Communications of the ACM}, {\Bem 5\/}(7), 394--397.

\bibitem[\protect\BCAY{De~Haan, Kronegger,\ \BBA\ Pfandler}{De~Haan
  et~al.}{2015}]{DeHaan:2015:IJCAI:FPT-planning}
De~Haan, R., Kronegger, M., \BBA\ Pfandler, A. \BBOP2015\BBCP.
\newblock \BBOQ Fixed-parameter tractable reductions to {SAT} for
  planning\BBCQ\
\newblock In {\Bem Proceedings of the 24th International Conference on
  Artificial Intelligence}, IJCAI'15, \BPGS\ 2897--2903. AAAI Press.

\bibitem[\protect\BCAY{Dixon, Ginsberg,\ \BBA\ Parkes}{Dixon
  et~al.}{2004}]{DixonEtal:2004:GenBoolSatI}
Dixon, H.~E., Ginsberg, M.~L., \BBA\ Parkes, A.~J. \BBOP2004\BBCP.
\newblock \BBOQ Generalizing boolean satisfiability {I}: Background and survey
  of existing work\BBCQ\
\newblock {\Bem Journal of Artificial Intelligence Research}, {\Bem 21},
  193--243.

\bibitem[\protect\BCAY{dos Santos, Ranise, Compagna,\ \BBA\ Ponta}{dos Santos
  et~al.}{2017}]{SantosRCP17}
dos Santos, D.~R., Ranise, S., Compagna, L., \BBA\ Ponta, S.~E. \BBOP2017\BBCP.
\newblock \BBOQ Automatically finding execution scenarios to deploy
  security-sensitive workflows\BBCQ\
\newblock {\Bem Journal of Computer Security}, {\Bem 25\/}(3), 255--282.

\bibitem[\protect\BCAY{dos Santos, Ranise, Compagna,\ \BBA\ Ponta}{dos Santos
  et~al.}{2015}]{SantosRCP15}
dos Santos, D.~R., Ranise, S., Compagna, L., \BBA\ Ponta, S.~E. \BBOP2015\BBCP.
\newblock \BBOQ Assisting the deployment of security-sensitive workflows by
  finding execution scenarios\BBCQ\
\newblock In Samarati, P.\BED, {\Bem Data and Applications Security and Privacy
  {XXIX}, Proceedings of DBSec 2015}, \lowercase{\BVOL}\ 9149 of {\Bem Lecture
  Notes in Computer Science}, \BPGS\ 85--100. Springer.

\bibitem[\protect\BCAY{Downey\ \BBA\ Fellows}{Downey\ \BBA\
  Fellows}{2013}]{DoFe13}
Downey, R.\BBACOMMA\  \BBA\ Fellows, M.~R. \BBOP2013\BBCP.
\newblock {\Bem Fundamentals of Parameterized Complexity}.
\newblock Texts in Computer Science. Springer-Verlag, London.

\bibitem[\protect\BCAY{Dukanovic\ \BBA\ Rendl}{Dukanovic\ \BBA\
  Rendl}{2008}]{DukanovicRendl2008:SDP-heuristic-GCP}
Dukanovic, I.\BBACOMMA\  \BBA\ Rendl, F. \BBOP2008\BBCP.
\newblock \BBOQ A semidefinite programming-based heuristic for graph
  coloring\BBCQ\
\newblock {\Bem Discrete Applied Mathematics}, {\Bem 156\/}(2), 180 -- 189.

\bibitem[\protect\BCAY{Dutton\ \BBA\ Brigham}{Dutton\ \BBA\
  Brigham}{1981}]{Dutton1981}
Dutton, R.~D.\BBACOMMA\  \BBA\ Brigham, R.~C. \BBOP1981\BBCP.
\newblock \BBOQ A new graph colouring algorithm\BBCQ\
\newblock {\Bem The Computer Journal}, {\Bem 24\/}(1), 85--86.

\bibitem[\protect\BCAY{Fellows, Friedrich, Hermelin, Narodytska,\ \BBA\
  Rosamond}{Fellows et~al.}{2011}]{Fellows2011}
Fellows, M.~R., Friedrich, T., Hermelin, D., Narodytska, N., \BBA\ Rosamond,
  F.~A. \BBOP2011\BBCP.
\newblock \BBOQ Constraint satisfaction problems: Convexity makes
  {AllDifferent} constraints tractable\BBCQ\
\newblock In {\Bem Proceedings of the 22nd International Joint Conference on
  Artificial Intelligence}, IJCAI'11, \BPGS\ 522--527.

\bibitem[\protect\BCAY{Gent, MacIntyre, Prosser,\ \BBA\ Walsh}{Gent
  et~al.}{1996}]{Gent1996}
Gent, I.~P., MacIntyre, E., Prosser, P., \BBA\ Walsh, T. \BBOP1996\BBCP.
\newblock \BBOQ The constrainedness of search\BBCQ\
\newblock In {\Bem Proceedings of the 13th National Conference on Artificial
  Intelligence and 8th Innovative Applications of Artificial Intelligence
  Conference, \emph{Vol.\,1}}, AAAI/IAAI'96, \BPGS\ 246--252.

\bibitem[\protect\BCAY{Gutin, Kratsch,\ \BBA\ Wahlstr\"om}{Gutin
  et~al.}{2015}]{Gutin2015}
Gutin, G., Kratsch, S., \BBA\ Wahlstr\"om, M. \BBOP2015\BBCP.
\newblock \BBOQ Polynomial kernels and user reductions for the workflow
  satisfiability problem\BBCQ\
\newblock {\Bem Algorithmica}, {\Bem 75\/}(2), 383--402.

\bibitem[\protect\BCAY{Gutin\ \BBA\ Wahlstr{\"{o}}m}{Gutin\ \BBA\
  Wahlstr{\"{o}}m}{2016}]{GW2016}
Gutin, G.\BBACOMMA\  \BBA\ Wahlstr{\"{o}}m, M. \BBOP2016\BBCP.
\newblock \BBOQ Tight lower bounds for the workflow satisfiability problem
  based on the strong exponential time hypothesis\BBCQ\
\newblock {\Bem Information Processing Letters}, {\Bem 116\/}(3), 223--226.

\bibitem[\protect\BCAY{Haken}{Haken}{1985}]{HAKEN1985:intractability}
Haken, A. \BBOP1985\BBCP.
\newblock \BBOQ The intractability of resolution\BBCQ\
\newblock {\Bem Theoretical Computer Science}, {\Bem 39}, 297 -- 308.

\bibitem[\protect\BCAY{Huberman\ \BBA\ Hogg}{Huberman\ \BBA\
  Hogg}{1987}]{Huberman87:phase}
Huberman, B.~A.\BBACOMMA\  \BBA\ Hogg, T. \BBOP1987\BBCP.
\newblock \BBOQ Phase transitions in artificial intelligence systems\BBCQ\
\newblock {\Bem Artificial Intelligence}, {\Bem 33\/}(2), 155--171.

\bibitem[\protect\BCAY{Impagliazzo\ \BBA\ Paturi}{Impagliazzo\ \BBA\
  Paturi}{2001}]{ImPa01}
Impagliazzo, R.\BBACOMMA\  \BBA\ Paturi, R. \BBOP2001\BBCP.
\newblock \BBOQ On the complexity of k-{SAT}\BBCQ\
\newblock {\Bem Journal of Computer and System Sciences}, {\Bem 62\/}(2),
  367--375.

\bibitem[\protect\BCAY{Karapetyan}{Karapetyan}{2019}]{SourceCodes}
Karapetyan, D. \BBOP2019\BBCP.
\newblock \BBOQ Source codes of the {Pattern Backtracking} algorithm, the
  instance generator, the instances used in the paper, corresponding solutions
  and the converter of instances into {PBPB}, {UDPB} and {CSP}
  formulations\BBCQ.
\newblock \texttt{https://dx.doi.org/10.5526/ERDR-00000114}, retrieved 22 July
  2019.

\bibitem[\protect\BCAY{Karapetyan, Gagarin,\ \BBA\ Gutin}{Karapetyan
  et~al.}{2015}]{KaGaGu}
Karapetyan, D., Gagarin, A., \BBA\ Gutin, G. \BBOP2015\BBCP.
\newblock \BBOQ Pattern backtracking algorithm for the workflow satisfiability
  problem with user-independent constraints\BBCQ\
\newblock In Wang, J.\BBACOMMA\  \BBA\ Yap, C.\BEDS, {\Bem Frontiers in
  Algorithmics, \emph{FAW 2015}}, \lowercase{\BVOL}\ 9130 of {\Bem Lecture
  Notes in Computer Science}, \BPGS\ 138--149. Springer.

\bibitem[\protect\BCAY{Kilby, Slaney, Thi{\'e}baux,\ \BBA\ Walsh}{Kilby
  et~al.}{2005}]{KilbyEtal2005:backbones}
Kilby, P., Slaney, J., Thi{\'e}baux, S., \BBA\ Walsh, T. \BBOP2005\BBCP.
\newblock \BBOQ Backbones and backdoors in satisfiability\BBCQ\
\newblock In {\Bem Proceedings of the 20th National Conference on Artificial
  Intelligence, \emph{Vol.\,3}}, AAAI'05, \BPGS\ 1368--1373. AAAI Press.

\bibitem[\protect\BCAY{Kronegger, Pfandler,\ \BBA\ Pichler}{Kronegger
  et~al.}{2013}]{HaanEtal2013:IJCAI-FPT-planning}
Kronegger, M., Pfandler, A., \BBA\ Pichler, R. \BBOP2013\BBCP.
\newblock \BBOQ Parameterized complexity of optimal planning: A detailed
  map\BBCQ\
\newblock In {\Bem Proceedings of the 23d International Joint Conference on
  Artificial Intelligence}, IJCAI'13, \BPGS\ 954--961.

\bibitem[\protect\BCAY{Le~Berre\ \BBA\ Parrain}{Le~Berre\ \BBA\
  Parrain}{2010}]{BePa10}
Le~Berre, D.\BBACOMMA\  \BBA\ Parrain, A. \BBOP2010\BBCP.
\newblock \BBOQ The {SAT4J} library, release 2.2, system description\BBCQ\
\newblock {\Bem Journal on Satisfiability, Boolean Modeling and Computation},
  {\Bem 7}, 59--64.

\bibitem[\protect\BCAY{Lov\'asz}{Lov\'asz}{1979}]{Lovasz1979:shannon}
Lov\'asz, L. \BBOP1979\BBCP.
\newblock \BBOQ On the {S}hannon capacity of a graph\BBCQ\
\newblock {\Bem IEEE Transactions on Information Theory}, {\Bem 25\/}(1), 1--7.

\bibitem[\protect\BCAY{M{\'e}zard, Mora,\ \BBA\ Zecchina}{M{\'e}zard
  et~al.}{2005}]{mezard2005clustering}
M{\'e}zard, M., Mora, T., \BBA\ Zecchina, R. \BBOP2005\BBCP.
\newblock \BBOQ Clustering of solutions in the random satisfiability
  problem\BBCQ\
\newblock {\Bem Physical Review Letters}, {\Bem 94\/}(19), 197205:1--197205:4.

\bibitem[\protect\BCAY{Mitchell, Selman,\ \BBA\ Levesque}{Mitchell
  et~al.}{1992}]{mitchell92hard}
Mitchell, D., Selman, B., \BBA\ Levesque, H. \BBOP1992\BBCP.
\newblock \BBOQ Hard and easy distributions of {SAT} problems\BBCQ\
\newblock In Rosenbloom, P.\BBACOMMA\  \BBA\ Szolovits, P.\BEDS, {\Bem
  Proceedings of the 10th National Conference on Artificial Intelligence},
  AAAI'92, \BPGS\ 459--465, Menlo Park, California. AAAI Press.

\bibitem[\protect\BCAY{Ordyniak\ \BBA\ Szeider}{Ordyniak\ \BBA\
  Szeider}{2013}]{OrdyniakSzeider2013:JAIR:FPT-Bayesian}
Ordyniak, S.\BBACOMMA\  \BBA\ Szeider, S. \BBOP2013\BBCP.
\newblock \BBOQ Parameterized complexity results for exact bayesian network
  structure learning\BBCQ\
\newblock {\Bem Journal of Artificial Intelligence Research}, {\Bem 46},
  263--302.

\bibitem[\protect\BCAY{Parkes}{Parkes}{1997}]{parkes1997clustering}
Parkes, A.~J. \BBOP1997\BBCP.
\newblock \BBOQ Clustering at the phase transition\BBCQ\
\newblock In {\Bem Proceedings of the 14th National Conference on Artificial
  Intelligence}, AAAI'97, \BPGS\ 340--345. AAAI Press.

\bibitem[\protect\BCAY{Razborov}{Razborov}{2002}]{Razborov2002:Complexity-PHP}
Razborov, A.~A. \BBOP2002\BBCP.
\newblock \BBOQ Proof complexity of pigeonhole principles\BBCQ\
\newblock In Kuich, W., Rozenberg, G., \BBA\ Salomaa, A.\BEDS, {\Bem
  Developments in Language Theory, \emph{DLT 2001}}, \lowercase{\BVOL}\ 2295 of
  {\Bem Lecture Notes in Computer Science}, \BPGS\ 100--116, Berlin,
  Heidelberg. Springer.

\bibitem[\protect\BCAY{Roy, Sural, Majumdar, Vaidya,\ \BBA\ Atluri}{Roy
  et~al.}{2015}]{Roy2015}
Roy, A., Sural, S., Majumdar, A.~K., Vaidya, J., \BBA\ Atluri, V.
  \BBOP2015\BBCP.
\newblock \BBOQ Minimizing organizational user requirement while meeting
  security constraints\BBCQ\
\newblock {\Bem ACM Transactions on Management Information Systems}, {\Bem
  6\/}(3), 12:1--12:25.

\bibitem[\protect\BCAY{Selman\ \BBA\ Kirkpatrick}{Selman\ \BBA\
  Kirkpatrick}{1996}]{SelmanKirkpatrick1996:AIJ-critical-behavior}
Selman, B.\BBACOMMA\  \BBA\ Kirkpatrick, S. \BBOP1996\BBCP.
\newblock \BBOQ Critical behavior in the computational cost of satisfiability
  testing\BBCQ\
\newblock {\Bem Artificial Intelligence}, {\Bem 81\/}(1-2), 273--295.

\bibitem[\protect\BCAY{Stuckey}{Stuckey}{2018}]{MiniZinc}
Stuckey, P.~J. \BBOP2018\BBCP.
\newblock \BBOQ Mini{Z}inc {C}hallenge 2018\BBCQ.
\newblock
  \url{https://www.minizinc.org/challenge2018/challenge_results2018.pdf},
  retrieved 20 July 2019.

\bibitem[\protect\BCAY{Wang\ \BBA\ Li}{Wang\ \BBA\ Li}{2010}]{WaLi10}
Wang, Q.\BBACOMMA\  \BBA\ Li, N. \BBOP2010\BBCP.
\newblock \BBOQ Satisfiability and resiliency in workflow authorization
  systems\BBCQ\
\newblock {\Bem ACM Transactions on Information and System Security}, {\Bem
  13\/}(4), 40:1--40:35.

\bibitem[\protect\BCAY{West}{West}{2001}]{West}
West, D.~B. \BBOP2001\BBCP.
\newblock {\Bem Introduction to Graph Theory\/} (2nd \BEd).
\newblock Prentice-Hall.

\end{thebibliography}
\bibliographystyle{theapa}

\end{document}